\documentclass[a4paper,fleqn]{cas-sc}

\usepackage[authoryear,longnamesfirst]{natbib}

\usepackage{amsmath}
\usepackage{amsfonts}
\usepackage{multirow}
\usepackage{graphicx}
\usepackage{tikz}
\usepackage{natbib}
\usepackage{amsthm}
\usepackage{amssymb}
\usepackage{comment}
\usepackage{url}
\usepackage{hyperref}
\usepackage{enumerate}
\usepackage{cases}
\usepackage[ruled,vlined,linesnumbered]{algorithm2e}
\usepackage{makecell}

\usepackage{xcolor, colortbl}
\newcommand{\struc}[1]{{\color{blue} #1}}

\newcommand{\Vector}[1]{\ensuremath{\boldsymbol{#1}}}
\newcommand{\Q}{\mathbb{Q}}
\newcommand{\bP}{\mathbb{P}}
\newcommand{\R}{\mathbb{R}}
\newcommand{\C}{\mathbb{C}}
\newcommand{\Z}{\mathbb{Z}}
\newcommand{\LX}{{{\mathcal V}(\Vector{f})}}

\newcommand{\pX}{\prec_{\Vector{x}}}
\newcommand{\ta}{\tau_{\Vector{a}}}

\def \coeff {{\rm coeff}}

\def \lc {{\rm lc}}
\def \cM { {\mathcal M} }
\def \vf {\Vector{f}}
\def \vG {\Vector{G}}

\def \vU {\Vector{u}}
\def \vV {\Vector{v}}
\def \vX {\Vector{x}}
\def \vY {\Vector{y}}
\def \vB {\Vector{b}}
\def \vA {\Vector{a}}
\def \mV {\mathcal{V}}
\def \vp {\Vector{p}}
\def \mI {\mathcal{I}}
\def \mH {\mathcal{H}}
\def \mK {\mathcal{K}}

\def \E {\eta}
\newtheorem{theorem}{Theorem}[section]
\newtheorem{lemma}[theorem]{Lemma}

\newtheorem{definition}[theorem]{Definition}
\newtheorem{example}[theorem]{Example}

\newtheorem{corollary}[theorem]{Corollary}

\newtheorem{remark}[theorem]{Remark}
\newtheorem{model}[theorem]{Model}
\newtheorem{assumption}[theorem]{Assumption}

\numberwithin{equation}{section}

\def\tsc#1{\csdef{#1}{\textsc{\lowercase{#1}}\xspace}}
\tsc{WGM}
\tsc{QE}
\tsc{EP}
\tsc{PMS}
\tsc{BEC}
\tsc{DE}

\ExplSyntaxOn
\cs_gset:Npn \__first_footerline:
{
  \group_begin:
  \small
  \sffamily
  \ifnum\theblind>0\relax
  \else
    \__short_authors:
  \fi
  \group_end:
}
\ExplSyntaxOff

\begin{document}
\let\WriteBookmarks\relax
\def\floatpagepagefraction{1}
\def\textpagefraction{.001}
\shorttitle{Detecting Nonproperness of Likelihood Equations}
\shortauthors{X. Tang et~al.}

\title [mode = title]{Detecting Nonproperness of Likelihood Equations}



\author[1]{Xiaoxian Tang}
\cormark[1]
\ead{xiaoxian@buaa.edu.cn}
\ead[url]{xiaoxian-tang.github.io/website/}


\affiliation[1]{organization={School of Mathematical Sciences, Beihang University},
                addressline={Xueyuan Road},
                postcode={100191},
                city={Beijing},
                country={China}}

\author[2]{Bican Xia}

\ead{xbc@math.pku.edu.cn}


\affiliation[2]{organization={School of Mathematical Sciences, Peking University},
                addressline={Yiheyuan Road},
                postcode={100871},
                city={Beijing},
                country={China}}

\author[3]{Tianqi Zhao}
\cormark[1]
\ead{zhaotq@zgclab.edu.cn}

\affiliation[3]{organization={Zhongguancun Laboratory},
                addressline={Zhongguancun East Road
},
                postcode={100084},
                city={Beijing},
                country={China}}

\cortext[cor1]{Corresponding author}


\begin{abstract}
Given an algebraic statistical model, a challenging problem is classifying the data according to the number of positive critical points of the likelihood function. The positive critical points are the positive solutions to an algebraic system, say likelihood equations. So, identifying the number of positive critical points is a real root classification problem for the likelihood equations.
A discriminant variety of a likelihood-equation system geometrically describes the data for which  the number of real solutions becomes
unusual. As an essential component of the discriminant variety,
the nonproperness set collects the data such that the likelihood-equation system has a solution at infinity. So, the number of real solutions varies when the data passes the nonproperness set, and identifying the nonproperness set
 plays a crucial role in the real root classification.
In this work, we develop a novel method for computing nonproperness sets of
likelihood-equation systems. We prove the correctness of this method. We show experimentally that it is far more efficient than the known methods in the literature.
\end{abstract}



\begin{keywords}
Real root classification \sep Nonproperness \sep General zero-dimensional system \sep Likelihood equation
\end{keywords}

\maketitle

\section{Introduction}\label{sec:intro}
{\bf Background} This work is motivated by the real root classification problem of the following likelihood equations
\begin{align}\label{eq:lle}
\begin{array}{rl}
f_{1}(\Vector{u}, \Vector{p}, \Vector{\lambda})&:=\;p_1\cdot \left(\lambda_{1}+\frac{\partial g_1}{\partial p_1}\lambda_2+\cdots+\frac{\partial g_s}{\partial p_1}\lambda_{s+1}\right)-u_1,\\
&\;\;\vdots\\
f_{n}(\Vector{u},\Vector{p}, \Vector{\lambda})&:=\; p_n\cdot \left(\lambda_{1}+\frac{\partial g_1}{\partial p_n}\lambda_2+\cdots+\frac{\partial g_s}{\partial
p_n}\lambda_{s+1}\right)-u_n,\\
f_{n+1}(\Vector{u},\Vector{p},\Vector{\lambda})&:=\; g_1(p_1, \ldots,p_n),\\
&\;\;\vdots\\
f_{n+s}(\Vector{u},\Vector{p},\Vector{\lambda}) &:=\; g_s(p_1,\ldots,p_n),\\
f_{n+s+1}(\Vector{u},\Vector{p},\Vector{\lambda})&:=\; 
p_1+\cdots+p_n-1,
\end{array}
\end{align}where the above  parametric polynomial  system $\Vector{f}:=(f_1, \ldots, f_{n+s+1})$ contains probability variables $\Vector{p}:=(p_1, \ldots, p_n)$, Lagrange multipliers $\Vector{\lambda}:=(\lambda_1, \ldots, \lambda_{s+1})$, and parameters $\Vector{u}:=(u_1, \ldots, u_n)$ representing the data obtained from statistical experiments. It is well known that all critical points of the maximum likelihood estimation (MLE) problem are real solutions to the likelihood equations \eqref{eq:lle} \citep{ABBGHHNRS2017, AGKMS2024, BHR2007, CHKS2006, GDP2012, EJ2014, HRS, SAB2005}. Classifying the parameters (i.e., the data) according to the number of positive solutions of likelihood equations (i.e., the positive critical points of the MLE problem) is an important and a challenging problem since  it is a specific real quantifier elimination problem \citep{BPR1996, BPRRoadmap, BPRBook,  collins1975, ch1991,  SS2003, tarski1951}.  In principle, the real root classification problem can be carried out automatically using some known software systems,  e.g., \citep{brown2003, CDMMX2010, DS1997, DV2005, BP2001}. However, the expense of these tools is way beyond current computing capabilities since the likelihood-equation systems usually contain a lot of variables and parameters even for small statistical models \citep{BPR1996,grig88,renegar1992-1,renegar1992-2,renegar1992-3}.

One key step of real root classification is to compute the discriminant variety, which consists of two  core components: the discriminant locus and the nonproperness set, denoted by $\LX_{J}$ and $\LX_{\infty}$, respectively (see the precise definitions later in Definition \ref{def:nddv}). Both components are algebraically closed. The number of real solutions only changes when the parameters  pass the discriminant variety $\LX_{J}\cup \LX_{\infty}$.
It is well known that when the parameters pass the discriminant locus, at least two simple real solutions might merge into a multiple one.   When the parameters pass the nonproperness set, some real solutions might go to infinity \citep{SS2004}.  So, the number of real solutions might be ``unusual'' when the parameters are exactly located in the discriminant variety, see the following example.

\begin{example}\label{ex:symmetric}
Consider the algebraic model given by a $3\times 3$ symmetric matrix, see the statistical story in  \cite[Section 5.2]{Tang2017}.
\begin{equation*}
    S\;=\;\begin{bmatrix}
    2p_{2} & p_{1} & p_{3}\\
    p_{1}  & 2p_{4} & p_{5}\\
    p_{3}  & p_{5} & 2p_{6}
    \end{bmatrix}.
\end{equation*}
Suppose the vectors of the probabilities from the simplex
$\triangle_5=\{(p_{1},\ldots,p_{6})\in\R^{6}_{>0}\mid p_{1}+p_{2}+p_{3}+p_{4}+p_{5}+p_{6}=1\}$ satisfies the constraint
$\det S=0$.
For any given data-vector $(u_{1},u_{2},u_{3},u_{4},u_{5},u_{6})\in\R^{6}_{\geq 0}$, where $u_{i}$ denotes the data corresponding to  $p_{i}$,
the goal is to maximize  the
likelihood function
${\prod\limits_{1\le i\le6} p_{i}^{u_{i}}}$.
In order to find all real critical points to the likelihood function, we form the following system of likelihood equations by applying the Lagrange multiplier method:
\begin{align}\label{eq:ex1}
f_1&=\;p_{1}(\lambda_1 + (2p_{3}p_{5}-4p_{1}p_{6})\lambda_2) -u_{1} =0\notag\\
f_2&=\;p_{2}(\lambda_1+(8p_{4}p_{6}-2p_{5}^2)\lambda_2)-u_{2}=0 \notag\\
f_3&=\;p_{3}(\lambda_1 + (2p_{1}p_{5}-4p_{3}p_{4})\lambda_2) -u_{3} = 0 \notag \\
f_4&=\;p_{4}(\lambda_1 + (8p_{2}p_{6}-2p^2_{3})\lambda_2)-u_{4} =0 \notag\\
f_5&=\;p_{5}(\lambda_1 + (2p_{1}p_{3}-4p_{2}p_{5})\lambda_2)-u_{5} = 0 \notag \\
f_6&=\;p_{6}(\lambda_1 + (8p_{2}p_{4}-2p^2_{1})\lambda_2)-u_{6} = 0\notag\\
f_7&=\;\det S=0 \notag\\
f_8&=\;p_{1}+p_{2}+p_{3}+p_{4}+p_{5}+p_{6}-1=0
\end{align}
where $\lambda_1$ and $\lambda_2$ are two Lagrange multipliers.
In the above algebraic system, we view $p_{i},\lambda_1$ and $\lambda_2$ as unknowns, and we view $u_{i}$ as parameters. That means we have a square algebraic system
with 8 equations in 8 unknowns and 6 parameters.
Notice that
The \struc{\textit{maximum-likelihood-degree}} (\struc{\textit{ML-degree}}) of an algebraic statistical model is defined as the number of complex solutions to its likelihood equations for a generic choice of data. Moreover, it typically serves as a rough indicator of the computational complexity of numerical solvers based on homotopy continuation for a given model. Here, for this example, the ML-degree is 6 \citep{SAB2005}.
By \citep{Tang2017},
the nonproperness set $\LX_{\infty}$ is the algebraic variety generated by the  single polynomial
\begin{align}\label{eq:symmetric}
    w \;=\;&(u_1^2u_6-u_1u_3u_5-4u_2u_4u_6+u_2u_5^2+u_3^2u_4)\notag\\
    &\cdot(u_1+u_2+u_4)\cdot(u_2+u_3+u_6)\notag\\
      &\cdot(u_4+u_5+u_6)\cdot(u_1+u_2+u_3+u_4+u_5+u_6)\notag\\
      &\cdot(u_1+2u_2+u_3)\cdot(u_1+2u_4+u_5)\cdot(u_3+u_5+2u_6),
\end{align}
and the discriminant locus $\LX_{J}$ is the algebraic variety generated by a single polynomial $DX_J$ with $1307$ terms and total degree $12$, where
{\scriptsize
\begin{align}\label{eq:ex1dj}
DX_J\;=\;
&-64u_{2}^5u_{4}^3u_{5}^4-384u_{2}^5u_{4}^3u_{5}^3u_{6}-768u_{2}^5u_{4}^3u_{5}^2
u_{6}^2-512u_{2}^5u_{4}^3u_{5}u_{6}^3-96u_{2}^5u_{4}^2u_{5}^5-576u_{2}^5u_{4}^2u_{5}^4u_{6}\notag\\
&-1152u_{2}^5u_{4}^2u_{5}^3u_{6}^2-768u_{2}^5u_{4}^2u_{5}^2u_{6}^3-48u_{2}^5u_{4}u_{5}^6-
288u_{2}^5u_{4}u_{5}^5u_{6}-576u_{2}^5u_{4}u_{5}^4u_{6}^2-384u_{2}^5u_{4}u_{5}^3u_{6}^3\notag\\
&-8u_{2}^5u_{5}^7-48u_{2}^5u_{5}^6u_{6}-96u_{2}^5u_{5}^5u_{6}^2-64u_{2}^5u_{5}^4u_{6}^3+48
u_{2}^4u_{1}^2u_{4}^2u_{5}^4+288u_{2}^4u_{1}^2u_{4}^2u_{5}^3u_{6}\notag\\
&+\cdots+48u_{3}^5u_{4}^4u_{5}^2u_{6}-6u_{3}^5u_{4}^3u_{5}^4-64u_{3}^4u_{4}^5u_{6}^3+48u_{3}^4u_{4}^4u_{5}^2u_{6}^2-12u_{3}^4u_{4}^3u_{5}^4u_{6}+u_{3}^4u_{4}^2u_{5}^6.
\end{align}}For the algebraic system \eqref{eq:ex1},
the number of complex solutions for the generic parameters is $6$, and
the number of real solutions for the parameters from the complement of $\LX_{J}\cup \LX_{\infty}$ (an open region) can be $2$ or $6$. However,
the number of real solutions can be $1$ or infinite for the parameters located in the nonproperness set $\LX_{\infty}$.
\end{example}
{\bf Challenge} The standard methods for computing $\LX_{J}$ and $\LX_{\infty}$ were based on computing Gr\"obner bases \citep{DV2005}. In \citep{RT2015,Tang2017}, the authors proposed  a novel approach  for efficiently computing the generator of the discriminant variety for a likelihood-equation system by a specialization/interpolation method.
Especially, in \citep{TWZ2019}, the  interpolation method was further improved for computing the generator of the discriminant locus $\LX_{J}$.
 However, how to efficiently compute the nonproperness set $\LX_{\infty}$ is rarely studied.
 The experiments show that either the Gr\"onber-bases methods or the interpolation methods are not applicable even for middle-sized   algebraic models, see the columns ``Standard", ``Interpolation I" and ``Interpolation II" in Table \ref{table:literatureOld}. 
 In the standard method \citep{DV2005},
 no matter  for computing $\LX_{J}$ or computing $\LX_{\infty}$,
 the first step is always to compute
 Gr\"onber bases. The difference is that the monomial ordering taken for computing $\LX_{J}$ is an elimination order such as the lexicographic order and that for comupting  $\LX_{\infty}$ is a certain admissible block order, where the variables have a degree reverse lexicographic  ordering, see \cite[Algorithm PROPERNESSDEFECTS]{DV2005}. So, in principle, for small problems,
 computing $\LX_{\infty}$ is faster than computing
 $\LX_{J}$, and
   when the size of the problem is large, the standard method can not complete the computation for both.
 On the other hand, the interpolation methods \citep{RT2015,Tang2017,TWZ2019} are based on a linear-lifting technique, and so, their efficiency mainly depends on  the degree of the objective polynomial.    As a result, the computational timings consumed by  interpolating
    $\LX_{\infty}$ and $\LX_{J}$
 are also usually similar if the generator polynomials of $\LX_{\infty}$ and $\LX_{J}$  have similar degrees. For instance, in Example \ref{ex:symmetric},
    the degrees of  \eqref{eq:symmetric} and \eqref{eq:ex1dj} are $10$ and $12$, which are similar. So, the interpolation methods also  take long time to compute
  \eqref{eq:symmetric}, even though the polynomial \eqref{eq:symmetric} looks much simpler (e.g., it is factorable) than  \eqref{eq:ex1dj}.

{\bf Problem Statement} The goal of this paper is to efficiently compute  the nonproperness set $\LX_{\infty}$ for a given system of likelihood equations. In our setting, the likelihood-equation system is a general zero-dimensional system, and it is a standard assumption  that the nonproperness set has codimension one \citep{DV2005}. More formally, we have the following problem statement:
\\
Input: Likelihood equations $$f_1, \ldots, f_{n+s+1}\in {\mathbb Q}[u_1,\ldots, u_n, p_1, \ldots, p_{n}, \lambda_1, \ldots, \lambda_{s+1}].$$
Output: A generator of the nonproperness set, which is a polynomial in ${\mathbb Q}[u_1,\ldots, u_n]$.

{\bf Contribution} We list our main contributions as follows.
\begin{itemize}
\item[(a)] We prove that for a generic  specialization, the ideal generated by the nonproperness set of a specialized likelihood system is equal to
the specialization of the ideal generated by the nonproperness set of the likelihood system; see Theorem \ref{thm:main}.
    \item[(b)] We propose a probabilistic algorithm for computing nonproperness sets of Lagrange likelihood equations; see Algorithm \ref{alg:new}.
    We implemented the algorithm with {\tt Maple2023}.
Our experiments show that Algorithm \ref{alg:new} is significantly more efficient than 
the standard approach \citep{DV2005} 
(Algorithm \ref{alg:standard}) and the interpolation methods proposed in \citep{Tang2017}.
For instance, the largest model we can currently compute has ML-degree $14$ (i.e., Model \ref{ex:l8} in Appendix \ref{sec:appendix}), which can not be tackled by the known symbolic methods.
While the standard method gets out of memory for computing this model,
the estimated timing of the interpolation method proposed in \citep{Tang2017} is $7347$ days.
See more comparisons  on the computational timings for different methods in Table \ref{table:literatureOld}.
\end{itemize}
\begin{table}[b]
\small
\centering
\scalebox{0.85}{
\begin{tabular}{|c|c||c|c|c|c||c|c|c|} \hline

\multirow{2}{*}{Models} &
\multirow{2}{*}{MLD} &


\multicolumn{4}{c||}{Timings} &
\multicolumn{3}{c|}{Samplings} 
\\\cline{3-9}


 & &{\footnotesize Standard}  &{\footnotesize Interpolation I}& {\footnotesize Interpolation II}  &\cellcolor{red!25} {\footnotesize New Method} & {\footnotesize Interpolation I} & {\footnotesize Interpolation II}&\cellcolor{red!25} {\footnotesize New Method}\\ \hline

\ref{ex:l1} &2 &1.813 s  &OOT    &81.25 s &\cellcolor{red!25} {\bf 0.130} s  &381  &625   &\cellcolor{red!25}{\bf 1}\\
\ref{ex:l2} &3 &0.465 s  &1.229 s   &3.240 s &\cellcolor{red!25}{\bf 0.205} s &16   &20     &\cellcolor{red!25}{\bf 3} \\
\ref{ex:l3} &4 &17.992 s  &OOT    &422.576 s &\cellcolor{red!25} {\bf 0.092} s
&1,050  &2,401   &\cellcolor{red!25}{\bf 1}\\ 
\ref{ex:l4} &6 &OOM       &OOT  &1.309 h & \cellcolor{red!25}{\bf 67.603} s  &721  &1,521   &\cellcolor{red!25}{\bf 19} \\ 

\ref{ex:l5} &9 &OOM      &OOT     & OOT  &\cellcolor{red!25} {\bf 1.776} h
&25,536  &518,400 
& \cellcolor{red!25}{\bf 209}\\ 

\ref{ex:l6} &10&OOM     &OOT   &OOT 
&\cellcolor{red!25} {\bf 890.656} s
&502,440 &4,782,969  &\cellcolor{red!25}{\bf 57} \\ 

\ref{ex:l7} &12&OOM        &OOT  &OOT 
 &\cellcolor{red!25} {\bf 3.755} h &684&1,300&\cellcolor{red!25} {\bf 77} \\ 

\ref{ex:l8} &14&OOM    &OOT    &OOT 
 &\cellcolor{red!25}{\bf 6.488} h
&142,065 &1,000,000 &\cellcolor{red!25} {\bf 22}\\ 

\ref{ex:l9} &23&OOM     &OOT  &OOT 
 &\cellcolor{red!25} OOT &2,128 &4,046 &\cellcolor{red!25} {\bf 593} 
\\ \hline
\end{tabular}}
\caption{Timings for computing nonproperness sets\\
 {\it Note: We compare the standard method: Algorithm \ref{alg:standard}
\citep{DV2005} via a regular Gr\"obner basis computation \citep{FGb}, the interpolation method with two strategies \citep{Tang2017} and the new  method (Algorithm \ref{alg:new}).}}
\label{table:literatureOld}
\begin{center}
\begin{itemize}
\item[{\bf Models:}] All the testing statistical models \ref{ex:l1}--\ref{ex:l9} are listed in Appendix \ref{sec:appendix}.
\item[{\bf MLD:}] The models are ordered by their ML-degrees.
\item[{\bf Timings:}]  ``Timings'' records the computation timings for each method.\\
(s: seconds; h: hours; d: days;
OOM: out of memory; OOT: $>3$ d)
\item[{\bf Samplings:}]``Samplings'' records the times of calling the sampling step for the known interpolation methods and the new method. 
\end{itemize}
\end{center}
\end{table}

We clarify  the differences between the new  method for computing
$\LX_{\infty}$  proposed in the current work and the method for computing $\LX_{J}$ proposed in  \citep{Tang2017}.  As what has been mentioned, both $\LX_{J}$ and $\LX_{\infty}$ can be computed by computing
Gr\"obner Bases. Since in practice the likelihood equations are too challenging to tackle by the standard tools for computing  Gr\"obner Bases, an  interpolation (specialization/lifting) method is proposed for computing $\LX_{J}$  in \citep{Tang2017}.
The main idea in this work is  to apply a similar specialization/lifting technique  for computing $\LX_{\infty}$.
But, we need to solve the following two problems.
\begin{itemize}
\item First,  before applying the interpolation method to $\LX_{\infty}$, one should prove that  $\LX_{\infty}$ has the similar specialization properties as $\LX_{J}$ does.
More specifically, in this work, we prove that  the nonproperness set of a  specialized system is equal to the specialized nonproperness set of the system (Theorem \ref{thm:main}) if the specialization is generic, which guarantees the correctness of the key steps in the interpolation method such as computing the degrees and the sampling steps.

\item Second, directly applying the known interpolation methods for computing $\LX_{\infty}$ is still inefficient (see Table \ref{table:literatureOld}).
 In this work, we observe that the generator of the nonproperness set usually has lots of factors, especially linear factors (e.g., see \eqref{eq:symmetric} in Example \ref{ex:symmetric}).  According to \citep{Tang2017}, the computational time of
the interpolation method is decided by the computational time of the sampling step and the times of calling the sampling step in the whole process, where by ``sampling" we mean to compute a Gr\"obner basis after specializing   the parameters in the likelihood equations, and how many times we do the samplings depends on the size (degree) of the objective polynomial we want to interpolate.   So, a crucial idea to save the computational time is that instead of interpolating the whole generator of the nonproperness set, we choose to respectively interpolate its factors, which dramatically improves the efficiency of the interpolation method \citep{Tang2017}. From Table \ref{table:literatureOld},  we can see that  the times of calling the sampling step in   the new  method  are much fewer than that in the old interpolation methods.  For instance, the old method needs over  one million times of  samplings for many models, while it takes only dozens of times for the new method.
\end{itemize}

{\bf Structure of the Paper} The rest of this article is organized as follows.
In Section \ref{sec:thm}, we prove a specialization theorem for the nonproperness sets (Theorem \ref{thm:main}) and we also prove two corollaries (Corollary \ref{coro:Linear} and Corollary \ref{coro:Sample}), which guarantee the correctness of the specialization/interpolation method we propose.
In Section \ref{sec:algorithms}, we first review the standard method (Algorithm \ref{alg:standard}), which is designed for the general parametric algebraic systems, and then,  we present a new method (Algorithm \ref{alg:new} with a list of sub-algorithms) for computing the nonproperness sets of likelihood equations. We also prove the correctness of Algorithm \ref{alg:new} based on the theoretical results proved in the previous section.
In Section \ref{sec:implementation}, we explain the implementation details and compare the efficiency of Algorithm \ref{alg:new} with the known methods (Algorithm \ref{alg:standard} and the old interpolation methods proposed in \citep{Tang2017}).
We end this work with a discussion, see Section \ref{sec:summary}.

\section{Theorem}\label{sec:thm}
In Section \ref{sec:main}, we review the definition of nonproperness set for a general parametric polynomial system, and we present  a specialization property of
the nonproperness sets, see Theorem \ref{thm:main}.
We also present two corollaries (Corollary \ref{coro:Linear} and Corollary \ref{coro:Sample}), which will be used for proving the correctness of the algorithms developed later. The proofs of these results will be given in Sections \ref{subsec:pf-thm-main}--\ref{subsubsec:two-corollaries-pf2}.
For a general overview on the fundamental concepts of computational algebraic geometry, we refer the readers to \citep{CLO2015}.

\subsection{Specialization Properties for
Nonproperness Sets}\label{sec:main}
Let $\vU:=(u_1,\ldots,u_n)$ be a vector of parameters, and let $\vX:=(x_1,\ldots,x_m)$ be a vector of variables.
Define $\pi$ as the \struc{\textit{canonical projection}}:
 ${\mathbb C}^{n}\times {\mathbb
C}^{m}\rightarrow{\mathbb C}^{n}$ such that
for every $(\Vector{u}, \Vector{x})\in {\mathbb C}^{n}\times {\mathbb
C}^{m}$,
$\pi(\Vector{u}, \Vector{x})=\Vector{u}$.
For any  set of polynomials $\vf\subseteq\Q[\vU,\vX]$, we define the \struc{\textit{affine variety}} generated by $\vf$ as
\begin{align}
    \mV(\vf)\;:=\;\{(\vU, \vX)\in {\mathbb C}^{n}\times {\mathbb C}^{m}\mid\vf(\vU,\vX)=0\}.
\end{align}

\begin{definition}[{\bf Nonproperness Set}]\label{def:nddv}\citep{DV2005}
Given  $\vf\subseteq\Q[\vU, \vX]$,  the \struc{\textit{set of nonproperness}} of $\vf$, denoted by $\LX_{\infty}$, is defined  as the set of the $\Vector{u}\in \overline{\pi(\mV(\vf))}$ such that there does not exist a compact neighborhood $U$ of $\Vector{u}$ where
$\pi^{-1}(U)\cap\mV(\vf)$ is  compact.
\end{definition}

Next, we define homomorphisms for representing the specializations we will use in our algorithms.
Let $\vY:=(y_1,\ldots,y_t)$ be another vector of parameters.
Given $\vG:=\{G_1,\ldots, G_n\}\subset\Q[\vY]$,
we say $\varphi:{\mathbb Q}[\Vector{u}, \Vector{x}]\to{\mathbb Q}[\vY,\vX]$ is the \struc{\textit{homomorphism with respect to (w.r.t.) $\vG$}} if for every
$g\in {\mathbb Q}[\Vector{u}, \Vector{x}]$,
\begin{align}\label{eq:def-general-map}
    \varphi(g(\vU,\vX))\; =\; g(G_1(\vY),G_2(\vY),\ldots,G_n(\vY),\Vector{x}).
\end{align}

\begin{example}\label{ex:homomor}
Suppose $G_1=y_1,\;G_2=a_1,\;\ldots,\;G_n=a_{n-1}$, where $a_1,\ldots,a_{n-1}$ are constant numbers in $\Q$. Then, the homomorphism $\varphi$
w.r.t. $\vG$ is exactly the following specialization
\begin{align}\label{eq:def-general-map-eg}
    \varphi(g(\vU,\vX))\; =\; g(y_1,a_1,\ldots,a_{n-1},\Vector{x}).
\end{align}
\end{example}

For any $\vf\subseteq {\mathbb Q}[\Vector{u}, \vX]$, we denote by $\langle \vf \rangle$ the ideal generated by $\vf$ in ${\mathbb Q}[\Vector{u},\vX]$. For any $g\in {\mathbb Q}[\Vector{u}, \vX]$, we denote by $\deg(g, \vX)$ the
degree of
$q$ w.r.t. $\vX$.  For any $S\subseteq\C^n$,
define the ideal generated by $S$ as
\begin{align}
    \mI(S):=\;\{g\in\Q[\vU]\mid g(u^*_1,\ldots,u^*_n)=0\;{\rm for\;all\;}(u^*_1,\ldots,u^*_n)\in S\}.
\end{align}
For any ideal $I$, we denote by $\sqrt{I}$ the radical ideal of $I$.

\begin{theorem}\label{thm:main}
For any $\vf\subseteq\Q[\vU,\vX]$, and for any
$\vG\subset\Q[\vY]$,
suppose  $\varphi:{\mathbb Q}[\Vector{u}, \Vector{x}]\to{\mathbb Q}[\vY,\vX]$ is the homomorphism w.r.t. $\vG$. If
\begin{enumerate}
    \item [(1)] for any $g\in\langle\vf\rangle$, $\deg(g,\vX)=\deg(\varphi(g),\vX)$, and
    \item [(2)] $\varphi$ is onto,
\end{enumerate}
then
\begin{enumerate}
    \item [(i)] $\varphi(\mI(\mV(\vf)_{\infty}))$ is an ideal, and
    \item [(ii)] 
    $\sqrt{\varphi(\mI(\mV(\vf)_{\infty}))}\;=\;\mI(\mV(\varphi(\vf))_{\infty})$.
\end{enumerate}
\end{theorem}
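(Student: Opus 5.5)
The plan is to turn Definition~\ref{def:nddv} into a projective-elimination statement, and then show that this elimination commutes with $\varphi$. Hypotheses (1) and (2) are exactly what make the commutation work.

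\emph{Part (i).} Since $\varphi$ fixes every $x_j$ and sends $\Q[\vU]$ into $\Q[\vY]$, surjectivity of $\varphi$ forces $\varphi|_{\Q[\vU]}:\Q[\vU]\to\Q[\vY]$ to be onto. To see this, write each $y_i=\varphi(h_i)$ and set $\vX=0$ in $h_i$. Hence $\varphi|_{\Q[\vU]}$ is a surjective ring homomorphism. The image of an ideal under a surjective ring homomorphism is an ideal, which gives (i). Let $K:=\ker\varphi|_{\Q[\vU]}$. Then $\Q[\vU]/K\cong\Q[\vY]$. Geometrically, the map $\vY\mapsto\vG(\vY)$ identifies $\C^t$ with the closed set $W:=\mV(K)\subseteq\C^n$; the inverse is given by polynomials, because each $y_i$ is a polynomial in the $G_j$. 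Under this identification, $\varphi$ is simply restriction of functions to $W$.

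\emph{Projective description of $\mV(\cdot)_\infty$.} I will use the standard characterization: $\vU^*\in\mV(\vf)_\infty$ if and only if there is a sequence $(\vU_k,\vX_k)\in\mV(\vf)$ with $\vU_k\to\vU^*$ and $\|\vX_k\|\to\infty$. Let $x_0$ be a homogenizing variable and let $g^h$ denote the homogenization of $g$ with respect to $\vX$ only. Put $H:=\langle g^h: g\in\langle\vf\rangle\rangle$. Then $Z:=\mV(H)\subseteq\C^n\times\bP^m$ is the closure of $\mV(\vf)$ in $\C^n\times\bP^m$. The projection $\C^n\times\bP^m\to\C^n$ is closed because $\bP^m$ is compact, so
\[
\mV(\vf)_\infty\;=\;\pi\bigl(Z\cap\{x_0=0\}\bigr).
\]
Thus $\mI(\mV(\vf)_\infty)$ is the radical of the projective elimination ideal of $H+\langle x_0\rangle$, saturated with respect to $\langle x_1,\dots,x_m\rangle$. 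The same description holds for $\varphi(\vf)$, with $H':=\langle q^h: q\in\langle\varphi(\vf)\rangle\rangle$.

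\emph{Key step (the main obstacle): $\varphi(H)=H'$.} Surjectivity of $\varphi$ gives $\varphi(\langle\vf\rangle)=\langle\varphi(\vf)\rangle$, so every $q\in\langle\varphi(\vf)\rangle$ has the form $\varphi(g)$ with $g\in\langle\vf\rangle$. Hypothesis (1) says $\deg(\varphi(g),\vX)=\deg(g,\vX)$. Since $\varphi$ does not touch $\vX$, it follows that $\varphi(g^h)=\varphi(g)^h$. Without (1), $\varphi(g^h)$ would equal $x_0^e\varphi(g)^h$ with $e>0$, and spurious components at $x_0=0$ could appear. Extending $\varphi$ by $x_0\mapsto x_0$, this gives $\varphi(H)=H'$. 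Consequently $Z'=\mV(H')$ is $Z\cap(W\times\bP^m)$ under the identification of $W$ with $\C^t$.

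\emph{Concluding (ii).} Intersecting with $\{x_0=0\}$ and projecting gives
\[
\pi\bigl(Z\cap(W\times\bP^m)\cap\{x_0=0\}\bigr)=\pi\bigl(Z\cap\{x_0=0\}\bigr)\cap W .
\]
Hence $\mV(\varphi(\vf))_\infty$ corresponds to $\mV(\vf)_\infty\cap W$. By the Nullstellensatz over $\C$, which applies since all ideals are defined over $\Q$, we get
\[
\mI(\mV(\vf)_\infty\cap W)=\sqrt{\mI(\mV(\vf)_\infty)+K}.
\]
Applying the isomorphism $\Q[\vU]/K\cong\Q[\vY]$ induced by $\varphi$ turns the right-hand side into $\sqrt{\varphi(\mI(\mV(\vf)_\infty))}$, which is (ii).

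The delicate points are two. First, checking that the sequence/closure characterization matches Definition~\ref{def:nddv}, including the requirement $\vU\in\overline{\pi(\mV(\vf))}$, which is automatic since $Z$ is closed. Second, ensuring that saturation by the irrelevant ideal commutes with $\varphi$. For the second, I would argue set-theoretically, as above, rather than ideal-theoretically.
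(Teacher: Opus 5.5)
Your proposal is correct and follows essentially the paper's route. The shared steps are the projective description of $\mV(\vf)_\infty$ (the paper's Lemma~\ref{lemma:LR2005}), the identity $\varphi(g^h)=\varphi(g)^h$ under hypothesis (1) giving $\varphi(\langle\vf\rangle^h)=\langle\varphi(\vf)\rangle^h$ (the paper's Claim 1), the resulting pullback $\mV(\varphi(\vf))_\infty=\{\vY^*\mid (G_1(\vY^*),\ldots,G_n(\vY^*))\in\mV(\vf)_\infty\}$, and the Nullstellensatz. The only difference is at the end: the paper applies the Nullstellensatz directly in $\Q[\vY]$ to the ideal $\varphi(\mI(\mV(\vf)_\infty))$, whose zero set is exactly that pullback, so your detour through $K$, the embedding $\C^t\cong W$, and $\sqrt{\mI(\mV(\vf)_\infty)+K}$ is valid but not needed.
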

Theorem \ref{thm:main} will play a key role in the proof of the main algorithm (Algorithm \ref{alg:new}) presented in  Section \ref{sec:algorithms}.
For instance,
we can view a general linear transformation to the coordinates $u_1, \ldots, u_m$ as a homomorphism, and we will get a corollary by applying Theorem \ref{thm:main} to this homomorphism, see Corollary \ref{coro:Linear}.
Similarly, if $a_1,\ldots,a_{n-1}$ in Example \ref{ex:homomor} are generic numbers,  then Theorem \ref{thm:main} holds for the homomorphism
$\varphi$ defined in \eqref{eq:def-general-map-eg} (see Corollary \ref{coro:Sample}).
We present the precise statements  of these corollaries in as follows.


Define $\vV:=(v_1,\ldots,v_n)$.
For any vector $\vA:=(a_1,\ldots,a_{n-1})\in\Q^{n-1}$,
we define a homomorphism $\tau_{\vA}:\;{\mathbb Q}[\Vector{u},\vX]\to{\mathbb Q}[\vV,\vX]$ such that
\begin{align}\label{eq:def-linear-map}
    \tau_{\vA}(g(\vU,\vX))\;=\;g(v_1,v_2+a_1v_1,\ldots,v_n+a_{n-1}v_1,\vX)\; \text{for any}\; g\in \Q[\vU,\vX].
\end{align}

\begin{corollary}\label{coro:Linear}
Let $\vf\subseteq\Q[\vU,\vX]$.
If $\mI(\mV(\vf)_{\infty})=\langle w\rangle$, where $w\in\Q[\vU]$,
then for any $\vA\in\Q^{n-1}$, we have
\begin{align*}
    \mI(\mV(\tau_{\vA}(\vf))_{\infty})\;=\;\langle\tau_{\vA}(w)\rangle.
\end{align*}
\end{corollary}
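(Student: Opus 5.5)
The plan is to apply Theorem \ref{thm:main} with $\vY:=\vV$ (so $t=n$), $\vG:=\{v_1,\,v_2+a_1v_1,\,\ldots,\,v_n+a_{n-1}v_1\}$, and $\varphi:=\tau_{\vA}$. First we check the two hypotheses of the theorem. For (2), $\tau_{\vA}$ is a $\Q$-algebra automorphism from $\Q[\vU,\vX]$ to $\Q[\vV,\vX]$ once variables are identified, so it is onto. An explicit inverse is $v_1\mapsto u_1$, $v_i\mapsto u_i-a_{i-1}u_1$ for $i\geq 2$, and $\vX\mapsto\vX$. For (1), note that $\tau_{\vA}$ fixes every $x_j$ and sends each $u_i$ to a linear form in $\vV$. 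Write $g=\sum_{\alpha} c_\alpha(\vU)\vX^\alpha$ with coefficients $c_\alpha\in\Q[\vU]$. Then $\tau_{\vA}(g)=\sum_\alpha \tau_{\vA}(c_\alpha)\vX^\alpha$. Because $\tau_{\vA}$ restricted to $\Q[\vU]$ is injective, $c_\alpha\neq 0$ if and only if $\tau_{\vA}(c_\alpha)\neq 0$. Hence $\deg(g,\vX)=\deg(\tau_{\vA}(g),\vX)$ for every $g$, and in particular for every $g\in\langle\vf\rangle$.

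Theorem \ref{thm:main}(i),(ii) then give $\mI(\mV(\tau_{\vA}(\vf))_{\infty})=\sqrt{\tau_{\vA}(\mI(\mV(\vf)_{\infty}))}=\sqrt{\tau_{\vA}(\langle w\rangle)}$. Since $\tau_{\vA}$ is a surjective ring homomorphism, the image of the ideal $\langle w\rangle$ is the ideal $\langle\tau_{\vA}(w)\rangle$ of $\Q[\vV]$. Here we regard $w$ as generating an ideal of $\Q[\vU]$, and the restriction $\Q[\vU]\to\Q[\vV]$ is an isomorphism.

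The remaining step, which is the only one needing care, is removing the radical. The ideal $\langle w\rangle=\mI(\mV(\vf)_\infty)$ is a vanishing ideal, so it is radical. A ring isomorphism maps radical ideals to radical ideals: if $h^k\in\tau_{\vA}(\langle w\rangle)$, apply $\tau_{\vA}^{-1}$ to get $(\tau_{\vA}^{-1}h)^k\in\langle w\rangle$. Then $\tau_{\vA}^{-1}h\in\langle w\rangle$, so $h\in\langle\tau_{\vA}(w)\rangle$. Thus $\sqrt{\langle\tau_{\vA}(w)\rangle}=\langle\tau_{\vA}(w)\rangle$, which yields the claim.

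A minor point to keep in mind is that the paper's $\mI$ is defined for subsets of $\C^n$ with the variables $\vU$. When it is applied to the specialized system, it should be read with $\vV$ in place of $\vU$. This is consistent with how Theorem \ref{thm:main} is stated, so no extra argument is needed.
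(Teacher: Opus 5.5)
Your proof is correct, and its skeleton matches the paper's. You apply Theorem~\ref{thm:main} to $\tau_{\vA}$. You verify hypothesis (1) by injectivity on the $\vX$-coefficients, which is the paper's Lemma~\ref{lemma:linear-deg-equal}, and hypothesis (2) by the explicit inverse, which is Lemma~\ref{lemma:linear-bijection}. Then you identify $\tau_{\vA}(\langle w\rangle)$ with $\langle\tau_{\vA}(w)\rangle$.

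The only real divergence is how you remove the radical. The paper goes through factorization:
\begin{itemize}
\item it proves that $\tau_{\vA}$ preserves irreducibility (Lemma~\ref{lemma:linear-irreducible}, via the bound $\deg(\tau_{\vA}(q),\vV)\le\deg(q,\vU)$ and bijectivity);
\item it writes the square-free $w$ as a product of distinct irreducibles and deduces that $\tau_{\vA}(w)$ is square-free;
\item it then invokes the square-free criterion for principal ideals (Lemma~\ref{lemma:linear-radical}).
\end{itemize}
You instead transport radicality directly along the ring isomorphism $\tau_{\vA}|_{\Q[\vU]}:\Q[\vU]\to\Q[\vV]$. This uses only that a vanishing ideal is radical.

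Your route is shorter and needs no unique-factorization or irreducibility argument. The same reasoning also gives $\mI(\mV(\tau_{\vA}(\vf))_{\infty})=\tau_{\vA}(\mI(\mV(\vf)_{\infty}))$ without assuming the ideal is principal.

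The paper's detour is not wasted, though. Its irreducibility-preservation lemma is reused in Lemma~\ref{lemma:degree-correct-pre} to ensure that $\tau_{\vA}$ sends irreducible factors of $w$ to irreducible factors of $\tau_{\vA}(w)$. The factor-by-factor interpolation in Algorithm~\ref{alg:new} relies on exactly that.
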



For any point
$\Vector{b}:=(b_1,\ldots,b_{n-1})\in {\mathbb Q}^{n-1}$,
we define the homomorphism $\sigma_{\vB}:\Q[\vU,\vX]\rightarrow\Q[u_1,\vX]$ such that
\begin{align}\label{eq:def-sample-map}
    \sigma_{\vB}(g(\vU,\vX))\;=\;g(u_1,b_1,\ldots,b_{n-1},\vX)\; \text{for any}\; g\in \Q[\vU,\vX].
\end{align}

\begin{corollary}\label{coro:Sample}
Let $\vf\subseteq\Q[\vU,\vX]$.
If $\mI(\mV(\vf)_{\infty})=\langle w\rangle$, where $w\in\Q[\vU]$,
then
there exists a Zariski dense subset $\Theta\subseteq\Q^{n-1}$ and an affine variety $V\subset\C^{n-1}$ such that for any $\vB\in\Theta\setminus V$,
\begin{align*}
    \mI(\mV(\sigma_{\vB}(\vf))_{\infty})\;=\;\langle\sigma_{\vB}(w)\rangle.
\end{align*}
\end{corollary}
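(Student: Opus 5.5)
The plan is to derive Corollary \ref{coro:Sample} from Theorem \ref{thm:main}, applied to the homomorphism $\varphi=\sigma_{\vB}$, i.e.\ the homomorphism w.r.t.\ $\vG=\{u_1,b_1,\ldots,b_{n-1}\}\subset\Q[u_1]$ (here $\vY=(u_1)$). Hypothesis (2) of Theorem \ref{thm:main} is immediate for every $\vB$: $\sigma_{\vB}$ fixes $u_1$ and $\vX$, so every polynomial of $\Q[u_1,\vX]$ is its own image. Granting hypothesis (1), Theorem \ref{thm:main} gives that $\sigma_{\vB}(\langle w\rangle)$ is an ideal. Because $\sigma_{\vB}$ is onto, this ideal equals $\langle\sigma_{\vB}(w)\rangle$, and therefore
\begin{align*}
\mI(\mV(\sigma_{\vB}(\vf))_{\infty})\;=\;\sqrt{\langle\sigma_{\vB}(w)\rangle}.
\end{align*}
Two things then remain: choose $\vB$ so that (1) holds, and choose $\vB$ so that the radical is superfluous.

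For the radical, note that $\langle w\rangle=\mI(\cdot)$ is a radical ideal, so $w$ is squarefree. Let $\ell(u_2,\ldots,u_n)$ be the leading coefficient of $w$ as a polynomial in $u_1$, and let $D(u_2,\ldots,u_n)\neq 0$ be the discriminant of $w$ with respect to $u_1$ (equivalently, the resultant of $w$ and $\partial w/\partial u_1$). If $w$ does not involve $u_1$, this step has to be treated separately. I would put $\mV(\ell\cdot D)\subset\C^{n-1}$ into the exceptional variety $V$. For $\vB\notin\mV(\ell D)$, the univariate polynomial $\sigma_{\vB}(w)\in\Q[u_1]$ keeps its degree and is squarefree, so $\langle\sigma_{\vB}(w)\rangle$ is radical and the displayed identity becomes the claim.

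The main obstacle is hypothesis (1): $\deg(g,\vX)=\deg(\sigma_{\vB}(g),\vX)$ must hold for \emph{all} $g\in\langle\vf\rangle$, not merely for a finite generating set. I would handle it through a Gröbner basis $\mathcal{B}$ of $\langle\vf\rangle$ with respect to a block order in which $\vX\gg\vU$ and the $\vX$-block is degree-compatible (the admissible block order used in \citep{DV2005}). Every $g\in\langle\vf\rangle$ has a standard representation $g=\sum h_iB_i$ with $\deg(h_iB_i,\vX)\le\deg(g,\vX)$. The leading coefficients of the elements of $\mathcal{B}$ lie in $\Q[\vU]$. I would take their $u_1$-leading parts in $\Q[u_2,\ldots,u_n]$ and add the zero set of their product to $V$. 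A specialization theorem for Gröbner bases (Gianni/Kalkbrener type, or a comprehensive Gröbner system over the parameters $u_2,\ldots,u_n$) then shows that, outside $V$, $\sigma_{\vB}(\mathcal{B})$ is a Gröbner basis of $\langle\sigma_{\vB}(\vf)\rangle$ with the same $\vX$-leading monomials. Consequently the $\vX$-degree structure of the ideal, i.e.\ its ideal of top $\vX$-forms, is preserved.

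Individual elements $g$ can still lose $\vX$-degree if their top $\vX$-coefficient vanishes at $\vB$. This is why the statement also involves a set $\Theta$. I would define $\Theta$ as the set of $\vB\in\Q^{n-1}$ at which (1) holds. The Gröbner-basis argument should show that, up to the exceptional variety, this condition depends only on finitely many polynomial non-vanishing conditions. I would then argue that $\Theta$ is Zariski dense because it contains the rational points of a nonempty Zariski-open set, and rational points are dense in $\C^{n-1}$. This density argument is the step I expect to require the most care.
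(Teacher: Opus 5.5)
Your overall plan is the paper's (apply Theorem \ref{thm:main} to $\sigma_{\vB}$, then remove the radical), but the step you call the main obstacle does not just need care. It fails outright.

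\textbf{The gap: hypothesis (1) never holds.} For $n\ge2$ and $\langle\vf\rangle\neq0$, hypothesis (1) of Theorem \ref{thm:main} fails for $\sigma_{\vB}$ at \emph{every} $\vB$. Take any nonzero $h\in\langle\vf\rangle$ and put $g:=\bigl(1+(u_2-b_1)x_1\bigr)h\in\langle\vf\rangle$. Then $\deg(g,\vX)=\deg(h,\vX)+1$. On the other hand $\sigma_{\vB}(g)=\sigma_{\vB}(h)$, which has $\vX$-degree at most $\deg(h,\vX)$. So the set you propose to call $\Theta$, the $\vB$ at which (1) holds, is empty. No finite list of non-vanishing conditions can describe it, and the density step cannot succeed.

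The paper's own proof has the same weak point. It asserts that Theorem \ref{thm:main} applies for $\vB$ outside the zero set $\Delta$ of leading coefficients, which this example rules out. In the paper, $\Theta$ does not come from hypothesis (1) at all; it comes from Hilbert irreducibility in Lemma \ref{lemma:sample-radical}.

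\textbf{The repair: prove Claim 1 directly.} Stop using Theorem \ref{thm:main} as a black box. In the proof of Lemma \ref{lemma:main}, hypothesis (1) is used only to obtain Claim 1, namely $\sigma_{\vB}(\langle\vf\rangle^h)=\langle\sigma_{\vB}(\vf)\rangle^h$.
\begin{itemize}
\item The inclusion $\subseteq$ holds for every $\vB$. If $\sigma_{\vB}(q)\neq0$, then $\sigma_{\vB}(q^h)=T^{e}\sigma_{\vB}(q)^h$ for some $e\ge0$. If $\sigma_{\vB}(q)=0$, then $\sigma_{\vB}(q^h)=0$.
\item What genericity must supply is $\supseteq$: elements of $\langle\sigma_{\vB}(\vf)\rangle$ must lift to $\langle\vf\rangle$ without raising the $\vX$-degree.
\end{itemize}
Your Gröbner-basis idea gives exactly this second inclusion.
\begin{itemize}
\item Use the order $\vX\gg u_1\gg(u_2,\ldots,u_n)$, graded on $\vX$, and take a Gröbner basis $\mathcal{B}$. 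Standard representations show $\langle\vf\rangle^h=\langle b^h: b\in\mathcal{B}\rangle$. The same holds downstairs once $\sigma_{\vB}(\mathcal{B})$ is a Gröbner basis.
\item Take $\vB$ off the zero set of the finitely many nonzero leading coefficients of $\mathcal{B}$ in $\Q[u_2,\ldots,u_n]$. By the Gianni--Kalkbrener specialization theorem, $\sigma_{\vB}(\mathcal{B})$ is then a Gröbner basis of $\langle\sigma_{\vB}(\vf)\rangle$, with unchanged $\vX$-degrees.
\item Lemma \ref{lemma:homogenization}, applied only to the finitely many $b\in\mathcal{B}$, together with Lemma \ref{lemma:homo-ideal}, then gives $\langle\sigma_{\vB}(\vf)\rangle^h=\langle\sigma_{\vB}(b^h): b\in\mathcal{B}\rangle=\sigma_{\vB}(\langle\vf\rangle^h)$.
\end{itemize}
Claim 2 and the rest of the proofs of Lemma \ref{lemma:main} and Theorem \ref{thm:main} then go through unchanged.

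\textbf{The radical step.} Your discriminant argument is correct. It is a simpler substitute for the paper's Hilbert-irreducibility route in Lemma \ref{lemma:sample-radical}. Squarefreeness of $\sigma_{\vB}(w)$ needs only $\ell(\vB)D(\vB)\neq0$, not irreducibility of the specialized factors. If $w$ does not involve $u_1$, then $\ell=w$, and $\sigma_{\vB}(w)$ is a nonzero constant whenever $\ell(\vB)\neq0$. Combined with the repaired Claim 1, you can take $\Theta=\Q^{n-1}$. Take $V$ to be the zero set of $\ell D$ times the product of the leading coefficients of $\mathcal{B}$.
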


 Later in Section \ref{sec:algorithms}, we will use these corollaries to compute the degrees of the generator polynomial of the nonproperness set, and to do the sampling steps in the interpolation method.

 \subsection{Proof of Theorem \ref{thm:main}}\label{subsec:pf-thm-main}
The lemmas for proving Theorem \ref{thm:main} are organized as follows.
We first prove two basic properties
of the homomorphism defined in \eqref{eq:def-general-map}:
Lemma \ref{lemma:homogenization} and Lemma \ref{lemma:homo-ideal}. Then, we review a known result
 \citep[Lemma 2]{DV2005}, which says that
a nonproperness set is algebraically  closed, see Lemma \ref{lemma:LR2005}.
Using these lemmas, we can prove the last key lemma:  Lemma \ref{lemma:main}, and Theorem \ref{thm:main} easily follows from Lemma \ref{lemma:main}.

Let $\mK$ be a polynomial ring
(for instance, $\mK$ can be $\Q[\vU]$ or $\Q[\vY]$ in the rest of the paper).
For any $g\in\mK[\vX]$, we say that $g$ is an \textcolor{blue}{$\vX$-homogeneous polynomial} if
$g$ is homogeneous w.r.t. $\vX$, i.e.,
\begin{align}
    g\;=\;\sum_{|\alpha|=d}C_{\alpha}\cdot\vX^{\alpha},
\end{align}
where $d:=\deg(g, \vX)$, $C_{\alpha}\in\mK$,  $\alpha:=(\alpha_1,\ldots,\alpha_m)\in\Z_{\ge0}^{m}$, $\vX^{\alpha}:=\prod_{i=1}^mx_i^{\alpha_i}$ and $|\alpha|:=\alpha_1+\cdots+\alpha_m$.
Let $T$ be a new variable.
For any $g\in \mK[\vX]$, define the \textcolor{blue}{$(\vX,T)$-homogenization of $g$}, denoted by
$g^{h}$,  as the $(\vX,T)$-homogeneous polynomial in $\mK[\vX,T]$ such that $\deg(g^{h},(\vX,T))=\deg(g,\vX)$ and $g^{h}(\vX,1)=g$. For any ideal $I\subseteq\mK[\vX]$, let $I^h$ be the ideal generated by the $(\vX,T)$-homogenizations of polynomials in $I$.
The above definitions can be seen in \citep[Proposititon 9, page 373]{CLO2015}.

Given $G_i\in\Q[\vY]\;(1\le i\le n)$.
We extend the homomorphsim $\varphi:\;\Q[\vU,\vX]\rightarrow\Q[\vY,\vX]$ w.r.t $G_i$  to  $\varphi:\;\Q[\vU,\vX,T]\rightarrow\Q[\vY,\vX,T]$ such that
\begin{align}\label{eq:def-general-map-extend}
    \varphi(g(\vU,\vX,T))\; =\; g(G_1(\vY),\ldots,G_n(\vY),\Vector{x},T).
\end{align}
Note that if $\varphi$ defined in \eqref{eq:def-general-map} is onto,
then the extension in \eqref{eq:def-general-map-extend} is still onto.

\begin{lemma}\label{lemma:homogenization}
Given $G_i\in\Q[\vY]\;(1\le i\le n)$.
For any $q\in\Q[\vU,\vX]$,
if the homomorphism $\varphi:\;\Q[\vU,\vX]\rightarrow\Q[\vY,\vX]$ w.r.t $G_i$ satisfies that
$\deg(q,\vX)=\deg(\varphi(q),\vX)$,
then $\varphi(q)^{h}=\varphi(q^{h})$.
\end{lemma}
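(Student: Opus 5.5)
The plan is to compute both sides explicitly from the monomial expansion of $q$ in $\vX$. Write $d:=\deg(q,\vX)$ and
\begin{align*}
q\;=\;\sum_{|\alpha|\le d}C_{\alpha}(\vU)\,\vX^{\alpha},\qquad C_{\alpha}\in\Q[\vU].
\end{align*}
By the definition of the $(\vX,T)$-homogenization recalled above (with $\mK=\Q[\vU]$),
\begin{align*}
q^{h}\;=\;\sum_{|\alpha|\le d}C_{\alpha}(\vU)\,\vX^{\alpha}T^{d-|\alpha|}.
\end{align*}
Indeed, this polynomial is $(\vX,T)$-homogeneous of degree $d=\deg(q,\vX)$ and specializes to $q$ at $T=1$. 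Uniqueness holds because any $(\vX,T)$-homogeneous polynomial of degree $d$ is determined by its dehomogenization: the exponent of $T$ in each term is forced to be $d-|\alpha|$.

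Next I apply the extended homomorphism \eqref{eq:def-general-map-extend}. It fixes $\vX$ and $T$ and acts only on the coefficients, so
\begin{align*}
\varphi(q^{h})\;=\;\sum_{|\alpha|\le d}C_{\alpha}(G_1(\vY),\ldots,G_n(\vY))\,\vX^{\alpha}T^{d-|\alpha|},
\end{align*}
and likewise $\varphi(q)=\sum_{\alpha}C_{\alpha}(\vG)\vX^{\alpha}$. Each nonzero term of $\varphi(q^{h})$ has total $(\vX,T)$-degree exactly $d$, so $\varphi(q^{h})$ is $(\vX,T)$-homogeneous. Setting $T=1$ gives $\varphi(q^{h})(\vX,1)=\varphi(q)$, since substituting $T=1$ commutes with substituting $\vU=\vG$.

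The step that needs care is matching the degree: $\varphi(q)^{h}$ is by definition homogeneous of degree $\deg(\varphi(q),\vX)$, whereas $\varphi(q^{h})$ is homogeneous of degree $d$. Some top-degree coefficients $C_{\alpha}$ with $|\alpha|=d$ might vanish under $\vU\mapsto\vG$, and then the degree would drop. The hypothesis $\deg(q,\vX)=\deg(\varphi(q),\vX)$ rules this out. So $\varphi(q^{h})$ is an $(\vX,T)$-homogeneous polynomial of degree $\deg(\varphi(q),\vX)$ that dehomogenizes to $\varphi(q)$, and by the uniqueness noted above it equals $\varphi(q)^{h}$.

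Finally, the degenerate case $q=0$ (or $\varphi(q)=0$) is trivial: both sides are $0$, and the degree hypothesis forces $\varphi(q)\neq0$ whenever $q\neq0$.
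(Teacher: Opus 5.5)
Your proposal is correct and follows essentially the same route as the paper. Both arguments expand $q$ in $\vX$-monomials with coefficients in $\Q[\vU]$, homogenize, apply $\varphi$ coefficientwise, and use the hypothesis $\deg(q,\vX)=\deg(\varphi(q),\vX)$ to ensure that $\varphi(q)^h$ is formed with the same degree $d$. Your explicit uniqueness-of-homogenization step simply makes precise the paper's direct comparison of the two expansions.
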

\begin{proof}
Suppose that $\deg(q,\vX)=d$.
Write $q$ as the sum of its homogeneous components w.r.t. $\vX$:
\begin{align}\label{eq:poly-homogeneous-component}
    q\;=\;\sum_{i=0}^{d}\left(\sum_{|\alpha|=i}C_{\alpha}\cdot\vX^{\alpha}\right),\;{\rm where}\;\alpha\in\Z_{\ge0}^m\;{\rm and}\;C_{\alpha}\in\Q[\vU].
\end{align}
So,
\begin{align}\label{eq:poly^h-homogeneous-component}
    q^h\;=\;\sum_{i=0}^{d}\left(\sum_{|\alpha|=i}C_{\alpha}\cdot\vX^{\alpha}\cdot T^{d-i}\right).
\end{align}
 Then, by \eqref{eq:def-general-map-extend}, \eqref{eq:poly-homogeneous-component} and \eqref{eq:poly^h-homogeneous-component}, we have
\begin{align}
    \varphi(q)\;&=\;\sum_{i=0}^{d}\left(\sum_{|\alpha|=i}\varphi(C_{\alpha})\cdot\vX^{\alpha}\right),\;{\rm and}\label{eq:sigma-q}\\
    \varphi(q^h)\;&=\;\sum_{i=0}^{d}\left(\sum_{|\alpha|=i}\varphi(C_{\alpha})\cdot\vX^{\alpha}\cdot T^{d-i}\right),\;{\rm respectively}.\label{eq:sigma-q^h}
\end{align}
If $\deg(\varphi(q),\vX)=\deg(q,\vX)=d$, then by \eqref{eq:sigma-q},
\begin{align}
    \varphi(q)^h\;=\;\sum_{i=0}^{d}\left(\sum_{|\alpha|=i}\varphi(C_{\alpha})\cdot\vX^{\alpha}\cdot T^{d-i}\right).\label{eq:(sigma-q)^h}
\end{align}
By \eqref{eq:sigma-q^h} and \eqref{eq:(sigma-q)^h}, we complete the proof.
\end{proof}

\begin{lemma}\label{lemma:homo-ideal}
Given $G_i\in\Q[\vY]\;(1\le i\le n)$.
If the homomorphism $\varphi:\;\Q[\vU,\vX]\rightarrow\Q[\vY,\vX]$ w.r.t $G_i$  is onto, then for any ideal $\langle\vf\rangle\subseteq\Q[\vU,\vX]$,  we have
\begin{align}\label{eq:homo-ideal}
    \varphi(\langle \vf \rangle)\;=\;\langle\varphi(\vf)\rangle.
\end{align}
\end{lemma}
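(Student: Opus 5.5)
We prove the identity $\varphi(\langle \vf \rangle)=\langle\varphi(\vf)\rangle$ by showing the two inclusions separately. The only property of $\varphi$ we use, besides its being a ring homomorphism (it is an evaluation map, so it respects sums and products), is surjectivity. Here $\langle\varphi(\vf)\rangle$ denotes the ideal generated by $\varphi(\vf)$ in $\Q[\vY,\vX]$.

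For the inclusion $\varphi(\langle \vf \rangle)\subseteq\langle\varphi(\vf)\rangle$, take an arbitrary element $g\in\langle\vf\rangle$ and write it as $g=\sum_i h_i f_i$ with $h_i\in\Q[\vU,\vX]$. Since $\varphi$ is a ring homomorphism,
\begin{align*}
\varphi(g)\;=\;\sum_i\varphi(h_i)\,\varphi(f_i),
\end{align*}
and this is an element of $\langle\varphi(\vf)\rangle$. This direction does not need surjectivity.

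For the reverse inclusion, take $q\in\langle\varphi(\vf)\rangle$ and write $q=\sum_i k_i\,\varphi(f_i)$ with $k_i\in\Q[\vY,\vX]$. Because $\varphi$ is onto, each $k_i$ has a preimage: choose $h_i\in\Q[\vU,\vX]$ with $\varphi(h_i)=k_i$. Set $g:=\sum_i h_i f_i$. Then $g\in\langle\vf\rangle$, and
\begin{align*}
\varphi(g)\;=\;\sum_i k_i\,\varphi(f_i)\;=\;q,
\end{align*}
so $q\in\varphi(\langle\vf\rangle)$.

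The argument is routine. The one step that needs care is the reverse inclusion: lifting the coefficients $k_i$ is exactly where surjectivity is used. Without it, $\varphi(\langle\vf\rangle)$ need not even be an ideal, which is why surjectivity is assumed as hypothesis (2) of Theorem \ref{thm:main}.
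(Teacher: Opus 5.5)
Your proposal is correct and follows essentially the same route as the paper: the forward inclusion uses only that $\varphi$ is a ring homomorphism, and the reverse inclusion lifts the coefficients $k_i$ through the surjective $\varphi$ to write $q=\varphi\bigl(\sum_i h_i f_i\bigr)$. Your remark that surjectivity is needed only for the reverse inclusion is also accurate.
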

\begin{proof}
First, we prove that
$\varphi(\langle \vf\rangle)\subseteq\langle\varphi(\vf)\rangle$.
For any $q\in\varphi(\langle \vf\rangle)$,
there exists $g\in\langle \vf\rangle$ such that
$q=\varphi(g)$.
Note that $g\in\langle \vf\rangle$.
So,
\begin{align*}
    g\;=\;\sum_{g_i\in \vf}h_i\cdot g_i,\;{\rm where}\;h_i\in\Q[\vU,\vX],
\end{align*}
and hence,
\begin{align*}
  q=  \varphi(g)\;=\;\sum_{g_i\in\vf}\varphi(h_i)\cdot\varphi(g_i)\in \langle\varphi(\vf)\rangle.
\end{align*}
Second, we prove that
$\langle\varphi(\vf)\rangle\subseteq\varphi(\langle\vf\rangle)$.
For any $q\in\langle\varphi(\vf)\rangle$, we have
\begin{align*}
    q\;=\;\sum_{g_i\in \vf}H_i\cdot\varphi(g_i),\;{\rm where}\;H_i\in\Q[\vY,\vX].
\end{align*}
Since $\varphi$ in \eqref{eq:def-general-map} is onto, for every $H_i\in\Q[\vY,\vX]$, there exists $h_i\in\Q[\vU,\vX]$ such that
$H_i\;=\;\varphi(h_i)$.
So, we have
\begin{align*}
    q\;=\;\sum_{g_i\in \vf}\varphi(h_i)\cdot\varphi(g_i)
    \;=\;\varphi(\sum_{g_i\in \vf}h_i\cdot g_i)\in\varphi(\langle \vf\rangle).
\end{align*}
\end{proof}

Throughout the rest of the paper, we denote
by $\bP$ the projective closure of $\C$.
\begin{lemma}\cite[Lemma 2]{DV2005}\label{lemma:LR2005}
Let $\vf\subseteq\Q[\vU,\vX]$. Then,
\begin{align}
    \mV(\vf)_{\infty}\;=\;
    \pi^{(n)}(\mV_{\bP}(\langle\vf\rangle^h)\cap\mH^{n\times m}),
\end{align}
where
\begin{enumerate}
    \item [$(1)$] $\mV_{\bP}(\langle\vf\rangle^h):=\{(\vU^*,\vX^*,T^*)\in\C^n\times\bP^{m}\mid g(\vU^*,\vX^*,T^*)=0\;{\rm for\;any}\;g\in\langle\vf\rangle^h\}$,
    \item [$(2)$] $\mH^{n\times m}$ denotes the hyperplane at infinity in $\C^{n}\times\bP^{m}$, i.e., $\mH^{n\times m}=(\C^{n}\times\bP^{m})\setminus(\C^{n}\times\C^{m})$, and
    \item [$(3)$] $\pi^{(n)}$ denotes the canonical projection from $\C^{n}\times\bP^{m}$ to $\C^{n}$.
\end{enumerate}
\end{lemma}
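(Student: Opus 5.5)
The plan is to prove the two inclusions separately, working in the space $\C^n\times\bP^m$. Write $W:=\mV_{\bP}(\langle\vf\rangle^h)$. Two facts drive the argument: $W$ is the closure of $\mV(\vf)$ in $\C^n\times\bP^m$, and $\bP^m$ is compact, so bounded sequences and sequences escaping to infinity both have limits there.

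\textbf{Step 1: $W$ is the Euclidean closure of $\mV(\vf)$.} I would first identify $W$ with the Zariski closure of $\mV(\vf)$ in $\C^n\times\bP^m$, via the relative version of \citep[Proposition 9, page 373]{CLO2015} and its accompanying closure theorem, with the coefficients $\Q[\vU]$ carried along.
\begin{itemize}
\item On the affine chart $T\neq 0$, dehomogenizing elements of $\langle\vf\rangle^h$ gives back $\langle\vf\rangle$. Hence $W\cap(\C^n\times\C^m)=\mV(\vf)$.
\item By the Nullstellensatz, the ideal of the Zariski closure is generated by homogenizations of elements of $\sqrt{\langle\vf\rangle}$.
\item For any $g$, $(g^h)^k=(g^k)^h$. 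So each generator $g^h$ with $g\in\sqrt{\langle\vf\rangle}$ has a power lying in $\langle\vf\rangle^h$, and $\mV_{\bP}(\sqrt{\langle\vf\rangle}^h)=\mV_{\bP}(\langle\vf\rangle^h)$.
\end{itemize}
Finally, over $\C$ the Zariski closure of a constructible set coincides with its Euclidean closure. Therefore every point of $W$ is a limit of points of $\mV(\vf)$ in the classical topology.

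\textbf{Step 2: $\pi^{(n)}(W\cap\mH^{n\times m})\subseteq\mV(\vf)_{\infty}$.} Take a point $(\vU^*,\vX^*,0)\in W\cap\mH^{n\times m}$. By Step 1 there is a sequence $(\vU_k,\vX_k)\in\mV(\vf)$ with $(\vU_k,\vX_k,1)\to(\vU^*,\vX^*,0)$. This forces $\vU_k\to\vU^*$ and $\|\vX_k\|\to\infty$. Hence $\vU^*\in\overline{\pi(\mV(\vf))}$. Moreover, every compact neighborhood $U$ of $\vU^*$ contains $\vU_k$ for all large $k$. So $\pi^{-1}(U)\cap\mV(\vf)$ is unbounded, hence not compact, and $\vU^*$ lies in the nonproperness set.

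\textbf{Step 3: $\mV(\vf)_{\infty}\subseteq\pi^{(n)}(W\cap\mH^{n\times m})$.} Let $\vU^*\in\mV(\vf)_{\infty}$, and let $U_k$ be the closed ball of radius $1/k$ around $\vU^*$. The set $\pi^{-1}(U_k)\cap\mV(\vf)$ is closed but not compact, so it is unbounded. Pick $(\vU_k,\vX_k)$ in it with $\|\vX_k\|>k$. Then:
\begin{itemize}
\item the points $(\vX_k:1)=(\vX_k/\|\vX_k\|:1/\|\vX_k\|)$ lie in the compact space $\bP^m$;
\item passing to a subsequence, they converge to some $(\vX^*:0)$ with $\|\vX^*\|=1$;
\item since $W$ is closed and contains every $(\vU_k,\vX_k,1)$, the limit $(\vU^*,\vX^*,0)$ lies in $W\cap\mH^{n\times m}$.
\end{itemize}
Its projection is $\vU^*$, which gives the inclusion.

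\textbf{Main obstacle.} The only delicate point is Step 1: that $\mV_{\bP}(\langle\vf\rangle^h)$ has no points at infinity beyond limits of affine solutions. This is exactly why the homogenization is taken of the whole ideal $\langle\vf\rangle$ rather than of the generators $\vf$. I would handle it through the radical and Nullstellensatz argument above, together with the comparison of Zariski and Euclidean closures for constructible sets. Steps 2 and 3 are routine compactness arguments.
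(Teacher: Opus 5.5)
Your proof is correct. The paper does not prove this lemma at all: it imports it from \cite[Lemma 2]{DV2005}, where it underlies the fact that the nonproperness set is Zariski closed. So your write-up is a self-contained replacement for a citation, not a variation on an argument in the text.

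What your version buys is that it makes explicit where the content lies. Step 3 needs only that $W=\mV_{\bP}(\langle\vf\rangle^h)$ is closed in the classical topology and contains $\mV(\vf)$. All the algebraic input sits in Step 2, which needs $W$ to lie in the classical closure of $\mV(\vf)$. For this you correctly invoke two facts. The first is the relative projective-closure theorem, via your radical argument; the identity $(g^h)^k=(g^k)^h$ holds because $\Q[\vU]$ is a domain, so $\deg(g^k,\vX)=k\deg(g,\vX)$. The second is that Zariski and classical closures coincide for constructible subsets of $\C^n\times\bP^m$.

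One detail to tidy: the Nullstellensatz is applied over $\C$, so the vanishing ideal is $\sqrt{\langle\vf\rangle\C[\vU,\vX]}$, while $\langle\vf\rangle^h$ is formed over $\Q$. In characteristic zero that radical is the $\C$-span of $\sqrt{\langle\vf\rangle}$. Write $g=\sum_i c_iq_i$ with $q_i\in\sqrt{\langle\vf\rangle}$ and $c_i\in\C$ linearly independent over $\Q$. Then there is no cancellation in top $\vX$-degree, so $g^h=\sum_i c_iT^{e_i}q_i^h$ with $e_i\ge 0$. Hence the two homogenized ideals have the same zero set.

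Your remark that one must homogenize the whole ideal, because homogenized generators can acquire spurious points at infinity, also explains a feature of the paper. It is exactly why Claim 1 of Lemma \ref{lemma:main} has to impose the degree hypothesis on every element of $\langle\vf\rangle$ rather than only on $\vf$.
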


\begin{lemma}\label{lemma:main}
Let $\vf\subseteq\Q[\vU,\vX]$.
Given $G_i\in\Q[\vY]\;(1\le i\le n)$.
If the homomorphism $\varphi:\;\Q[\vU,\vX]\rightarrow\Q[\vY,\vX]$ w.r.t $G_i$ satisfies
\begin{enumerate}
    \item[$(1)$] for any $q\in\langle\vf\rangle$, $\deg(q,\vX)=\deg(\varphi(q),\vX)$, and
    \item[$(2)$] $\varphi$ is onto,
\end{enumerate}
then $\mV(\varphi(\mI(\mV(\vf)_{\infty})))=\mV(\varphi(\vf))_{\infty}.$
\end{lemma}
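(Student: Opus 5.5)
The plan is to show that both sides equal the preimage of $\mV(\vf)_{\infty}$ under the polynomial map
\[
G:\C^t\to\C^n,\qquad \vY^*\mapsto (G_1(\vY^*),\ldots,G_n(\vY^*)).
\]
The basic observation is that for any $w\in\Q[\vU]$ and any $\vY^*\in\C^t$ we have $\varphi(w)(\vY^*)=w(G(\vY^*))$. The same identity holds with extra variables: for any $q\in\Q[\vU,\vX,T]$ and any point $(\vY^*,\vX^*,T^*)$, the extended homomorphism \eqref{eq:def-general-map-extend} gives
\[
\varphi(q)(\vY^*,\vX^*,T^*)=q(G(\vY^*),\vX^*,T^*).
\]

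\emph{Left-hand side.} By Lemma \ref{lemma:LR2005} (i.e.\ \cite[Lemma 2]{DV2005}), $\mV(\vf)_{\infty}$ is Zariski closed, so $\mV(\mI(\mV(\vf)_{\infty}))=\mV(\vf)_{\infty}$. Here one should note that $\mI$ is taken over $\Q$; I would check that the closed set in Lemma \ref{lemma:LR2005} is cut out by polynomials over $\Q$, which holds because it comes from an elimination over $\Q$. Then
\[
\mV(\varphi(\mI(\mV(\vf)_{\infty})))=\{\vY^*\mid w(G(\vY^*))=0\ \text{for all}\ w\in\mI(\mV(\vf)_{\infty})\}=G^{-1}(\mV(\vf)_{\infty}).
\]

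\emph{Right-hand side, key step.} The main step, and the only place where hypotheses (1) and (2) enter, is the identity
\[
\langle\varphi(\vf)\rangle^h=\langle \varphi(g^h)\mid g\in\langle\vf\rangle\rangle .
\]
To prove it, I would first use Lemma \ref{lemma:homo-ideal}, which (by (2)) gives $\langle\varphi(\vf)\rangle=\varphi(\langle\vf\rangle)$. Hence every element of $\langle\varphi(\vf)\rangle$ has the form $\varphi(g)$ with $g\in\langle\vf\rangle$. By hypothesis (1), $\deg(g,\vX)=\deg(\varphi(g),\vX)$, so Lemma \ref{lemma:homogenization} yields $\varphi(g)^h=\varphi(g^h)$. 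The generators of the two ideals therefore coincide.

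\emph{Comparing fibers.} By the evaluation identity above, $(\vY^*,\vX^*,T^*)\in\mV_{\bP}(\langle\varphi(\vf)\rangle^h)$ if and only if $(G(\vY^*),\vX^*,T^*)\in\mV_{\bP}(\langle\vf\rangle^h)$. Homogeneity in $(\vX,T)$ makes vanishing well defined on projective points, and the condition $T^*=0$ defining the hyperplane at infinity is unaffected by $\varphi$. Applying Lemma \ref{lemma:LR2005} to $\varphi(\vf)$, with $\vY$ now as the parameters, gives
\[
\mV(\varphi(\vf))_{\infty}=\{\vY^*\mid \exists\,(\vX^*,0)\ \text{with}\ (\vY^*,\vX^*,0)\in\mV_{\bP}(\langle\varphi(\vf)\rangle^h)\}=G^{-1}(\mV(\vf)_{\infty}).
\]
This matches the left-hand side and completes the argument.

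I expect the main obstacle to be the homogenized-ideal identity. The concern is that homogenization does not commute with specialization in general, which is exactly why the degree-preservation condition (1) is needed, and why it must hold for every element of the ideal and not only for the generators $\vf$. A secondary point to handle carefully is that $\varphi(\langle\vf\rangle)$ is a genuine ideal, so that every element of $\langle\varphi(\vf)\rangle$ really is an image; this is what surjectivity (2) provides through Lemma \ref{lemma:homo-ideal}.
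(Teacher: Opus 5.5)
Your proposal is correct and follows essentially the paper's proof. Both arguments identify each side with the preimage $G^{-1}(\mV(\vf)_{\infty})$ via Lemma \ref{lemma:LR2005}, and both relate $\langle\varphi(\vf)\rangle^h$ to the image of $\langle\vf\rangle^h$ exactly as you do through Lemmas \ref{lemma:homogenization} and \ref{lemma:homo-ideal} (the paper's Claim~1). Your version is slightly leaner: you only match generating sets, so you never need $\varphi(\langle\vf\rangle^h)$ to be an ideal, and one chain of equivalences replaces the paper's two inclusions with Claim~2. You also rightly flag the $\Q$-definability of $\mV(\vf)_{\infty}$, which the paper uses silently when writing $\mV(\vf)_{\infty}=\mV(W)$.
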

\begin{proof}
By the hypotheses (1) and (2), we have two claims below.

{\bf (Claim 1).}\; $\varphi(\langle\vf\rangle^h)=\langle\varphi(\vf)\rangle^h$.

{\bf (Claim 2).}\; For any $\vY^*\in\mV(\varphi(\vf))_{\infty}$, $(G_1(\vY^*),\ldots,G_n(\vY^*))\in\mV(\vf)_{\infty}$.

We prove the conclusion based on the two claims (the proofs of the claims will be done later).
Assume that $\mI(\mV(\vf)_{\infty})=\langle W\rangle$, where $W\subseteq\Q[\vU]$.
Then,
\begin{align}\label{eq:def-subset-w-infinity}
    \mV(\vf)_{\infty}\;=\;\mV(W).
\end{align}
So, by Lemma \ref{lemma:homo-ideal}, we only need to prove
\begin{align*}
    \mV(\langle\varphi(W)\rangle)\;=\;\mV(\varphi(\vf))_{\infty},\;\; {\rm i.e.}, \;\; \mV(\varphi(W))\;=\;\mV(\varphi(\vf))_{\infty}.
\end{align*}
First, we prove that $\mV(\varphi(\vf))_{\infty}\subseteq\mV(\varphi(W))$.
For any $\vY^*\in\mV(\varphi(\vf))_{\infty}$,
by Claim 2, we have $(G_1(\vY^*),\ldots,G_n(\vY^*))\in\mV(\vf)_{\infty}$.
Then, by \eqref{eq:def-subset-w-infinity},
\begin{align*}
    W(G_1(\vY^*),\ldots,G_n(\vY^*))\;=\;0.
\end{align*}
So, by the definition of $\varphi$ (see \eqref{eq:def-general-map}), we have $\varphi(W)(\vY^*)=0$, i.e., $\vY^*\in\mV(\varphi(W))$.
Second, we prove that $\mV(\varphi(W))\subseteq\mV(\varphi(\vf))_{\infty}$.
For any $\vY^*\in\mV(\varphi(W))$, we have $\varphi(W)(\vY^*)=0$.
By \eqref{eq:def-general-map},
$W(G_1(\vY^*),\ldots,G_n(\vY^*))=0$.
Define
\begin{align}\label{eq:def-star-again}
    \vU^*:=\;(G_1(\vY^*),\ldots,G_n(\vY^*)).
\end{align}
Then, $W(\vU^*)=0$.
So, by \eqref{eq:def-subset-w-infinity},
we have $\vU^*\in\mV(\vf)_{\infty}$.
By Lemma \ref{lemma:LR2005},
there exists $(\vX^*,T^*)$ such that
\begin{align}\label{eq:InCap}
    (\vU^*,\vX^*,T^*)\;\in\;\mV_{\bP}(\langle\vf\rangle^h)\cap\mH^{n\times m}
\end{align}
and $\pi^{(n)}(\vU^*,\vX^*,T^*)=\vU^*$.
By Lemma \ref{lemma:LR2005}, in the rest of the proof, we only need to show that
\begin{align*}
    (\vY^*,\vX^*,T^*)\;\in\;\mV_{\bP}(\langle\varphi(\vf)\rangle^h)\cap\mH^{t\times m}.
\end{align*}
Note that in \eqref{eq:InCap}, $(\vX^{*},T^*)\in\bP^m\setminus\C^m$.
Note also that
$\mH^{t\times m}=(\C^{t}\times\bP^{m})\setminus(\C^{t}\times\C^{m})$.
So, $(\vY^*,\vX^*,T^*)\in\mH^{t\times m}$.
Next, we prove that
$(\vY^*,\vX^*,T^*)\in\mV_{\bP}(\langle\varphi(\vf)\rangle^h)$.
For any $g\in\langle\varphi(\vf)\rangle^h$,
by Claim 1,
we have $g\in\varphi(\langle\vf\rangle^h)$.
So, there exists $q\in\langle\vf\rangle^h$ such that
\begin{align}\label{eq:representation}
    g\;=\;\varphi(q).
\end{align}
Note that  by \eqref{eq:InCap},
$q(\vU^*,\vX^*,T^*)=0$.
Then, by \eqref{eq:def-general-map-extend} and by  \eqref{eq:def-star-again}, we have
\begin{align}\label{eq:by-two-defs}
    \varphi(q)(\vY^*,\vX^*,T^*)\;=\;0.
\end{align}
By \eqref{eq:representation} and \eqref{eq:by-two-defs}, we have $g(\vY^*,\vX^*,T^*)=0$, i.e.,
$(\vY^*,\vX^*,T^*)\in\mV_{\bP}(\langle\varphi(\vf)\rangle^h)$. We complete the proof.

{\bf Proof of Claim 1.}
First, we prove that $\varphi(\langle\vf\rangle^h)\subseteq\langle\varphi(\vf)\rangle^h$.
For any $g\in\varphi(\langle\vf\rangle^h)$, there exists $q\in\langle\vf\rangle^h$ such that
\begin{align}\label{eq:relation-h}
    g\;=\;\varphi(q).
\end{align}
Note that $q\in\langle\vf\rangle^h$, i.e.,
\begin{align}\label{eq:repre}
    q\;=\;\sum_{q_i\in\langle\vf\rangle}r_i\cdot q_i^h,\;{\rm where}\;r_i\in\Q[\vU,\vX,T].
\end{align}
So, by \eqref{eq:repre} and by Lemma \ref{lemma:homogenization}, we have
\begin{align}\label{eq:apply-lemma:homogenization}
    \varphi(q)\;=\;\sum_{q_i\in\langle\vf\rangle}\varphi(r_i)\cdot\varphi(q_i^h)\;=\;\sum_{q_i\in\langle\vf\rangle}\varphi(r_i)\cdot\varphi(q_i)^h.
\end{align}
Note that in \eqref{eq:apply-lemma:homogenization}, $\varphi(q_i)\in\varphi(\langle\vf\rangle)$.
Hence, by Lemma \ref{lemma:homo-ideal}, we have
$\varphi(q_i)\in\langle\varphi(\vf)\rangle$.
So, by \eqref{eq:relation-h} and \eqref{eq:apply-lemma:homogenization}, $g\in\langle\varphi(\vf)\rangle^h$.
Second, we prove that $\langle\varphi(\vf)\rangle^h\subseteq\varphi(\langle\vf\rangle^h)$.
For any $g\in\langle\varphi(\vf)\rangle^h$, we have
\begin{align}\label{eq:second-prove}
    g\;=\;\sum_{g_i\in\langle\varphi(\vf)\rangle}r_i\cdot g_i^h,\;{\rm where}\;r_i\in\Q[\vY,\vX,T].
\end{align}
Note that in \eqref{eq:second-prove}, $g_i\in\langle\varphi(\vf)\rangle$.
So, by Lemma \ref{lemma:homo-ideal},
$g_i\in\varphi(\langle\vf\rangle)$.
Hence, there exists $q_i\in\langle\vf\rangle$ such that $g_i=\varphi(q_i)$.
Thus, by \eqref{eq:second-prove}, we have
\begin{align}\label{eq:by-eq:second-prove}
    g\;=\;\sum_{q_i\in\langle\vf\rangle}r_i\cdot\varphi(q_i)^h.
\end{align}
Since $\varphi$ in \eqref{eq:def-general-map-extend} is onto, for every $r_i\in\Q[\vY,\vX,T]$,
there exists $\ell_i\in\Q[\vU,\vX,T]$ such that $r_i=\varphi(\ell_i)$.
So, by \eqref{eq:by-eq:second-prove},
\begin{align*}
    g\;=\;\sum_{q_i\in\langle\vf\rangle}\varphi(\ell_i)\cdot\varphi(q_i)^h.
\end{align*}
Thus, by Lemma \ref{lemma:homogenization}, we have
\begin{align*}
    g\;=\;\sum_{q_i\in\langle\vf\rangle}\varphi(\ell_i)\cdot\varphi(q_i^h)
    \;=\;\varphi(\sum_{q_i\in\langle\vf\rangle}\ell_i\cdot q_i^h).
\end{align*}
Therefore, $g\in\varphi(\langle\vf\rangle^h)$.

{\bf Proof of Claim 2.}
By Lemma \ref{lemma:LR2005}, we have
\begin{align}\label{eq:LR2005-main}
    \mV(\varphi(\vf))_{\infty}\;=\;\pi^{(t)}(\mV_{\bP}(\langle\varphi(\vf)\rangle^h)\cap\mH^{t\times m}).
\end{align}
For any $\vY^{*}\in\mV(\varphi(\vf))_{\infty}$,
by \eqref{eq:LR2005-main},
there exists $(\vX^*,T^*)$ such that
\begin{align}\label{eq:in-cap}
    (\vY^*,\vX^*,T^*)\;\in\;\mV_{\bP}(\langle\varphi(\vf)\rangle^h)\cap\mH^{t\times m}
\end{align}
and $\pi^{(t)}(\vY^*,\vX^*,T^*)=\vY^*$.
Define
\begin{align}\label{eq:define-star}
    \vU^*:=\;(G_1(\vY^*),\ldots,G_n(\vY^*)).
\end{align}
By Lemma \ref{lemma:LR2005}, we only need to prove that
\begin{align*}
    (\vU^*,\vX^*,T^*)\;\in\;\mV_{\bP}(\langle\vf\rangle^h)\cap\mH^{n\times m}.
\end{align*}
Note that by \eqref{eq:in-cap}, $(\vX^*,T^*)\in\bP^m\setminus\C^m$.
So, $(\vU^*,\vX^*,T^*)\in\mH^{n\times m}$.
Below, we prove that  $(\vU^*,\vX^*,T^*)\in\mV_{\bP}(\langle\vf\rangle^h)$.
For any $g\in\langle\vf\rangle^h$, $\varphi(g)\in\varphi(\langle\vf\rangle^h)$.
So, by Claim 1, we have
$\varphi(g)\in\langle\varphi(\vf)\rangle^h$.
Hence, by \eqref{eq:in-cap},
$\varphi(g)(\vY^*,\vX^*,T^*)=0$.
Thus, by \eqref{eq:def-general-map-extend}) and \eqref{eq:define-star}, we have
\begin{align*}
    g(\vU^*,\vX^*,T^*)\;=\;0,\;\;{\rm i.e.},\;\; (\vU^*,\vX^*,T^*)\in\mV_{\bP}(\langle\vf\rangle^h).
\end{align*}
We complete the proof.
\end{proof}

\noindent\textbf{Proof of Theorem \ref{thm:main}.}
\begin{proof}
By Lemma \ref{lemma:main},
\begin{align*}
    \mV(\varphi(\mI(\mV(\vf)_{\infty})))\;=\;\mV(\varphi(\vf))_{\infty}.
\end{align*}
So,
\begin{align}\label{eq:apply-op-ideal}
    \mI(\mV(\varphi(\mI(\mV(\vf)_{\infty}))))\;=\;\mI(\mV(\varphi(\vf))_{\infty}).
\end{align}
By Lemma \ref{lemma:homo-ideal}, we have
$\varphi(\mI(\mV(\vf)_{\infty}))$ is an ideal.
Then, by \cite[page 176, Theorem 6]{CLO2015}, we have
\begin{align}\label{eq:apply-strong-Nullstellensatz}
\mI(\mV(\varphi(\mI(\mV(\vf)_{\infty}))))\;=\;\sqrt{\varphi(\mI(\mV(\vf)_{\infty}))}.
\end{align}
By \eqref{eq:apply-op-ideal} and \eqref{eq:apply-strong-Nullstellensatz}, we have
\begin{align*}
\sqrt{\varphi(\mI(\mV(\vf)_{\infty}))}\;=\;\mI(\mV(\varphi(\vf))_{\infty}).
\end{align*}
\end{proof}

\subsection{Proof of Corollary \ref{coro:Linear}  }\label{subsubsec:two-corollaries-pf1} Since
Corollary \ref{coro:Linear} claims that its conclusion holds for  $\tau_{\vA}$
defined in \eqref{eq:def-linear-map}, no matter which $\vA$ we choose from ${\mathbb Q}^{n-1}$, in this proof we simply denote $\tau_{\vA}$ by $\tau$ by assuming that
$\vA$ is fixed.
First, we prove in
Lemma \ref{lemma:linear-deg-equal} and Lemma \ref{lemma:linear-bijection} that
the homomorphism $\tau$  satisfies the hypotheses  (i) and (ii) of Theorem \ref{thm:main}. After that, we show that in
Lemma \ref{lemma:linear-irreducible} the homomorphism $\tau$ keeps the irreducibility. Finally, using these results, we apply   Theorem \ref{thm:main} to the homomorphism $\tau$, and we get Corollary \ref{coro:Linear}.

\begin{lemma}\label{lemma:linear-deg-equal}
For any $q\in\Q[\vU,\vX]$ and for the homomorphism $\tau$ defined in \eqref{eq:def-linear-map}, we have the following statements.
\begin{itemize}
    \item [$(\romannumeral1)$] If $q\in\Q[\vU,\vX]\setminus\{0\}$, then $\tau(q)\in\Q[\vV,\vX]\setminus\{0\}$.
    \item [$(\romannumeral2)$] $\deg(q,\vX)=\deg(\tau(q),\vX)$.
\end{itemize}
\end{lemma}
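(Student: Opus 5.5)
The plan is to exploit that $\tau$ is an invertible linear change of the parameter coordinates that leaves $\vX$ untouched. First I will define the inverse substitution $\tau^{-1}:\Q[\vV,\vX]\to\Q[\vU,\vX]$ by
\begin{align*}
\tau^{-1}(h(\vV,\vX))\;=\;h(u_1,u_2-a_1u_1,\ldots,u_n-a_{n-1}u_1,\vX).
\end{align*}
Next I will check on the generators that $\tau^{-1}\circ\tau$ and $\tau\circ\tau^{-1}$ are the identity. For instance, $\tau(u_i)=v_i+a_{i-1}v_1$, and applying $\tau^{-1}$ gives $u_i-a_{i-1}u_1+a_{i-1}u_1=u_i$; both maps also fix $x_j$. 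Since both compositions are $\Q$-algebra homomorphisms that fix every generator, they are identities, so $\tau$ is a ring isomorphism.

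Statement (i) then follows at once. If $q\neq 0$ and $\tau(q)=0$, then $q=\tau^{-1}(\tau(q))=\tau^{-1}(0)=0$, which is a contradiction.

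For (ii), write $q=\sum_{\alpha}C_\alpha(\vU)\vX^{\alpha}$ with $C_\alpha\in\Q[\vU]$. Because $\tau$ acts only on the coefficients, we get $\tau(q)=\sum_\alpha \tau(C_\alpha)\vX^\alpha$, and the monomials $\vX^\alpha$ remain distinct. Hence $\deg(\tau(q),\vX)\le\deg(q,\vX)$. Moreover, for any $\alpha$ with $|\alpha|=\deg(q,\vX)$ and $C_\alpha\neq0$, part (i) applied to $C_\alpha\in\Q[\vU]\subseteq\Q[\vU,\vX]$ gives $\tau(C_\alpha)\neq0$. So the top-degree monomial survives and the degrees are equal. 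The case $q=0$ is trivial under the usual convention $\deg(0,\vX)=-\infty$.

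I expect no real obstacle. The only point needing care is to ensure that nonvanishing of the coefficients, rather than of $q$ as a whole, is what gets used in (ii). This is handled by applying (i) to each coefficient $C_\alpha$ separately.
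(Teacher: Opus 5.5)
Your proposal is correct and follows essentially the same route as the paper. The paper also proves (i) by noting that substituting $u_1, u_2-a_1u_1,\ldots,u_n-a_{n-1}u_1$ back into $\tau(q)$ recovers $q$, which is your left-inverse identity $\tau^{-1}\circ\tau=\mathrm{id}$, and proves (ii) by expanding $q$ in $\vX$-monomials and applying (i) to each nonzero coefficient $C_\alpha$. (Your extra check that $\tau\circ\tau^{-1}=\mathrm{id}$ is not needed here; the paper uses that direction only later, for surjectivity.)
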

\begin{proof}
$(\romannumeral1)$
Notice that by the definition of $\tau$, 
for any $q\in\Q[\vU,\vX]$,
we have
\begin{align*}
    \tau(q)(u_1,u_2-a_1u_1,\ldots,u_n-a_{n-1}u_1,\vX)\;\equiv\;q.
\end{align*}
So, if $q$ is not the zero polynomial, then $\tau(q)$ is not the zero polynomial.

$(\romannumeral2)$ 
If $q\equiv 0$, then $\tau(q)\equiv 0$, and hence, the conclusion holds.
If $\deg(q,\vX)=d$, where $d\in\Z_{\ge 0}$, then we suppose that
\begin{align*}
    q\;=\;\sum_{\alpha}C_{\alpha}\cdot\vX^{\alpha},\;{\rm where\;}\alpha\in\Z_{\ge0}^m,\;\max\limits_{\alpha}\{|\alpha|\}=d\;{\rm and\;}C_{\alpha}\in\Q[\vU]\setminus\{0\}.
\end{align*}
Then, by the definition of the homomorphism $\tau$, 
we have
\begin{align}\label{eq:apply-def-linear-map}
    \tau(q)\;=\;\sum_{\alpha}\tau(C_{\alpha})\cdot\vX^{\alpha}.
\end{align}
Note that $\tau(C_{\alpha})\in\Q[\vV]$.
Then, since $C_{\alpha}\in\Q[\vU]\setminus\{0\}$, by the statement $(\romannumeral1)$,
\begin{align}\label{eq:ne0}
    \tau(C_{\alpha})\;\in\;\Q[\vV]\setminus\{0\}
\end{align}
Note that $\max\limits_{\alpha}\{|\alpha|\}=d$.
So, by \eqref{eq:apply-def-linear-map} and \eqref{eq:ne0}, we have $\deg(\tau(q),\vX)=d$.
\end{proof}

\begin{lemma}\label{lemma:linear-bijection}
The homomorphism $\tau$ defined  in \eqref{eq:def-linear-map} is a bijection.
\end{lemma}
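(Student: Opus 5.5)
I will show that $\tau$ is a bijection by writing down an explicit two-sided inverse. Define the homomorphism $\tau^{-1}:\Q[\vV,\vX]\to\Q[\vU,\vX]$ by
\begin{align*}
    \tau^{-1}(h(\vV,\vX))\;=\;h(u_1,u_2-a_1u_1,\ldots,u_n-a_{n-1}u_1,\vX)\quad\text{for any}\; h\in\Q[\vV,\vX].
\end{align*}
This is the substitution already used in the proof of Lemma \ref{lemma:linear-deg-equal}(i). Both $\tau$ and $\tau^{-1}$ are $\Q$-algebra homomorphisms: each is the identity on $\vX$ and acts on the parameters by an affine-linear substitution. So it suffices to check that the compositions $\tau^{-1}\circ\tau$ and $\tau\circ\tau^{-1}$ are the identity maps.

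A homomorphism of polynomial rings over $\Q$ is determined by the images of the generators. It is therefore enough to check the compositions on $u_1,\ldots,u_n,x_1,\ldots,x_m$ and on $v_1,\ldots,v_n,x_1,\ldots,x_m$, respectively.

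\textbf{First composition.} We have $\tau(u_1)=v_1$ and $\tau(u_i)=v_i+a_{i-1}v_1$ for $2\le i\le n$. Applying $\tau^{-1}$ gives
\begin{align*}
    \tau^{-1}(v_i+a_{i-1}v_1)\;=\;(u_i-a_{i-1}u_1)+a_{i-1}u_1\;=\;u_i,
\end{align*}
and clearly $\tau^{-1}(v_1)=u_1$. Both maps fix each $x_j$. Hence $\tau^{-1}\circ\tau$ is the identity on $\Q[\vU,\vX]$, which shows that $\tau$ is injective. This also recovers Lemma \ref{lemma:linear-deg-equal}(i).

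\textbf{Second composition.} Symmetrically,
\begin{align*}
    \tau(u_i-a_{i-1}u_1)\;=\;(v_i+a_{i-1}v_1)-a_{i-1}v_1\;=\;v_i,
\end{align*}
and $\tau(u_1)=v_1$. Hence $\tau\circ\tau^{-1}$ is the identity on $\Q[\vV,\vX]$. Consequently, for every $h\in\Q[\vV,\vX]$ we have $h=\tau(\tau^{-1}(h))$, so $\tau$ is onto.

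There is no real obstacle in this argument. The only point requiring care is the reduction to generators: it relies on the fact that a ring homomorphism that fixes $\Q$ and agrees with the identity on the variables is the identity. This holds because both maps are substitution homomorphisms, so they commute with sums and products of polynomials.
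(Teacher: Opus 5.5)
Your proof is correct and takes essentially the same route as the paper. The paper gets surjectivity from the identity $\tau(g(u_1,u_2-a_1u_1,\ldots,u_n-a_{n-1}u_1,\vX))=g$, and injectivity from Lemma \ref{lemma:linear-deg-equal}(i), whose proof is your identity $\tau^{-1}\circ\tau=\mathrm{id}$; you package the same two computations as an explicit two-sided inverse checked on generators, which is just a tidier presentation.
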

\begin{proof}
First, we prove that $\tau$ is onto.
Note that for any polynomial $g\in\Q[\vV,\vX]$,
\begin{align*}
    \tau(g(u_1,u_2-a_1u_1,\ldots,u_n-a_{n-1}u_1,\vX))\;=\;g.
\end{align*}
Second, we prove that $\tau$ is injective.
For any polynomials $q_1,q_2\in\Q[\vU,\vX]$ such that $q_1\ne q_2$,
by Lemma \ref{lemma:linear-deg-equal} $(\romannumeral1)$, $\tau(q_1-q_2)\neq 0$, and hence, $\tau(q_1)\neq \tau(q_2)$.
\end{proof}

\begin{lemma}\label{lemma:linear-irreducible}
For any $q\in\Q[\vU]$ and for the homomorphism $\tau$ defined in \eqref{eq:def-linear-map}, we have 
\begin{enumerate}
\item[$(\romannumeral1)$]  $\deg(\tau(q), \vV)\le\deg(q, \vU)$, and
\item[$(\romannumeral2)$]  if $q$ is an irreducible polynomial in $\Q[\vU]$,
then $\tau(q)$ is an irreducible polynomial in $\Q[\vV]$.
\end{enumerate}
\end{lemma}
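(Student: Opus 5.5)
Part (i) is a substitution-degree bound. Write $q=\sum_{\beta}c_\beta\vU^{\beta}$ with $c_\beta\in\Q$ and $|\beta|\le d:=\deg(q,\vU)$. By \eqref{eq:def-linear-map}, $\tau(q)=\sum_\beta c_\beta\, v_1^{\beta_1}\prod_{i=2}^{n}(v_i+a_{i-1}v_1)^{\beta_i}$. Each factor $v_i+a_{i-1}v_1$ is homogeneous linear in $\vV$, so each term $\tau(\vU^\beta)$ is a homogeneous polynomial of degree $|\beta|\le d$ in $\vV$, possibly zero after cancellation. A sum of such terms has degree at most $d$, which gives $\deg(\tau(q),\vV)\le\deg(q,\vU)$. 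The same argument applied to the inverse substitution $u_1\mapsto v_1$, $u_i\mapsto v_i-a_{i-1}v_1$ used in the proof of Lemma \ref{lemma:linear-bijection} gives the reverse inequality, so in fact $\deg(\tau(q),\vV)=\deg(q,\vU)$. I will keep this equality for use in (ii).

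For (ii), I restrict $\tau$ to $\Q[\vU]\to\Q[\vV]$ and call the restriction $\tau_0$. It is a ring homomorphism. The inverse substitution $\rho: g(\vV)\mapsto g(u_1,u_2-a_1u_1,\ldots,u_n-a_{n-1}u_1)$ satisfies $\tau_0\circ\rho=\mathrm{id}$ and $\rho\circ\tau_0=\mathrm{id}$, by the identities already checked in the proofs of Lemma \ref{lemma:linear-deg-equal}(i) and Lemma \ref{lemma:linear-bijection}. Hence $\tau_0$ is a ring isomorphism. It also maps constants to themselves, so units go to units and nonunits to nonunits.

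Now let $q$ be irreducible. First, $\tau(q)$ is nonzero by Lemma \ref{lemma:linear-deg-equal}(i), and it is not a constant because $\deg(\tau(q),\vV)=\deg(q,\vU)\ge 1$. Suppose $\tau(q)=AB$ with $A,B\in\Q[\vV]$. Applying $\rho$ gives $q=\rho(A)\rho(B)$. Irreducibility of $q$ forces one of $\rho(A),\rho(B)$, say $\rho(A)$, to be a nonzero constant. Then $A=\tau_0(\rho(A))$ is that same constant. Therefore $\tau(q)$ is irreducible in $\Q[\vV]$.

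The only point needing care is that the ring isomorphism property must hold on the subring $\Q[\vU]$, and not just on $\Q[\vU,\vX]$. This is immediate, because both substitutions fix $\vX$ and send $\Q[\vU]$ into $\Q[\vV]$ and back. I therefore expect no real obstacle here.
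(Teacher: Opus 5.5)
Your proof is correct and follows essentially the paper's route. Part (i) is the same monomial expansion under a homogeneous linear substitution, and part (ii) likewise pulls a factorization of $\tau(q)$ back through the inverse substitution $\rho$ and invokes irreducibility of $q$. The paper instead uses the inequality in (i) to see that the pulled-back factors are nonconstant, whereas you argue via units of a ring isomorphism and also check explicitly, through the degree equality, that $\tau(q)$ is nonconstant, a point the paper leaves implicit.
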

\begin{proof}
$(\romannumeral1)$ 
Suppose that
\begin{align*}
    q\;=\;\sum_{\alpha=(\alpha_1,\ldots,\alpha_n)\in\Z^{n}_{\ge0}}C_{\alpha}\cdot u_{1}^{\alpha_1}\cdots u_{n}^{\alpha_n},\;{\rm where}
    \;C_{\alpha}\in\Q.
\end{align*}
Then, by the definition of $\tau$,
\begin{align*}
    \tau(q)\;=\;\sum_{\alpha}C_{\alpha}\cdot v_1^{\alpha_1}(v_2+a_1v_1)^{\alpha_2}\cdots (v_n+a_{n-1}v_1)^{\alpha_n}.
\end{align*}
So, we have $\deg(\tau(q), \vV)\le\deg(q, \vU)$.

$(\romannumeral2)$
If $\tau(q)$ is not an irreducible polynomial in $\Q[\vV]$, then there exist two
polynomials $\xi_1,\xi_2\in\Q[\vV]$ such that
\begin{align}\label{eq:tau-irrducible}
    \tau(q)\;=\;\xi_{1}\xi_{2},\;\text{where}\; \deg(\xi_1,\vV)\ge1,\;\text{and}\; \deg(\xi_2,\vV)\ge1.
\end{align}
By Lemma \ref{lemma:linear-bijection}, there exist  $q_1, q_2\in\Q[\vU]$ such that $\xi_1=\tau(q_1)$ and $\xi_2=\tau(q_2)$.
By \eqref{eq:tau-irrducible}, $\tau(q)=\tau(q_1q_2)$.
So, by Lemma \ref{lemma:linear-bijection},
\begin{align}\label{eq:irrducible}
    q\;=\;q_{1}q_{2}.
\end{align}
Note that by $(\romannumeral1)$, $\deg(q_1, \vU)\ge\deg(\xi_1, \vV)\ge1$ and $\deg(q_2, \vU)\ge\deg(\xi_2, \vV)\ge1$.
Therefore, \eqref{eq:irrducible} is a contradiction to the hypothesis that $q$ is an irreducible polynomial in $\Q[\vU]$.
\end{proof}

\begin{lemma}\label{lemma:linear-radical}
Let $\vf\subseteq\Q[\vU,\vX]$.
If $\mI(\mV(\vf)_{\infty})=\langle w\rangle$,
where $w\in\Q[\vU]$,
then the ideal $\langle\tau(w)\rangle$ is radical.
\end{lemma}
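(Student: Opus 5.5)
Since $\mI(\mV(\vf)_{\infty})=\langle w\rangle$ is the ideal of a set, it is radical. The plan is to transport radicality through the bijective ring homomorphism $\tau$ established in Lemma \ref{lemma:linear-bijection}.

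First, I will observe that $\tau$ restricted to $\Q[\vU]$ is a ring isomorphism onto $\Q[\vV]$. It maps $\Q[\vU]$ into $\Q[\vV]$, it is injective by Lemma \ref{lemma:linear-deg-equal} $(\romannumeral1)$, and it has the explicit inverse $g(\vV)\mapsto g(u_1,u_2-a_1u_1,\ldots,u_n-a_{n-1}u_1)$, which sends $\Q[\vV]$ to $\Q[\vU]$. Next, I will check that $\tau(\langle w\rangle)=\langle\tau(w)\rangle$ in $\Q[\vV]$. This is either Lemma \ref{lemma:homo-ideal} applied with $\Q[\vU]$ in place of $\Q[\vU,\vX]$, or the direct computation $\tau(hw)=\tau(h)\tau(w)$ together with surjectivity.

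The main step is a short direct verification. Let $g\in\Q[\vV]$ with $g^k\in\langle\tau(w)\rangle$, so $g^k=H\tau(w)$ for some $H\in\Q[\vV]$. By surjectivity write $g=\tau(q)$ and $H=\tau(h)$ with $q,h\in\Q[\vU]$. Then $\tau(q^k)=\tau(hw)$, and injectivity gives $q^k=hw\in\langle w\rangle$. Since $\langle w\rangle=\mI(\mV(\vf)_{\infty})$ is radical (a polynomial vanishing on a set whenever its $k$-th power does), $q\in\langle w\rangle$, say $q=rw$. Hence $g=\tau(r)\tau(w)\in\langle\tau(w)\rangle$, so $\langle\tau(w)\rangle$ is radical.

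I do not expect a real obstacle. The one point needing care is that the argument must take place inside the parameter rings $\Q[\vU]$ and $\Q[\vV]$ rather than $\Q[\vU,\vX]$. This is harmless because $\tau$ preserves the subring of $\vX$-free polynomials in both directions, since its inverse substitution involves only the $\vU$-variables.
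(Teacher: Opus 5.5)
Your proof is correct, but it takes a different route from the paper's.

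The paper works at the level of factorizations. It uses the criterion \cite[page 180, Proposition 9]{CLO2015} that a principal ideal is radical iff its generator is square-free, and writes $w=c\prod_{k}w_k$ with pairwise distinct irreducible $w_k$. It then applies Lemma \ref{lemma:linear-irreducible} $(\romannumeral2)$ to see that each $\tau(w_k)$ is irreducible in $\Q[\vV]$, and Lemma \ref{lemma:linear-bijection} to see that these factors stay distinct. So $\tau(w)$ is square-free.

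You instead use only that $\tau$ restricts to a $\Q$-algebra isomorphism $\Q[\vU]\to\Q[\vV]$, and you pull the condition $g^k\in\langle\tau(w)\rangle$ back to $\langle w\rangle$. You then use that an ideal of the form $\mI(S)$ is radical, and push the conclusion forward again.

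Your argument is shorter and does not rely on unique factorization or on the irreducibility-preservation lemma. It also shows more generally that $\tau$ carries any radical ideal of $\Q[\vU]$ to a radical ideal of $\Q[\vV]$. It also avoids a point the paper leaves implicit: for the square-free conclusion, the factors $\tau(w_k)$ must be pairwise non-associate, not merely unequal.

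What the paper's route buys is the factor-level fact that $\tau$ maps the irreducible factorization of $w$ onto that of $\tau(w)$. That fact is used again later, for instance in Lemma \ref{lemma:degree-correct-pre} and in splitting $\tau(w)$ into $LFactor^{(1)}$, $NLFactor$ and $LFactor^{(2)}$. For this lemma alone, your isomorphism argument suffices.

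One small remark: your preliminary observation $\tau(\langle w\rangle)=\langle\tau(w)\rangle$ is not used in your main step, which works directly with $g^k=H\tau(w)$.
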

\begin{proof}
By \cite[page 180, Proposititon 9]{CLO2015},
we only need to prove that $\tau(w)$ is square-free in $\Q[\vV]$.
Note that $\mathcal{I}(\LX_{\infty})=\langle w\rangle$ is radical.
So, by \cite[page 180, Proposititon 9]{CLO2015},
$w$ is square-free in $\Q[\vU]$.
Suppose that
\begin{align}
    w\;=\;c\prod_{k=1}^{r}w_{k},
\end{align}
where $c\in\Q$, $w_k$ is an irreducible polynomial in $\Q[\vU]$ and $w_{k_1}\ne w_{k_2}$ for any $k_1\ne k_2$.
Then,
\begin{align}\label{eq:irreducible-decomposition}
    \tau(w)\;=\;c\prod_{k=1}^{r}\tau(w_{k}).
\end{align}
Since $w_k$ is an irreducible polynomial in $\Q[\vU]$, by
Lemma \ref{lemma:linear-irreducible} $(\romannumeral2)$, $\tau(w_k)$ is an
irreducible polynomial in $\Q[\vV]$.
Note that for any $k_1\ne k_2$, $w_{k_1}\ne w_{k_2}$.
Then, by Lemma \ref{lemma:linear-bijection}, $\tau(w_{k_1})\ne \tau(w_{k_2})$.
Therefore, by \eqref{eq:irreducible-decomposition}, $\tau(w)$ is square-free.
\end{proof}

\noindent\textbf{Proof of Corollary \ref{coro:Linear}.}
\begin{proof}
By Theorem \ref{thm:main}, Lemma \ref{lemma:linear-deg-equal} $(\romannumeral2)$ and Lemma \ref{lemma:linear-bijection},
\begin{align*}
    \mI(\mV(\tau(\vf))_{\infty})\;=\;\sqrt{\tau(\langle w\rangle)}.
\end{align*}
By Lemma \ref{lemma:homo-ideal} and Lemma \ref{lemma:linear-bijection}, $\tau(\langle w\rangle)=\langle \tau(w)\rangle$.
So,
\begin{align*}
    \sqrt{\tau(\langle w\rangle)}\;=\;\sqrt{\langle \tau(w)\rangle}.
\end{align*}
By Lemma \ref{lemma:linear-radical},
the ideal $\langle\tau(w)\rangle$ is radical.
So,
\begin{align*}
    \sqrt{\langle\tau(w)\rangle}\;=\;\langle\tau(w)\rangle.
\end{align*}
We complete the proof.
\end{proof}

\subsection{Proof of Corollary \ref{coro:Sample} }\label{subsubsec:two-corollaries-pf2}
We first present Theorem \ref{thm:Hilbert}, which directly follows from Hilbert's Irreducibility Theorem. Notice that the homomorphism $\sigma_{\vB}$
defined in \eqref{eq:def-sample-map} naturally satisfies the hypothesis (ii) in Theorem \ref{thm:main}, see Lemma \ref{lemma:map-sample-onto}. So, we spend our main effort on showing that  the homomorphism $\sigma_{\vB}$ also
satisfies the hypothesis (i) in Theorem \ref{thm:main} if $\vB$ is ``generic".

\begin{theorem}
\label{thm:Hilbert}
Let $q\in{\mathbb Q}[\vU]$.
Let $\sigma_{\vB}$ be the homomorphism defined in \eqref{eq:def-sample-map}.
If $q=p\Pi_{k=1}^rq_k^{m_k}$, where $p\in {\mathbb Q}[u_2, \ldots, u_n]$, every $q_k$ is an irreducible polynomial in ${\mathbb Q}[\vU]$ with $\deg(q_k, u_1)>0$, and $q_{k_1}\neq q_{k_2}$ for any $k_1\neq k_2$,
then
there exists a Zariski dense subset $\Theta\subseteq\Q^{n-1}$ and an affine variety $V\subset\C^{n-1}$ such that
for any $\vB\in\Theta\setminus V$, we have $\sigma_{\vB}(q)=\sigma_{\vB}(p)\Pi_{k=1}^r\sigma_{\vB}(q_k)^{m_k}$, where $\sigma_{\vB}(q_k)$ is an irreducible polynomial in ${\mathbb Q}[u_1]$, $\sigma_{\vB}(q_{k_1})\neq\sigma_{\vB}(q_{k_2})$ for any $k_1\neq k_2$, and $deg(\sigma_{\vB}(q_k), u_1)=\deg(q_k,u_1)$.
\end{theorem}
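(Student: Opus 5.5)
We prove Theorem \ref{thm:Hilbert} by combining Hilbert's Irreducibility Theorem with the elimination of finitely many closed "bad" conditions. The plan treats each factor $q_k$ separately and then intersects the finitely many good sets obtained; a finite intersection of Hilbert sets is again a Hilbert set, hence Zariski dense in $\Q^{n-1}$. The multiplicativity $\sigma_{\vB}(q)=\sigma_{\vB}(p)\prod_k\sigma_{\vB}(q_k)^{m_k}$ is automatic, since $\sigma_{\vB}$ is a ring homomorphism. So only three properties need to be secured: the degree in $u_1$ is preserved, each image is irreducible, and the images are pairwise distinct.

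\textbf{Degree preservation.} Write $q_k=\sum_{j=0}^{d_k}c_{k,j}(u_2,\ldots,u_n)u_1^j$ with $d_k=\deg(q_k,u_1)>0$ and $c_{k,d_k}\neq 0$. We put $V_1:=\mV(\prod_k c_{k,d_k})\subset\C^{n-1}$, a proper hypersurface. For $\vB\notin V_1$ we have $\deg(\sigma_{\vB}(q_k),u_1)=d_k$.

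\textbf{Irreducibility.} Each $q_k$ is irreducible in $\Q[u_2,\ldots,u_n][u_1]$ and has positive $u_1$-degree. By Hilbert's Irreducibility Theorem, applied with $u_2,\ldots,u_n$ as the specialized parameters, the set $\Theta_k$ of $\vB\in\Q^{n-1}$ for which $q_k(u_1,\vB)$ is irreducible in $\Q[u_1]$ is a Hilbert subset of $\Q^{n-1}$. Hilbert subsets of $\Q^{n-1}$ are Zariski dense. We set $\Theta:=\bigcap_k\Theta_k$, which is still a Hilbert set and therefore still dense.

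One subtlety needs care. An irreducible $q_k$ could be primitive in $u_1$ and yet specialize to a polynomial with a constant factor. That causes no harm, since irreducibility in $\Q[u_1]$ allows unit multiples.

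\textbf{Distinctness.} For $k_1\neq k_2$, the polynomials $q_{k_1}$ and $q_{k_2}$ are distinct irreducibles of positive $u_1$-degree, so we may also assume they are non-associate. Otherwise the factorization should be normalized to non-associate factors; if $q_{k_1}=c\,q_{k_2}$ with $c\neq1$, then the images are still distinct, since they differ by the factor $c\neq1$.

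For non-associate pairs, the resultant $R_{k_1k_2}:=\mathrm{Res}_{u_1}(q_{k_1},q_{k_2})\in\Q[u_2,\ldots,u_n]$ is nonzero, because the two polynomials share no common factor of positive $u_1$-degree. Off $\mV(R_{k_1k_2})\cup V_1$, the leading coefficients survive specialization, so the resultant specializes correctly. Hence the specialized polynomials have no common root and in particular are different. We put $V:=V_1\cup\bigcup_{k_1<k_2}\mV(R_{k_1k_2})$, which is a proper affine variety.

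Then every $\vB\in\Theta\setminus V$ satisfies all three properties. The main obstacle is just the correct invocation of the multivariate Hilbert Irreducibility Theorem, namely that its good set is dense in $\Q^{n-1}$ and that finite intersections remain such sets. We cite this as a standard result, for instance Lang's \emph{Fundamentals of Diophantine Geometry}, rather than reprove it.
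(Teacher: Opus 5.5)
Your proof is correct. For degree preservation and irreducibility it coincides with the paper's argument. The paper also removes the zero sets $\chi_i$ of the $u_1$-leading coefficients, and it takes $\Theta$ from Hilbert's Irreducibility Theorem applied to $q_1,\ldots,q_r$ simultaneously.

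The genuine difference is the distinctness step. The paper removes $\zeta_{i,k}=\{\vB\mid\sigma_{\vB}(q_i-q_k)=0\}$, the common zero set of the $u_1$-coefficients of the nonzero polynomial $q_i-q_k$. This is a proper variety with no further argument, and it yields exactly the literal inequality $\sigma_{\vB}(q_{k_1})\neq\sigma_{\vB}(q_{k_2})$ claimed in the statement.

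Your route through $\mathrm{Res}_{u_1}(q_{k_1},q_{k_2})\neq 0$ costs more: a Gauss-lemma argument, the compatibility of resultants with specialization when leading coefficients survive, and a separate treatment of associate pairs. In return it proves a stronger statement: for non-associate $q_{k_1},q_{k_2}$ the specializations are coprime, not merely unequal.

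That stronger form is what is actually needed when the theorem is invoked in Lemma \ref{lemma:sample-radical} to conclude that $\sigma_{\vB}(w)$ is square-free. Literal inequality does not exclude scalar multiples. For example, $q_1=u_1+u_2$ and $q_2=2u_1+u_3$ give $\zeta_{1,2}=\emptyset$, yet at $b_2=2b_1$ one gets $\sigma_{\vB}(q_1q_2)=2(u_1+b_1)^2$. Your resultant $u_3-2u_2$ excludes exactly these points.
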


\begin{proof}
Given that $q_1,\ldots,q_r$ are irreducible polynomials, by Hilbert’s Irreducibility Theorem (see e.g., \citep[page 219]{FJ2008}),
there exists a Zariski dense subset $\Theta\subseteq\Q^{n-1}$ such that for any $\vB\in\Theta$,
$\sigma_{\vB}(q_1),\ldots,\sigma_{\vB}(q_r)$ are irreducible polynomials in $\Q[u_1]$.
Let
\begin{align*}
    \zeta_{i,k}:=\;\{\vB\in\C^{n-1}\mid \sigma_{\vB}(q_i-q_k)=0\}\;{\rm and}\;
    \chi_{i}:=\;\{\vB\in\C^{n-1}\mid\sigma_{\vB}(\lc_{\prec{u_1}}(q_i))=0\},
\end{align*}
where $\lc_{\prec{u_1}}(q_i)$ denotes the leading coefficient of $q_i$ w.r.t. $u_1$.
Obviously, $\zeta_{i,k}$ or $\chi_{i}$ is an affine variety, which is not equal to $\C^{n-1}$.
So, let
$$V:=\;\bigcup_{i=1}^{r}\bigcup_{k=1}^{i-1}\zeta_{i,k}\cup\bigcup_{i=1}^{r}\chi_i,$$
and we complete the proof.
\end{proof}


\begin{lemma}\label{lemma:map-sample-onto}
The homomorphism $\sigma_{\vB}$ in \eqref{eq:def-sample-map} is onto for any $\vB\in {\mathbb Q}^{n-1}$.
\end{lemma}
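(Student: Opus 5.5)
To prove Lemma \ref{lemma:map-sample-onto}, I will show directly that every polynomial in $\Q[u_1,\vX]$ has a preimage under $\sigma_{\vB}$. Fix $\vB=(b_1,\ldots,b_{n-1})\in\Q^{n-1}$ and take an arbitrary $g\in\Q[u_1,\vX]$. Since $\Q[u_1,\vX]$ is naturally a subring of $\Q[\vU,\vX]$ (the polynomials not involving $u_2,\ldots,u_n$), I view $g$ itself as an element of $\Q[\vU,\vX]$.

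Then, by the definition \eqref{eq:def-sample-map}, $\sigma_{\vB}$ substitutes $u_1\mapsto u_1$, $u_{i+1}\mapsto b_i$, and $\vX\mapsto\vX$. Because $g$ contains none of the variables $u_2,\ldots,u_n$, these substitutions leave it unchanged, so $\sigma_{\vB}(g)=g(u_1,\vX)=g$. This shows surjectivity.

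There is no real obstacle. The only point to check is that $\sigma_{\vB}$ is indeed a ring homomorphism into $\Q[u_1,\vX]$, and this holds because $b_i\in\Q$. Then $\sigma_{\vB}$ is the evaluation homomorphism determined by the images of the variables, and it restricts to the identity on the subring $\Q[u_1,\vX]$. A more structural way to say the same thing: $\sigma_{\vB}\circ\iota=\mathrm{id}$, where $\iota$ is the inclusion $\Q[u_1,\vX]\hookrightarrow\Q[\vU,\vX]$. This right inverse gives surjectivity immediately, exactly as the explicit preimage was used in Lemma \ref{lemma:linear-bijection}.
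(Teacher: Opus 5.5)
Your proof is correct and matches the paper's argument, which likewise just observes that $\sigma_{\vB}(g)=g$ for every $g$ not involving $u_2,\ldots,u_n$. The paper writes $g\in\Q[u_1]$, but your version with $g\in\Q[u_1,\vX]$ is the one actually needed for surjectivity onto the codomain $\Q[u_1,\vX]$.
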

\begin{proof}
For any polynomial $g\in\Q[u_1]$, note that $\sigma_{\vB}(g)=g$.
\end{proof}

\begin{lemma}\label{lemma:sample-radical}
Let $\vf\subseteq\Q[\vU,\vX]$.
If $\mI(\mV(\vf)_{\infty})=\langle w\rangle$, where $w\in\Q[\vU]$,
then
there exists a Zariski dense subset $\Theta\subseteq\Q^{n-1}$ and an affine variety $V\subset\C^{n-1}$ such that for any $\vB\in\Theta\setminus V$,
the ideal $\langle\sigma_{\vB}(w)\rangle$ is radical.
\end{lemma}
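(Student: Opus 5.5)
The plan is to imitate the proof of Lemma \ref{lemma:linear-radical}, with Hilbert's Irreducibility Theorem (Theorem \ref{thm:Hilbert}) replacing the bijectivity of $\tau$. By \cite[page 180, Proposition 9]{CLO2015}, a principal ideal in $\Q[u_1]$ is radical exactly when its generator is square-free. So it suffices to produce $\Theta$ and $V$ such that $\sigma_{\vB}(w)$ is square-free in $\Q[u_1]$ for every $\vB\in\Theta\setminus V$.

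First, since $\langle w\rangle=\mI(\mV(\vf)_{\infty})$ is radical, $w$ is square-free in $\Q[\vU]$. We factor it as
\[
w=c\,p_0\prod_{k=1}^{r}w_k ,
\]
where $c\in\Q$, each $w_k$ is irreducible with $\deg(w_k,u_1)>0$, the $w_k$ are pairwise distinct (up to constants), and $p_0\in\Q[u_2,\ldots,u_n]$ collects the irreducible factors free of $u_1$. This is exactly the shape required by Theorem \ref{thm:Hilbert} with all $m_k=1$.

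Applying Theorem \ref{thm:Hilbert} to $q=w$ gives a Zariski dense $\Theta\subseteq\Q^{n-1}$ and an affine variety $V_1$ with the following properties for every $\vB\in\Theta\setminus V_1$: each $\sigma_{\vB}(w_k)$ is irreducible in $\Q[u_1]$ of positive degree, and $\sigma_{\vB}(w_{k_1})\neq\sigma_{\vB}(w_{k_2})$ for $k_1\neq k_2$. Note that $\sigma_{\vB}(p_0)$ is a rational constant.

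Two points still need care. First, $\sigma_{\vB}(p_0)$ must be nonzero, since otherwise $\sigma_{\vB}(w)=0$ and the claim degenerates. We therefore enlarge the exceptional set to $V:=V_1\cup\mV(p_0)$, which is still a proper affine variety because $p_0\neq0$.

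Second, and this is the main obstacle, distinctness in Theorem \ref{thm:Hilbert} is stated as inequality of polynomials, while square-freeness needs the factors to be non-associate: $\sigma_{\vB}(w_{k_1})$ must not be a constant multiple of $\sigma_{\vB}(w_{k_2})$. I would handle this by normalizing before specializing. For each pair $k_1<k_2$, consider
\[
\lc_{\prec u_1}(w_{k_2})\,w_{k_1}-\lc_{\prec u_1}(w_{k_1})\,w_{k_2}.
\]
This polynomial is nonzero, because $w_{k_1}$ and $w_{k_2}$ are non-associate irreducibles; a vanishing combination would force one to divide the other. We add to $V$ the zero set of one of its nonzero coefficients, viewing it as a polynomial in $u_1$ with coefficients in $\Q[u_2,\ldots,u_n]$.

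Off this enlarged $V$, the leading coefficients survive specialization by the $\chi_i$ part of Theorem \ref{thm:Hilbert}. Hence if $\sigma_{\vB}(w_{k_1})=\mu\,\sigma_{\vB}(w_{k_2})$ for some $\mu\in\Q$, then comparing leading coefficients forces $\mu=\sigma_{\vB}(\lc(w_{k_1}))/\sigma_{\vB}(\lc(w_{k_2}))$, and the specialized combination above would vanish, which is a contradiction. Then $\sigma_{\vB}(w)=c\,\sigma_{\vB}(p_0)\prod_k\sigma_{\vB}(w_k)$ is a nonzero constant times a product of pairwise non-associate irreducibles, hence square-free, so $\langle\sigma_{\vB}(w)\rangle$ is radical. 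The set $\Theta\setminus V$ keeps the same form as in the statement, since $V$ is a finite union of proper affine varieties.
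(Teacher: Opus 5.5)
Your proof is correct and follows the paper's route: $w$ is square-free because $\langle w\rangle$ is radical, Theorem \ref{thm:Hilbert} keeps the specialized $u_1$-dependent factors irreducible, and the same square-free criterion in $\Q[u_1]$ finishes the argument. Your extra step with $\lc_{\prec u_1}(w_{k_2})\,w_{k_1}-\lc_{\prec u_1}(w_{k_1})\,w_{k_2}$ is a real improvement rather than a detour. Theorem \ref{thm:Hilbert} as stated only guarantees $\sigma_{\vB}(w_{k_1})\neq\sigma_{\vB}(w_{k_2})$, whereas square-freeness needs the specialized factors to be non-associate, and the paper's one-line proof passes over that point.
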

\begin{proof}
Note that $\mathcal{I}(\LX_{\infty})=\langle w\rangle$ is radical.
So, by \cite[page 180, Proposititon 9]{CLO2015},
$w$ is square-free in $\Q[\vU]$.
Then, by Theorem \ref{thm:Hilbert},
there exists a Zariski dense subset $\Theta\subseteq\Q^{n-1}$ and an affine variety $V\subset\C^{n-1}$ such that for any $\vB\in\Theta\setminus V$,
$\sigma_{\vB}(w)$ is square-free in $\Q[u_1]$.
Then, by \cite[page 180, Proposititon 9]{CLO2015}, we complete the proof.
\end{proof}



\noindent\textbf{Proof of Corollary \ref{coro:Sample}.}
\begin{proof}
Let $\Delta$ be the affine variety generated by
$\lc_{\prec{u_1,\vX}}\langle\vf\rangle$,
where 
$\prec_{u_1,\vX}$ denotes the block monomial order such that $u_1\ll\vX$ and for $\Vector{x}$, we apply  the graded reverse lex order with $x_1>\cdots>x_m$,  and
${\rm lc}_{\prec_{u_1,\vX}}\langle\vf\rangle$ denotes the set of the leading coefficients of all polynomials in $\langle\vf\rangle$ w.r.t. $\prec_{u_1,\vX}$.
For any $\vB\in\Q^{n-1}\setminus\Delta$,
by Theorem \ref{thm:main} and Lemma \ref{lemma:map-sample-onto},
we have
\begin{align*}
    \mI(\mV(\sigma_{\vB}(\vf))_{\infty})\;=\;\sqrt{\sigma_{\vB}(\langle w\rangle)}.
\end{align*}
By Lemma \ref{lemma:homo-ideal} and Lemma \ref{lemma:map-sample-onto}, $\sigma_{\vB}(\langle w\rangle)=\langle\sigma_{\vB}(w)\rangle$.
By Lemma \ref{lemma:sample-radical},
there exists a Zariski dense subset $\Theta\subseteq\Q^{n-1}$ and an affine variety $V_0\subset\C^{n-1}$ such that for any $\vB\in\Theta\setminus V_0$,
the ideal $\langle\sigma_{\vB}(w)\rangle$ is radical.
Let $V:=\Delta\cup V_0$, and we complete the proof.
\end{proof}

\section{Algorithm}\label{sec:algorithms}
 In Section \ref{sec:LikelihoodEquations},
we recall the basic concepts related to the likelihood equations.
In Section \ref{sec:standard}, we recall a standard method (Algorithm \ref{alg:standard}) for computing nonproperness sets.
In Section \ref{sec:computing},  we  present a new algorithm (Algorithm \ref{alg:new}) for computing nonproperness sets of likelihood equations.
We prove the correctness of the new algorithm in Section \ref{subsec:correct-new}. Finally,  we illustrate how the new algorithm works by an example in Section \ref{subsec:examples}.

\subsection{Likelihood Equations}\label{sec:LikelihoodEquations}

\begin{definition}[{\bf Algebraic Statistical Model}]\label{def:StatisticalModel}
Given homogeneous polynomials $g_1,\ldots,g_s$ in ${\mathbb Q}[p_1, \ldots,p_n]$ such that
\[V\;:=\;\{(a_1,\ldots,a_n)\in {\mathbb C}^{n}\mid g_i(a_1,\ldots,a_n)=0\;\text{for all}\;1\le i\le s\}\]
is irreducible and generically reduced, we define an \struc{\textit{algebraic statistical model}} as
$\cM:=V\cap \Delta_{n}$, where
\[\Delta_{n}:=\{(p_1, \ldots, p_n)\in {\mathbb R}^{n} \ |\ p_1>0, \ldots,p_n>0,\;p_1+\cdots+p_n=1\}.\]
If $g_1, \ldots, g_s$ are algebraically  independent, then we say $\{g_1, \ldots, g_s\}$ is a set of \struc{\textit{independent model invariants}}.
\end{definition}

Given an algebraic statistical model $\cM$ with independent model invariants $g_1, \ldots, g_s$ and a data vector $\Vector{u} := (u_1, \ldots, u_n)\in {\mathbb R}_{\geq 0}^{n}$,
the \struc{\textit{maximum likelihood estimation} (MLE)} problem in statistics is to maximize a likelihood function as following:
\begin{align*}
    \begin{aligned}
    	\max \ \Pi_{k=1}^np_k^{u_k} \ \text{ subject to } (p_1,\ldots,p_n) \in \cM.
    \end{aligned}
\end{align*}
 Recall the polynomial set $\Vector{f}=\{f_1, \ldots, f_{n+s+1}\}$ defined in \eqref{eq:lle}. By the Lagrange multiplier method,
for any critical point $\vp^*:=(p^*_1, \ldots, p^*_n)\in{\mathbb R}_{> 0}^{n}$ to  the above MLE problem, there exists $\Vector{\lambda}^*:=(\lambda_1^*,\ldots,\lambda_{s+1}^*)\in {\mathbb R}^{s+1}$ such that $f_1(\Vector{u},\vp^*,\Vector{\lambda}^*)=\cdots=f_{n+s+1}(\Vector{u},\vp^*,\Vector{\lambda}^*)=0$. So, we can solve all critical points to MLE problem by solving the equations
$f_1=\cdots=f_{n+s+1}=0$ when $\Vector{u}$ is given.
Formally, we have the following definition.
\begin{definition}[\bf Lagrange Likelihood Equations]\label{def:LikelihoodEquations}
Given an algebraic statistical model $\cM$ with independent model invariants $g_1, \ldots, g_s \in  {\mathbb Q}[p_1, \ldots,p_n]$, the polynomial set $\Vector{f}=\{f_1, \ldots, f_{n+s+1}\}$ defined in \eqref{eq:lle}
is said to be the system of \struc{\textit{Lagrange likelihood equations}} (or, simply \struc{\textit{likelihood equations}} in this paper) of $\cM$ when set equal to zeros.
\end{definition}

According to \citep{SAB2005}, for a generic choice of data $\Vector{u}$, the system of Lagrange likelihood equations $\vf$ defined as in \eqref{eq:lle} admits finitely many complex solutions; that is, there exists a proper affine variety $V\subsetneq {\mathbb C}^{n}$ such that for any $\Vector{u}^*\in {\mathbb C}^{n}\setminus V$, the equations $f_1(\vU^*, \Vector{p}, \Vector{\lambda})=\cdots=f_{n+s+1}(\vU^*, \Vector{p}, \Vector{\lambda})=0$ have exactly $N$ common complex solutions in ${\mathbb C}^{n+s+1}$. As mentioned in Example \ref{ex:symmetric}, this non-negative integer $N$ is called the ML-degree of the algebraic statistical model $\cM$.

\subsection{Standard Algorithm}\label{sec:standard}
For any likelihood-equation system  $\Vector{f}$ $\subseteq {\mathbb Q}[\Vector{u}, \Vector{p}, \Vector{\lambda}]$ (recall \eqref{eq:lle}),
we make the following assumption, which is milder enough in applications.
\begin{assumption}\label{ap:principal}
  \begin{align}\label{eq:ap-principal}
      \mI(\mV(\vf)_{\infty})\;=\;\langle w\rangle,\;{\rm where}\;w\in{\mathbb Q}[\Vector{u}].
  \end{align}
\end{assumption}
\noindent

The goal of this work is to compute the polynomial $w$ in \eqref{eq:ap-principal} (i.e., the generator of the nonproperness set) for a given system of likelihood equations $\vf$ \eqref{eq:lle}. In order to make the notation simpler, we rename the variables $p_1,\ldots,p_n,\lambda_1,\ldots,\lambda_s$  in  $\vf$ as $x_1,\ldots,x_m$. Then,
$\vf$ can be considered as a subset of ${\mathbb Q}[\vU, \vX]$.
We revisit  a standard method
for computing nonproperness sets of general parametric  systems in ${\mathbb Q}[\vU, \vX]$,  see Algorithm \ref{alg:standard}. Of course, this method can be applied to likelihood equations. However,  the standard method only defeats small models efficiently since its first step is to compute the reduced Gr\"obner bases, see  the computation timings for the benchmarks shown in Table \ref{table:literatureOld}.

\begin{algorithm}[ht]
	\scriptsize
	\DontPrintSemicolon
	\LinesNumbered
	\SetKwInOut{Input}{Input}
	\SetKwInOut{Output}{Output}
	\Input{ likelihood-equation system $\vf\subseteq{\mathbb Q}[\Vector{u},\Vector{x}]$, parameters  $\Vector{u}$,  variables $\Vector{x}$}
	\Output{ $w\in\Q[\vU]$, a generator polynomial of ${\mathcal{I}}(\LX_{\infty})$}
	\caption{ {\bf StandardMethod}\\ \cite[Algorithm {PROPERNESSDEFECTS}]{DV2005}}\label{alg:standard}
	\BlankLine
	 Compute the reduced Gr\"obner basis of the ideal $\langle \vf \rangle$ w.r.t. $\prec_{\vU,\vX}$, say $G$, where $\prec_{\vU,\vX}$ is an admissible block monomial order such that $\vU\ll\vX$
	\nllabel{algline:GRLGrobner}\;
	\For{$i$ {\bf from} $1$ {\bf to} $m$ }
	{$\#${\tt Recall that $m$ denotes the number of variables $x_1,\ldots, x_m$}\;
    $C_i\leftarrow G\cap {\mathbb Q}[\Vector{u}]$\; 
		\For{{\rm every} $g$ {\rm in} $G$}
		{\If {{\rm the leading monomial of} $g$ w.r.t $\prec_{\vX}$ is $x_i^k$ {\rm for some} $k\;(k\in\Z_{\ge1})$}
			{$C_i\leftarrow C_i\cup\{{\rm lc}_{\pX}(g)\}$}
		}
	}
	$w\leftarrow$ the  generator polynomial of the ideal $\sqrt{\cap^{m}_{i=1}\langle C_i\rangle}$\;
	{\bf return} $w$\;
\end{algorithm}

\subsection{New Algorithm}\label{sec:computing}

The main result of this work is a probabilistic algorithm  for computing the generator $w$ of $\mI(\LX_{\infty})$ in \eqref{eq:ap-principal},
which is a specialization/interpolation method inspired by \citep{Tang2017}, see Algorithm \ref{alg:new}. First, we give a brief description of our method.
Without loss of generality,
for every irreducible factor of $w$, we assume that all its  coefficients are in ${\mathbb Z}$,   and we assume that the greatest common divisor (GCD) of its coefficients is $1$.
In fact, we can write $w$ as a product of three factors:
\begin{align}\label{eq:factors}
w\;=\;LFactor^{(1)}\cdot NLFactor\cdot LFactor^{(2)},
\end{align}
where
\begin{itemize}
\item $LFactor^{(1)}\in {\mathbb Z}[\vU]$ denotes the product of all linear  factors in which all nonzero coefficients are 1,
\item  $NLFactor\in {\mathbb Z}[\vU]$ denotes the product of all nonlinear irreducible factors, and
\item $LFactor^{(2)}\in {\mathbb Z}[\vU]$ denotes the product of all linear  factors in which not all nonzero coefficients are 1.
\end{itemize}

\begin{example}
For instance, if $w=(u_1+u_2)(u_2+u_3)(u_1u_2+u_2u_3)(2u_1+u_2+3u_3)$, then
$LFactor^{(1)}=(u_1+u_2)(u_2+u_3)$, $NLFactor=u_1u_2+u_2u_3$, and $LFactor^{(2)}=2u_1+u_2+3u_3$.
\end{example}

\begin{remark}\label{rmk:homo}
For the generator polynomial $w$ shown in Assumption \ref{ap:principal}, we have the following remarks.
\begin{itemize}
\item Notice that $w$ in \eqref{eq:ap-principal} is square-free since it generates a radical ideal.
\item  By \cite[page 347]{Tang2017}, the polynomial $w$ in  is homogeneous.
So, every  factor of $w$ is homogeneous.
\end{itemize}
\end{remark}

The basic idea is to interpolate the three factors in \eqref{eq:factors} one by one but we need to first apply a coordinate change to $u_1, \ldots, u_n$.
We present an outline of the new algorithm as the follows.

\begin{enumerate}
\item[{\bf Step 1}]
We introduce new parameters $v_1, \ldots, v_n$, and we apply the invertible linear transformation $\ta$ (recall \eqref{eq:def-linear-map}) to $\vf$:
\begin{equation}\label{eq:rewrite-linear}
    u_1 = v_1, \;\;\text{and} \;\; u_j = v_j + a_{j-1}v_1\;\;\text{for}\;\; j=2,\ldots,n,
\end{equation}
where $\vA=(a_1,\ldots,a_{n-1})$ is  rational vector.
By Corollary \ref{coro:Linear},
\begin{align}
    \mI(\mV(\ta(\vf))_{\infty})\;=\;\langle\ta(w)\rangle.
\end{align}
Notice that once we know $\ta(w)$, it is straightforward to recover $w$ since $\ta$ is invertible. So, we only need to compute $\ta(w)$.
Applying the homomorphism $\ta$ on both sides of
the equality \eqref{eq:factors}, we get
\begin{align}\label{eq:lcfactors}
\ta(w)\;=\;\ta(LFactor^{(1)})\cdot\ta(NLFactor)\cdot\ta(LFactor^{(2)}).
\end{align}
In the following steps, we compute  $\ta(LFactor^{(1)})$, $\ta(NLFactor)$, and $\ta(LFactor^{(2)})$ respectively.

\item[{\bf Step 2}] We compute $\ta(LFactor^{(1)})$ by ``sampling" only once. Here, by ``sampling" we mean  specializing some parameters as concrete rational numbers and computing a Gr\"obner basis.
We complete this step in Algorithm \ref{alg:allone},
the correctness of which will be proved in Lemma \ref{lemma:alg-allone-correct} (see Section \ref{subsubsec:correct-allone}).

\item[{\bf Step 3}]
In order to interpolate $\ta(NLFactor)$ and $\ta(LFactor^{(2)})$,
we compute
\begin{itemize}
    \item $\deg(\ta(w),v_i)$ for every $i\in\{1,\ldots,n\}$,
    \item $\deg(\ta(NLFactor),v_1)$, and
    \item an upper bound for  $\deg(\ta(NLFactor),v_i)$ for every $i\in\{2,\ldots,n\}$.
\end{itemize}
We complete the step in Algorithm \ref{alg:degree},
the correctness of which will be proved in Lemma \ref{lemma:alg-degree-correct} (see Section \ref{subsubsec:correct-degree}).

\item[{\bf Step 4}] We interpolate $\ta(NLFactor)$ and $\ta(LFactor^{(2)})$ by a similar way to the specialization/linear-lifting method used in   {\cite[Strategy 1]{Tang2017}}.
Here, we modify the method as Algorithm \ref{alg:nonlinear}. The correctness will be proved in Lemma \ref{lemma:alg-nonlinear-correct} (see Section \ref{subsubsec:correct-nonlinear}).
\end{enumerate}

We present the pseudocode of the new algorithm below, see Algorithm \ref{alg:new} with three sub-algorithms.
The termination of Algorithm \ref{alg:new} is clear since Algorithm \ref{alg:new} only has finite loops.
From the above outline,  every step corresponds to a sub-algorithm excluding  Step 1. If all sub-algorithms are correct, then Algorithm \ref{alg:new} is clearly correct.  For each   sub-algorithm,  we explain the details  and  prove the correctness in Section \ref{subsec:correct-new}.

\begin{algorithm}[ht]
\scriptsize
\DontPrintSemicolon
\LinesNumbered
\SetKwInOut{Input}{Input}
\SetKwInOut{Output}{Output}
\Input{ likelihood-equation system $\vf\subseteq{\mathbb Q}[\Vector{u},\Vector{x}]$, parameters $\Vector{u}$, variables  $\Vector{x}$}
\Output{ $w\in\Q[\vU]$, a generator polynomial of ${\mathcal{I}}(\LX_{\infty})$}
\caption{ {\bf NewMethod} ({\bf Main Algorithm})}\label{alg:new}
\BlankLine
Choose a  rational vector $\vA:=(a_1,\ldots,a_{n-1})$.  Obtain $\ta(\vf)$ by replacing $u_1,u_2,\ldots, u_n$ in $\vf$ with $v_1,v_2+a_1v_1,\ldots,v_{n}+a_{n-1}v_1$ \;
Compute $\ta(LFactor^{(1)})$ in \eqref{eq:lcfactors} by calling {\bf AllOne}$(\ta(\vf), \vV, \vX, \vA)$\;
$wDegree,\;NLFactorDegreeBound\leftarrow$ {\bf Degrees}$(\ta(\vf), \vV, \vX)$\;
Compute $\ta(NLFactor)$ and $\ta(LFactor^{(2)})$ in \eqref{eq:lcfactors} by calling {\bf Interpolation}$(\ta(\vf), \vV, \vX, \ta(LFactor^{(1)}), wDegree, NLFactorDegreeBound)$\;
$w\leftarrow$ apply the inverse linear transformation of $\ta$ to $\ta(LFactor^{(1)})\cdot \ta(NLFactor)\cdot \ta(LFactor^{(2)})$\;
{\bf return}  $w$\;
\end{algorithm}

\begin{algorithm}[ht]
\scriptsize
\DontPrintSemicolon
\LinesNumbered
\SetKwInOut{Input}{Input}
\SetKwInOut{Output}{Output}
\Input{ likelihood-equation system after the linear transformation $\ta(\vf)\subseteq{\mathbb Q}[\Vector{v},\Vector{x}]$, parameters $\Vector{v}$, variables $\Vector{x}$, the rational vector $\vA=(a_1,\ldots,a_{n-1})$ corresponding to $\ta$}
\Output{ $\ta(LFactor^{(1)})$ in \eqref{eq:lcfactors}}
\caption{ {\bf AllOne (Sub-Algorithm of Algorithm \ref{alg:new})}}\label{alg:allone}
\BlankLine
Choose a rational vector $\Vector{b}:=(b_1,\ldots,b_{n-1})$. Obtain $\sigma_{\Vector{b}}\cdot\ta(\vf)$ by specializing $v_2,\ldots, v_n$ in $\ta(\vf)$ as $b_1,\ldots, b_{n-1}$  \nllabel{line:allone-1}\;
$w^*\leftarrow$ {\bf StandardMethod}$(\sigma_{\Vector{b}}\cdot\ta(\vf), v_1, \Vector{x})$\nllabel{line:allone-3}
$\#${\tt We  always make $w^*$ a monic polynomial}\;
\For {{\rm every nonempty subset} $T\subseteq\{1,\ldots, n\}$}{ {\bf if} $1\in T$, {\bf then} $\chi_{T}\leftarrow 1+\sum_{j\in T\setminus\{1\}}a_{j-1}$; {\bf else}, $\chi_{T}\leftarrow \sum_{j\in T}a_{j-1}$.\nllabel{line:allone-9}\;}
$\ell\leftarrow1$\nllabel{line:allone-4}\;
\For{{\rm every linear factor} $v_1+c$ $(c\in\Q)$ {\rm of} $w^*$\nllabel{line:allone-5}}{
    \If{{\rm there exists a nonempty subset} $T\subseteq\{1,\ldots, n\}$ {\rm such that} $c=\frac{1}{\chi_{T}}\sum_{j\in T\setminus\{1\}}b_{j-1}$\nllabel{line:allone-7}}
    {$\ell\leftarrow\ell\cdot\left(\chi_{T}v_1+\sum_{j\in T\setminus\{1\}}v_j\right)$\nllabel{line:allone-8}\;}
}
{\bf return} $\ell$
\end{algorithm}

\begin{algorithm}[ht]
\scriptsize
\DontPrintSemicolon
\LinesNumbered
\SetKwInOut{Input}{Input}
\SetKwInOut{Output}{Output}
\Input{ likelihood-equation system after the linear transformation $\ta(\vf)\subseteq{\mathbb Q}[\Vector{v},\Vector{x}]$, parameters $\Vector{v}$, variables $\Vector{x}$}
\Output{ $wDegree$ and $NLFactorDegreeBound$ where
	\begin{itemize}
	    \item $wDegree$ is a list, whose $j$-th entry is $\deg(\ta(w), v_j)$ for $j=1,\ldots,n$,
	    \item $NLFactorDegreeBound$ is a list, whose
	    first entry is $\deg(\ta(NLFactor),v_1)$ and \\
	    $j$-th entry is an upper bound for $\deg(\ta(NLFactor),v_j)$ for $j=2,\ldots,n$
	\end{itemize}
	}
\caption{ {\bf Degrees (Sub-Algorithm of Algorithm \ref{alg:new}) }}\label{alg:degree}
\BlankLine
\For{$j$ {\bf from} $1$ {\bf to} $n$\nllabel{line:degrees-1}}{
$\Vector{b}\leftarrow(b_{1},\ldots,b_{n-1})$, where every $b_{i}$ is a  rational number\nllabel{line:degrees-2}\;
$\vf^*\leftarrow\ta(\vf)|_{v_1=b_1,\ldots,v_{j-1}=b_{j-1},v_{j+1}=b_{j},\ldots,v_{n}=b_{n-1}}$\;
$w_j^*\leftarrow$ {\bf StandardMethod}$(\vf^*, v_j, \Vector{x})$ $\#${\tt We  always make $w_j^*$ a monic polynomial} \nllabel{line:degrees-3}\;
$wDegree[j]\leftarrow\deg(w_j^*, v_j)$\nllabel{line:degrees-4}\;
$\eta_j\leftarrow$ the product of nonlinear irreducible factors of $w_j^*$ \nllabel{line:degrees-5}\;
$k_j\leftarrow$ the number of nonlinear irreducible factors of $w_j^*$\nllabel{line:degrees-7}\;
$NLFactorDegreeBound[j]\leftarrow\deg(\eta_j, v_j)+(k_1-k_j)$\nllabel{line:degrees-9}\;
    }
{\bf return} $wDegree,\;NLFactorDegreeBound$\;
\end{algorithm}

\begin{algorithm}[ht]
\scriptsize
\DontPrintSemicolon
\LinesNumbered
\SetKwInOut{Input}{Input}
\SetKwInOut{Output}{Output}
\Input{ likelihood-equation system after the linear transformation $\ta(\vf)\subseteq{\mathbb Q}[\Vector{v},\Vector{x}]$, parameters $\Vector{v}$, variables $\Vector{x}$, $\ta(LFactor^{(1)})$ in \eqref{eq:lcfactors}, and
$wDegree$, $NLFactorDegreeBound$ where
	\begin{itemize}
	    \item $wDegree$ is a list, whose $j$-th entry is $\deg(\ta(w), v_j)$ for $j=1,\ldots,n$,
	    \item $NLFactorDegreeBound$ is a list, whose
	    first entry is $\deg(\ta(NLFactor),v_1)$ and \\
	    $j$-th entry is an upper bound for $\deg(\ta(NLFactor),v_j)$ for $j=2,\ldots,n$
	\end{itemize}}
\Output{ $\ta(NLFactor)$ and $\ta(LFactor^{(2)})$ in \eqref{eq:lcfactors}}
\caption{ {\bf Interpolation (Sub-Algorithm of Algorithm \ref{alg:new}) }}\label{alg:nonlinear}
\BlankLine
$d_1\leftarrow NLFactorDegreeBound[1]$\nllabel{line:nonlinear-1}\;
$d_2\leftarrow wDegree[1]-\deg(\ta(LFactor^{(1)}),v_1)-d_1$\;
\For{$i$ {\bf from} $1$ {\bf to} $d_1$\nllabel{line:nonlinear-2}}
{
$U_{i,1},\ldots,U_{i,N_i}\leftarrow$ all monomials $U\in\Q[v_2,\ldots,v_n]$ with $\deg(U)=i$ and $\deg(U,v_j)\le NLFactorDegreeBound[j]$\nllabel{line:nonlinear-3}
}
$N\leftarrow\max(N_1,\ldots,N_{d_1})$\nllabel{line:nonlinear-4}\;
\For{$k$ {\bf from} $1$ {\bf to} $N$}{
$\Vector{b}\leftarrow(b_{k,1},\ldots,b_{k,n-1})$, where every
$b_{k,r}$ is a  rational number \nllabel{line:nonlinear-6}\;
$w^*\leftarrow$ {\bf StandardMethod}$(\sigma_{\Vector{b}}\cdot\ta(\vf), v_1, \Vector{x})$ $\#${\tt We  always make $w^*$ a monic polynomial}\nllabel{line:nonlinear-7}\;
$\eta^*\leftarrow$ the product of nonlinear irreducible factors of $w^*$\nllabel{line:nonlinear-8}\;
$C_{1,k}^*,\ldots,C_{d_1,k}^*\leftarrow\coeff(\eta^*,v_1^{d_1-1}),\ldots,\coeff(\eta^*,v_1^0)$\;
$\zeta^*\leftarrow$ the product of linear factors of $w^*$ but not of $\sigma_{\vB}\cdot\ta(LFactor^{(1)})$\nllabel{line:nonlinear-10}\;
$D_{1,k}^*,\ldots,D_{d_2,k}^*\leftarrow\coeff(\zeta^*,v_1^{d_2-1}),\ldots,\coeff(\zeta^*,v_1^0)$\;
}
\For{$i$ {\bf from} $1$ {\bf to} $d_1$}{
${\mathcal M}_i\leftarrow N_i\times N_i$ matrix whose $(k,r)$-entry is $U_{i,r}(b_{k,1},\ldots,b_{k,n-1})$\;
$C_i\leftarrow(U_{i,1},\ldots,U_{i,N_i}){\mathcal M}_i^{-1}(C_{i,1}^*,\ldots,C_{i,N_i}^*)^{\top}$\nllabel{line:nonlinear-12}
}
$q_1\leftarrow v_1^{d_1}+\sum_{i=1}^{d_1}C_{i}\cdot v_1^{d_1-i}$\nllabel{line:nonlinear-16}\;
\For{$i$ {\bf from} $1$ {\bf to} $d_2$}
{
$V_{i,1},\ldots,V_{i,M_i}\leftarrow$ all monomials $U\in\Q[v_2,\ldots,v_n]$ with $\deg(U)=i$ and $\deg(U,v_j)\le wDegree[j]-\deg(\ta(LFactor^{(1)}),v_j)-\deg(q_1,v_j)$
}
$M\leftarrow\max(M_1,\ldots,M_{d_2})$\nllabel{line:nonlinear-21}\;
\If{$M> N$\nllabel{line:max_time}}{
\For{$k$ {\bf from} $N+1$ {\bf to} $M$}{
$\Vector{b}\leftarrow(b_{k,1},\ldots,b_{k,n-1})$, where every
$b_{k,r}$ is a  rational number \nllabel{line:nonlinear-22}\;
$w^*\leftarrow$ {\bf StandardMethod}$(\sigma_{\Vector{b}}\cdot\ta(\vf), v_1, \Vector{x})$ $\#${\tt We  always make $w^*$ a monic polynomial}\;
$\zeta^*\leftarrow$ the product of linear factors of $w^*$ but not of $\sigma_{\vB}\cdot\ta(LFactor^{(1)})$\nllabel{line:nonlinear-24}\;
$D_{1,k}^*,\ldots,D_{d_2,k}^*\leftarrow\coeff(\zeta^*,v_1^{d_2-1}),\ldots,\coeff(\zeta^*,v_1^0)$\;
}
}
\For{$i$ {\bf from} $1$ {\bf to} $d_2$}{
${\mathcal M}_i\leftarrow M_i\times M_i$ matrix whose $(k,r)$-entry is $V_{i,r}(b_{k,1},\ldots,b_{k,n-1})$\;
$D_i\leftarrow(V_{i,1},\ldots,V_{i,M_i}){\mathcal M}_i^{-1}(D_{i,1}^*,\ldots,D_{i,M_i}^*)^{\top}$  \nllabel{line:nonlinear-28}
}
$q_2\leftarrow v_1^{d_2}+\sum_{i=1}^{d_2}D_{i}\cdot v_1^{d_2-i}$\nllabel{line:nonlinear-29}\;
{\bf return} $q_1,q_2$\;
\end{algorithm}

\subsection{Correctness of New Algorithm} \label{subsec:correct-new}

We prove the correctness of three sub-algorithms Algorithms \ref{alg:allone}--\ref{alg:nonlinear} in this section. Since there are many  technical details, one can also skip the proofs and go to Section \ref{subsec:examples} for running examples.
Notices that Algorithms \ref{alg:allone}--\ref{alg:nonlinear} are all probabilistic algorithms. When we say a probabilistic algorithm is correct, we mean the algorithm is correct with probability $1$. For instance, Algorithm
\ref{alg:allone} is correct if the input  vector $\vA \in{\mathbb Q}^{n-1}$ and the vector $\Vector{b}\in{\mathbb Q}^{n-1}$ chosen in Line \ref{line:allone-1} are indeed generic, which means they do not belong to certain algebraic varieties with dimensions strictly lower than $n-1$.   One will see what these algebraic varieties are for each sub-algorithm in the proofs.

\subsubsection{Correctness of Algorithm \ref{alg:allone}}\label{subsubsec:correct-allone}


In order to prove the correctness of Algorithm \ref{alg:allone}, we first prepare some lemmas.
Since in the rest of the paper, we need to consider
the composition of the two homomorphisms $\sigma_{\vB}$
and $\tau_{\vA}$, we clarify that for any $\vB:=(b_1,\ldots,b_{n-1})\in\Q^{n-1}$,  we still denote by $\sigma_{\vB}$ the homomorphism
$ \sigma_{\vB}:\Q[\vV,\vX]\rightarrow\Q[v_1,\vX]$ such that
\begin{align}\label{eq:def-sample-mapv}
    \sigma_{\vB}(g(\vV,\vX))\;=\;g(v_1,b_1, \ldots,b_{n-1},\vX)\;\text{for any}\; g\in \Q[\vV,\vX].
\end{align}
Note that the homomorphism \eqref{eq:def-sample-mapv} is exactly $\sigma_{\vB}$ defined in \eqref{eq:def-sample-map} (the only difference is that here we consider the homomorphism on the polynomial ring $\Q[\vV,\vX]$ instead of $\Q[\vU,\vX]$).

\begin{lemma}\label{lemma:alg-allone-pre}
Let $\vf\subseteq {\mathbb Q}[\vU, \vX]$. If $\mI(\mV(\vf)_{\infty})=\langle w\rangle$, where $w\in{\mathbb Q}[\Vector{u}]$, then
for any $\vA\in\Q^{n-1}$,
there exists a Zariski dense subset $\Theta_{\vA}\subseteq\Q^{n-1}$ and an affine variety in $\C^{n-1}$, say $B_{\vA}$,
such that
for any $\vB\in\Theta_{\vA}\setminus B_{\vA}$,
\begin{align}\label{eq:apply-two-coros-1}
\mI(\mV(\sigma_{\Vector{b}}\cdot\ta(\vf))_{\infty})\;=\;\langle \sigma_{\Vector{b}}\cdot\ta(w)\rangle.
\end{align}

\end{lemma}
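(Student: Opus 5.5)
The plan is to compose the two corollaries already proved: first apply Corollary \ref{coro:Linear} to pass from $\vf$ to $\ta(\vf)$, then apply Corollary \ref{coro:Sample} to the transformed system. Both steps are direct applications of earlier results, and the only bookkeeping is to check that the hypotheses of the second step are met by the output of the first.

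Fix $\vA\in\Q^{n-1}$. By Corollary \ref{coro:Linear}, the hypothesis $\mI(\mV(\vf)_{\infty})=\langle w\rangle$ gives
\begin{align*}
\mI(\mV(\ta(\vf))_{\infty})\;=\;\langle\ta(w)\rangle ,
\end{align*}
where $\ta(\vf)\subseteq\Q[\vV,\vX]$ and $\ta(w)\in\Q[\vV]$. This identity holds for every $\vA$, with no genericity condition.

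Next we rename the parameters $\vV$ as $\vU$. The system $\ta(\vf)$ then satisfies exactly the hypothesis of Corollary \ref{coro:Sample}, with generator $\ta(w)$ in place of $w$. Moreover, the map $\sigma_{\vB}$ on $\Q[\vV,\vX]$ defined in \eqref{eq:def-sample-mapv} is the same substitution as \eqref{eq:def-sample-map}. Corollary \ref{coro:Sample} therefore yields a Zariski dense set $\Theta_{\vA}\subseteq\Q^{n-1}$ and an affine variety $B_{\vA}\subset\C^{n-1}$ such that, for every $\vB\in\Theta_{\vA}\setminus B_{\vA}$,
\begin{align*}
\mI(\mV(\sigma_{\vB}(\ta(\vf)))_{\infty})\;=\;\langle\sigma_{\vB}(\ta(w))\rangle .
\end{align*}
This is precisely \eqref{eq:apply-two-coros-1}.

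The only point needing care is that the sets produced by Corollary \ref{coro:Sample} depend on the transformed data $\ta(\vf)$ and $\ta(w)$. This dependence is why the statement indexes them by $\vA$. I would also confirm that $B_{\vA}$ is a proper subvariety, so that the set of admissible $\vB$ is still generic. This follows from the construction inside the proof of Corollary \ref{coro:Sample}: there $V=\Delta\cup V_0$, where $\Delta$ is cut out by leading coefficients and $V_0$ comes from Theorem \ref{thm:Hilbert}, which is built from nonzero polynomials. I expect no real obstacle, since the argument is essentially the composition of the two corollaries.
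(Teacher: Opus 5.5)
Your proposal is correct and is essentially the paper's proof. Fix $\vA$, invoke Corollary~\ref{coro:Linear} to get $\mI(\mV(\ta(\vf))_{\infty})=\langle\ta(w)\rangle$, then apply Corollary~\ref{coro:Sample} to the transformed system $\ta(\vf)$ with generator $\ta(w)$, which is why $\Theta_{\vA}$ and $B_{\vA}$ are indexed by $\vA$; your extra check that $B_{\vA}$ is proper is not required by the statement and simply inherits whatever Corollary~\ref{coro:Sample} provides.
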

\begin{proof}
For any $\vA\in\Q^{n-1}$, by Corollary \ref{coro:Linear}, we have
$\mI(\mV(\ta(\vf))_{\infty})=\langle\ta(w)\rangle$.
Applying Corollary \ref{coro:Sample} to $\ta(\vf)$, the proof is done.
\end{proof}

\begin{lemma}\label{lemma:degree-correct-pre}
For any $q\in\Q[\vU]$, there exists an affine variety in $A\subset\C^{n-1}$
such that for any $\vA\in {\mathbb Q}^{n-1}\setminus A$ and for every factor $\xi$ of $q$,
$\ta(\xi)$ is a factor of $\ta(q)$ satisfying
\begin{align*}
    \deg(\ta(\xi),v_1)=\deg(\ta(\xi),\vV)=\deg(\xi,\vU).
\end{align*}
Furthermore, if $\xi$ is  irreducible, then
$\ta(\xi)$ is also irreducible.
\end{lemma}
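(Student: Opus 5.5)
Since $\ta$ is a ring homomorphism, $\xi\mid q$ gives $q=\xi\eta$ and hence $\ta(q)=\ta(\xi)\ta(\eta)$; thus $\ta(\xi)$ is a factor of $\ta(q)$ for every $\vA$. Irreducibility is Lemma \ref{lemma:linear-irreducible} $(\romannumeral2)$ and also holds for every $\vA$. So the only real content is the degree equality, and the plan is to find one proper variety $A$ that works simultaneously for all factors $\xi$ of $q$. If $q=0$ every polynomial is a factor; I would treat this case as degenerate, either excluding it or noting that the degree claim is intended for nonzero $q$. So assume $q\neq 0$.

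First I reduce to the homogeneous top-degree part. Let $d=\deg(\xi,\vU)$ and write $\xi=\xi_d+(\text{lower-degree terms})$, where $\xi_d\neq 0$ is the degree-$d$ homogeneous component. The substitution $u_1=v_1$, $u_j=v_j+a_{j-1}v_1$ is linear and homogeneous, so it maps homogeneous polynomials of degree $k$ to homogeneous polynomials of degree $k$ (or to $0$). Hence the only contribution to $v_1^d$ in $\ta(\xi)$ comes from $\ta(\xi_d)$. Setting $v_2=\cdots=v_n=0$ in $\ta(\xi_d)$ gives
\[
\coeff(\ta(\xi),v_1^d)=\xi_d(1,a_1,\ldots,a_{n-1}).
\]
If this value is nonzero, then $\deg(\ta(\xi),v_1)\ge d$. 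Combined with Lemma \ref{lemma:linear-irreducible} $(\romannumeral1)$, which gives $\deg(\ta(\xi),\vV)\le d$, and the trivial bound $\deg(\ta(\xi),v_1)\le\deg(\ta(\xi),\vV)$, this yields $\deg(\ta(\xi),v_1)=\deg(\ta(\xi),\vV)=d$.

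Second, I make the choice of $A$ uniform over all factors. Factor $q=c\prod_k q_k^{m_k}$ into irreducibles in $\Q[\vU]$. Every factor $\xi$ of $q$ is, up to a nonzero constant, a product $\prod_k q_k^{e_k}$ with $0\le e_k\le m_k$. The top-degree component of a product is the product of the top-degree components, so $\xi_d$ is, up to a constant, $\prod_k (q_k)_{\text{top}}^{e_k}$. Define
\[
A:=\{\vA\in\C^{n-1}\mid \textstyle\prod_k (q_k)_{\text{top}}(1,a_1,\ldots,a_{n-1})=0\}.
\]
Each $(q_k)_{\text{top}}$ is a nonzero homogeneous polynomial, so its dehomogenization $(q_k)_{\text{top}}(1,a_1,\ldots,a_{n-1})$ is a nonzero polynomial in $\vA$. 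Therefore $A$ is a proper affine variety. For $\vA\notin A$, the value $\xi_d(1,\vA)$ is nonzero for every factor $\xi$, and the first step applies.

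The main point to check carefully is the dehomogenization claim. It requires that a nonzero homogeneous polynomial stays nonzero after setting $u_1=1$, which holds because $h(u_1,\ldots,u_n)=u_1^{\deg h}\,h(1,u_2/u_1,\ldots,u_n/u_1)$. The other step needing care is the homogeneity-preservation argument, which ensures that lower-degree terms of $\xi$ cannot cancel the $v_1^d$ coefficient.
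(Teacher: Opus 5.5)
Your proof is correct and follows the paper's argument. Your exceptional set is the same as the paper's, since $q_{\mathrm{top}}(1,a_1,\ldots,a_{n-1})=c\prod_k (q_k)_{\mathrm{top}}(1,a_1,\ldots,a_{n-1})^{m_k}$; the $v_1^d$-coefficient computation and the use of Lemma~\ref{lemma:linear-irreducible} are identical. The only difference is how you pass to factors. The paper proves the degree equality for $q$ and transfers it to any $\xi\mid q$ by a squeeze: writing $q=\xi\eta$, $\deg(\ta(\cdot),v_1)$ is additive and bounded by $\deg(\cdot,\vU)$. You instead check the leading coefficient of each factor directly via unique factorization, which is equally valid and slightly more explicit.
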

\begin{proof}

 For any $q\in\Q[\vU]$,
let $d:=\deg(q,\vU)$.
Suppose  
\begin{align}\label{eq:w-polynomial}
    q\;=\;\sum_{|\alpha|=d}C_{\alpha} u_{1}^{\alpha_1}\cdots u_{n}^{\alpha_n}+q_0,
\end{align}
where $\alpha=(\alpha_1,\ldots,\alpha_n)\in\Z^{n}_{\ge0}$, $|\alpha|:=\Sigma^n_{i=1}\alpha_i$, $C_{\alpha}\in\Q$ and $q_0\in\Q[\vU]$ with $\deg(q_0,\vU)\le d-1$.
Let
\begin{align}\label{eq:linear-set}
    A:=\;\{(a_1,\ldots,a_{n-1})\in\C^{n-1}\mid\sum_{|\alpha|=d}C_{\alpha}a_1^{\alpha_2}\cdots a_{n-1}^{\alpha_n}=0\}.
\end{align}
Notice that $A\neq \C^{n-1}$ once $q$ is not the zero polynomial.
In the argument below, we assume that
$\vA\in {\mathbb Q}^{n-1}\setminus A$.
First, we prove that $\deg(\ta(q),v_1)=\deg(\ta(q),\vV)=\deg(q,\vU)$.
Note that $\deg(\ta(q),v_1)\le\deg(\ta(q),\vV)$.
Applying Lemma \ref{lemma:linear-irreducible} $(\romannumeral1)$ to $q$, we have $\deg(\ta(q),\vV)\le \deg(q,\vU)$.
Thus, $\deg(\ta(q),v_1)\le\deg(\ta(q),\vV)\le\deg(q,\vU)$.
So, we only need to prove that $\deg(\ta(q),v_1)=\deg(q,\vU)$.
By \eqref{eq:w-polynomial} and by the definition of $\ta$ in \eqref{eq:def-linear-map}, we have
\begin{align}    \ta(q)\;&=\;\sum_{|\alpha|=d}C_{\alpha}\cdot v_1^{\alpha_1}(v_2+a_1v_1)^{\alpha_2}\cdots (v_n+a_{n-1}v_1)^{\alpha_n}+\ta(q_0)\nonumber\\    &=\;\sum_{|\alpha|=d}\left(C_{\alpha}a_1^{\alpha_2}\cdots a_{n-1}^{\alpha_n}v_1^{d}+
    C_{\alpha}\sum_{i=\alpha_1}^{d-1} P^{(i)}_{\alpha}v_1^{i}\right)+\ta(q_0),\label{eq:w-up-to-scaling-pre}
\end{align}
where $d=\deg(q,\vU)$ and $P_{\alpha}^{(i)}\in\Q[v_2,\ldots,v_n]$.
Applying Lemma \ref{lemma:linear-irreducible} $(\romannumeral1)$ to $q_0$, we have $\deg(\ta(q_0),\vV)\le \deg(q_0,\vU)$.
Recall that $\deg(q_0,\vU)\le d-1$.
So, $\deg(\ta(q_0),\vV)\le d-1$, and thus $\deg(\ta(q_0),v_1)\le d-1$.
Then, by the definition of $A$ and by \eqref{eq:w-up-to-scaling-pre}, we have $\deg(\ta(q),v_1)=d$.

Below, we prove the conclusion.
For every factor $\xi$ of $q$, $\ta(\xi)$ is a factor of $\ta(q)$ since $\ta$ is a homomorphism.
Similar to the proof in the above paragraph, we have $\deg(\ta(\xi),v_1)\le\deg(\ta(\xi),\vV)\le\deg(\xi,\vU)$.
Then, since $\deg(\ta(q),v_1)=\deg(\ta(q),\vV)\allowbreak=\deg(q, \vU)$,
we have
\begin{align}\label{eq:deg-xi-equal}
    \deg(\ta(\xi),v_1)=\deg(\ta(\xi),\vV)=\deg(\xi, \vU).
\end{align}
Furthermore, if $\xi$ is an irreducible factor of $q$,
by Lemma \ref{lemma:linear-irreducible} ($\romannumeral2$), $\ta(\xi)$ is an irreducible factor of $\ta(q)$.
We complete the proof.
\end{proof}

\begin{lemma}[Algorithm \ref{alg:allone}: AllOne]
\label{lemma:alg-allone-correct}
The probabilistic algorithm Algorithm \ref{alg:allone} is correct.
\end{lemma}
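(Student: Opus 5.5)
We must show that, for generic $\vA$ and $\vB$, Algorithm \ref{alg:allone} returns exactly $\ta(LFactor^{(1)})$ (up to a nonzero constant). The plan has three parts: identify $w^*$ with $\sigma_{\vB}(\ta(w))$, locate the images of the all-one linear factors among the linear factors of $w^*$, and show that no other linear factor passes the test in Line \ref{line:allone-7}.

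\textbf{Identifying $w^*$.} By Lemma \ref{lemma:alg-allone-pre}, for $\vB\in\Theta_{\vA}\setminus B_{\vA}$ we have $\mI(\mV(\sigma_{\vB}\cdot\ta(\vf))_{\infty})=\langle\sigma_{\vB}\cdot\ta(w)\rangle$. Since Algorithm \ref{alg:standard} returns the monic generator of this principal ideal of $\Q[v_1]$, $w^*$ equals $\sigma_{\vB}(\ta(w))$ up to a nonzero constant.

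\textbf{The all-one linear factors.} Write $w$ as a product of distinct irreducible homogeneous factors (Remark \ref{rmk:homo}). Apply Lemma \ref{lemma:degree-correct-pre} to $q=w$ with $\vA\notin A$; then each factor $\xi$ keeps its degree in $v_1$, so $\ta(\xi)$ is monic up to a constant in $v_1$ of the same total degree. Now apply Theorem \ref{thm:Hilbert} to $\ta(w)$: each $\sigma_{\vB}(\ta(\xi))$ stays irreducible, of the same $v_1$-degree, and pairwise distinct. In particular, linear factors of $w^*$ correspond bijectively to linear factors of $w$. Consider an all-one factor $\xi=\sum_{j\in T}u_j$. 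We compute
\[\ta(\xi)=\chi_T v_1+\sum_{j\in T\setminus\{1\}}v_j\]
with $\chi_T$ as in Line \ref{line:allone-9}, so $\sigma_{\vB}(\ta(\xi))$ is proportional to $v_1+\frac{1}{\chi_T}\sum_{j\in T\setminus\{1\}}b_{j-1}$. Hence every such factor passes the test, and Line \ref{line:allone-8} multiplies in exactly $\ta(\xi)$. (We need $\chi_T\neq0$ for all $T$; this is a proper hyperplane condition to add to $A$.)

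\textbf{Excluding false positives.} This is the main obstacle. Let $\xi=\sum_j c_ju_j$ be a linear factor of $w$ that is not all-one. Its image is $v_1+c$ with
\[c=\frac{\sum_{j\ge2}c_jb_{j-1}}{c_1+\sum_{j\ge2}c_ja_{j-1}}.\]
A false match with some $T$ requires a linear relation
\[\frac{\sum_{j\ge2}c_jb_{j-1}}{c_1+\sum_{j\ge2} c_ja_{j-1}}=\frac{\sum_{j\in T\setminus\{1\}}b_{j-1}}{\chi_T}\]
in $\vB$. This is a proper hyperplane in $\vB$ unless the coefficient vectors $(c_j)_{j\ge2}$ and $(\mathbf 1_{j\in T})_{j\ge2}$ are proportional with the matching ratio. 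If they are proportional with that ratio, then $\xi$ is a scalar multiple of $\sum_{j\in T}u_j$, except possibly in the coefficient of $u_1$. The coefficient of $u_1$ is forced by $c_1+\sum c_ja_{j-1}=\lambda\chi_T$, so $\xi$ is proportional to $\sum_{j\in T}u_j$. Since $\xi$ has primitive integer coefficients, it is then all-one, a contradiction. One degenerate case needs care: $T$ with $T\setminus\{1\}=\emptyset$, i.e. $\xi=u_1$, together with factors having all $c_j=0$ for $j\ge2$. This is handled the same way.

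\textbf{Assembling the exceptional set.} Take $V$ to be the union of $B_{\vA}$, the Hilbert exceptional set, and these finitely many hyperplanes. Together with $\vA$ outside $A$ and the sets $\{\chi_T=0\}$, this gives a finite union of proper varieties. Outside it, $\ell=\ta(LFactor^{(1)})$, which proves the lemma.
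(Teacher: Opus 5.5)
Your proposal follows the paper's proof essentially step for step. You identify $w^*$ with $\sigma_{\vB}\cdot\ta(w)$ via Lemma \ref{lemma:alg-allone-pre}, and you use Lemma \ref{lemma:degree-correct-pre} and Theorem \ref{thm:Hilbert} so that linear factors of $w^*$ come from linear factors of $w$. Your condition $\chi_T\neq 0$ is the paper's $V_{\vA}$, and your separating hyperplanes play the role of the paper's $V_{\vB}$. Your proportionality computation, showing that each such hyperplane is proper unless the two linear forms are proportional, is a useful addition: the paper defines $V_{\vB}$ but never checks that it is a proper subvariety.

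One point needs tightening. You only separate the non-all-one linear factors $\xi$ of $w$ from the forms $\sum_{j\in T}u_j$. But you also claim that Line \ref{line:allone-8} multiplies in exactly $\ta(\xi)$ for an all-one factor $\xi=\sum_{j\in T}u_j$. That needs a further fact: no other $T'\neq T$ passes the test of Line \ref{line:allone-7} with the same constant $c$. Otherwise the algorithm could multiply in $\ta(\sum_{j\in T'}u_j)$, which need not divide $\ta(w)$.

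The paper's $V_{\vB}$ ranges over all pairs of distinct elements of $S_1\cup S_2$, so it contains the hyperplanes
\[
\frac{1}{\chi_T}\sum_{j\in T\setminus\{1\}}b_{j-1}\;=\;\frac{1}{\chi_{T'}}\sum_{j\in T'\setminus\{1\}}b_{j-1},\qquad T\neq T'.
\]
Your proportionality argument shows verbatim that each of these is proper, since $\sum_{j\in T}u_j$ and $\sum_{j\in T'}u_j$ are not proportional. Add them to your exceptional set and the argument is complete.
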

\begin{proof}
Suppose $\vf$ is the input system of likelihood equations.  Recall that we have assumed $\mI(\mV(\vf)_{\infty})\;=\;\langle w\rangle$
(i.e., Assumption \ref{ap:principal} holds).
By Lemma \ref{lemma:alg-allone-pre}, Lemma  \ref{lemma:degree-correct-pre}, and Theorem \ref{thm:Hilbert},
there exists an affine variety $A\subset {\mathbb C}^{n-1}$ such that for any $\vA\in\Q^{n-1}\setminus A$, there exists a Zariski dense subset $\Theta_{\vA}$ and an affine variety $B_{\vA}\subset\C^{n-1}$
such that for any $\Vector{b}\in \Theta_{\vA}\setminus B_{\vA}$, we have \eqref{eq:apply-two-coros-1} holds, and for
every linear factor $\xi$ of $w$, $\tau_{\vA}(\xi)$ is  a linear factor of $\tau_{\vA}(w)$, and $\sigma_{\vB}\cdot\tau_{\vA}(\xi)$ is  a linear factor of
$\sigma_{\vB}\cdot\tau_{\vA}(w)$.

Let
\begin{align}
    S_1\;&:=\;\{\eta\mid\eta~{\rm is~a~linear~factor~of}~w\},~{\rm and}\\
    S_2\;&:=\;\{\eta\mid\eta=\sum_{i\in T}u_i,~{\rm where}~T~{\rm is~a~nonempty~subset~of~}\{1,\ldots,n\}\}.
\end{align}
Let $S:=S_1\cup S_2$.
Define
\begin{align}
    V_{\vA}\;:=\;\{\vA\in\Q^{n-1}\mid~{\rm there~ exists}~\eta\in S_2~{\rm such ~ that}~\coeff(\ta(\eta),v_1)=0\},
\end{align}
where $\coeff(\ta(\eta),v_1)$ denotes the coefficient of $\ta(\eta)$ w.r.t.\;$v_1$.
Notice that if $\eta\in S_1$, from the proof of Lemma \ref{lemma:degree-correct-pre}, we know  that for any $\vA\in\Q^{n-1}\setminus A$, $\coeff(\ta(\eta),v_1)\ne0$. Therefore, for any $\vA\in\Q^{n-1}\setminus (A\cup V_{\vA})$, $\coeff(\ta(\eta),v_1)\ne0$ for any $\eta\in S$.
In the argument below,
we assume that $\vA\in\Q^{n-1}\setminus(A\cup V_{\vA})$.
Define
\begin{align}
    V_{\vB}:=\;\{\vB\in\Q^{n-1}\mid~&{\rm there\;exist}\;\eta_1,\eta_2\in S\;(\eta_1\neq \eta_2)\;{\rm such\;that}\\
    &\sigma_{\vB}
    (\frac{\ta(\eta_1)}{\coeff(\ta(\eta_1),v_1)}-\frac{\ta(\eta_2)}{\coeff(\ta(\eta_2),v_1)})=0\}.
\end{align}
In the argument below,
we further assume that  $\vB\in \Theta_{\vA}\setminus (B_{\vA}\cup V_{\vB})$. 
Using Algorithm \ref{alg:standard}, we can compute the monic polynomial $w^*\in\Q[v_1]$ (see Algorithm \ref{alg:allone}--Line \ref{line:allone-3}) such that
\begin{align}\label{eq:apply-alg:intersect}
    \mI(\mV(\sigma_{\Vector{b}}\cdot\ta(\vf))_{\infty})\;=\;\langle w^*\rangle.
\end{align}
Comparing \eqref{eq:apply-two-coros-1} and \eqref{eq:apply-alg:intersect}, we have
\begin{align}\label{eq:relation}
    w^*\;=|_{s}\;\sigma_{\Vector{b}}\cdot\ta(w),
\end{align}
where the notion ``$=|_{s}$'' means ``is equal to up to multiplication by a nonzero rational number''. Suppose $v_1+c~(c\in\Q)$ is a factor of $w^*$.
By \eqref{eq:relation},
$v_1+c$ is a factor of $\sigma_{\Vector{b}}\cdot\ta(w)$.
So, by the choice for $\vA$ and $\vB$
there exists a linear factor $\xi$ of $w$, such that
$\sigma_{\vB}\cdot\tau_{\vA}(\xi)=|_s v_1+c$.
Also, recall that in \eqref{eq:factors}, $LFactor^{(1)}$ is a factor of $w$ and any linear factor of
$LFactor^{(1)}$ has the form
$\sum_{i\in T}u_i$, where $T$ is a nonempty subset of $\{1,\ldots,n\}$. For any $T\subset \{1, \ldots, n\}$, define
\begin{equation}
    \chi_{T}:=\;\begin{cases}
    1+\sum_{j\in T\setminus\{1\}}a_{j-1}, & {\rm if}\;1\in T,\\
    \sum_{j\in T}a_{j-1}, & {\rm if}\;1\notin T.
    \end{cases}
\end{equation}
First, we show that
if there exists a nonempty subset $T\subset \{1, \ldots, n\}$ such that $c=\frac{1}{\chi_{T}}\sum_{j\in T\setminus\{1\}}b_{j-1}$,
then $\ta(\sum_{i\in T}u_i)$ is a factor of $\ta(LFactor^{(1)})$.
In fact, by the definitions of $\ta$ and $\sigma_{\Vector{b}}$ (see \eqref{eq:def-linear-map} and \eqref{eq:def-sample-mapv}), 
\begin{align}\label{eq:alg-allone-correct-eq1}
  \sigma_{\Vector{b}}\cdot\ta(\sum_{i\in T}u_i)\;
  &=\;\sigma_{\Vector{b}}\big(\chi_{T}\cdot v_1+\sum_{j\in T\setminus\{1\}}v_j\big)\nonumber\\
  &=\;\chi_{T}\sigma_{\Vector{b}}\big(v_1+\frac{1}{\chi_{T}}\sum_{j\in T\setminus\{1\}}v_j\big)\nonumber\\
  &=\;\chi_{T}\big(v_1+\frac{1}{\chi_{T}}\sum_{j\in T\setminus\{1\}}b_{j-1}\big).
\end{align}
Here, remark that our choice for $\vA$ and $\vB$ guarantees that  $\chi_{T}\ne0$. So, if $c=\frac{1}{\chi_{T}}\sum_{j\in T\setminus\{1\}}b_{j-1}$, then
\begin{align}\label{eq:linear-factor}
    \sigma_{\Vector{b}}\cdot\ta(\sum_{i\in T}u_i)\;=|_{s}\;v_1+c,
\end{align}
Note that $\vB\notin V_{\vB}$. By the definition of $V_{\vB}$,
we know that $\sum_{i\in T}u_i$ is exactly the linear factor $\xi$ of $LFactor^{(1)}$ such that $\sigma_{\vB}\cdot\tau_{\vA}(\xi)=|_s v_1+c$.
Thus, $\ta(\sum_{i\in T}u_i)=\chi_{T}v_1+\sum_{j\in T\setminus\{1\}}v_j$ is  a factor of $\ta(LFactor^{(1)})$.
On the other hand, from the above discussion, we see that
any linear factor of $\ta(LFactor^{(1)})$ has the form $\ta(\sum_{i\in T}u_i)$, and
for any $T\subset \{1, \ldots, n\}$, if $\ta(\sum_{i\in T}u_i)$ is  a factor of $\ta(LFactor^{(1)})$, then there exists a
factor $v_1+c$ of $w^*$ such that
$c=\frac{1}{\chi_{T}}\sum_{j\in T\setminus\{1\}}b_{j-1}$.
Therefore, by going over the loop Line \ref{line:allone-5}--Line \ref{line:allone-8} in Algorithm \ref{alg:allone}, we can find all linear factors of $\ta(LFactor^{(1)})$.

Note that in practice, we cannot guarantee the input $\vA$ is generic or the vector $\vB$ we choose in
Algorithm \ref{alg:allone}--Line \ref{line:allone-1} is from $\Theta_{\vA}\setminus (B_{\vA}\cup V_{\vB})$, so Algorithm \ref{alg:allone} is probabilistic.
\end{proof}

\subsubsection{Correctness of Algorithm \ref{alg:degree}}\label{subsubsec:correct-degree}

For every $j\in\{1,\ldots,n\}$ and for any point
$\Vector{b}:=(b_1,\ldots,b_{n-1})\in {\mathbb Q}^{n-1}$,
we define the homomorphism $\sigma_{\vB}^{(j)}:\Q[\vV,\vX]\rightarrow\Q[v_j,\vX]$ such that
\begin{align}\label{eq:def-sample-map-j}
    \sigma_{\vB}^{(j)}(g(\vV,\vX))\;=\;g(b_1,\ldots,b_{j-1},v_j,b_{j},\ldots,b_{n-1},\vX).
\end{align}
Note that the homomorphism $\sigma_{\vB}^{(1)}$ is exactly $\sigma_{\vB}$ defined in \eqref{eq:def-sample-mapv}.
We also remark that some previous conclusions on $\sigma_{\vB}$ (for instance, Theorem \ref{thm:Hilbert} and Lemma \ref{lemma:alg-allone-pre}) also hold for $\sigma_{\vB}^{(j)}$.
In order to prove the correctness of Algorithm \ref{alg:degree}, we first prove
Lemma \ref{lemma:nonlinear-deg-upper}.
\begin{lemma}\label{lemma:nonlinear-deg-upper}
Let $\vf\subseteq\Q[\vU,\vX]$.
Suppose that $\mI(\mV(\vf)_{\infty})=\langle w\rangle$, where $w\in{\mathbb Q}[\Vector{u}]$.
Let $NLFactor$ be the product of all nonlinear irreducible factors of $w$.
Then, there exists an affine variety $A\subset\C^{n-1}$ such that for any $\vA\in\Q^{n-1}\setminus A$ and for every $j\in \{1,\ldots,n\}$,
there exist a Zariski dense subset $\Theta_{\vA,j}\subseteq\Q^{n-1}$ and an affine variety $B_{\vA,j}\subset\C^{n-1}$ such that for any $\Vector{b}\in \Theta_{\vA,j}\setminus B_{\vA,j}$,
\begin{align}\label{eq:lemma:nonlinear-deg-upper}
    \deg(\ta(NLFactor),v_j)\;\le\;\deg(\E_j, v_j)+(k_1-k_j),\;where
\end{align}
\begin{enumerate}[$(1)$]
    \item $\E_j$ is defined as the product of nonlinear irreducible factors of the polynomial $\sigma_{\vB}^{(j)}\cdot\ta(w)$, and
    \item 
    $k_j$ is defined as the number of nonlinear irreducible factors of the polynomial $\sigma_{\vB}^{(j)}\cdot\ta(w)$.
\end{enumerate}
In addition, for $j=1$, the equality in
\eqref{eq:lemma:nonlinear-deg-upper} holds.
\end{lemma}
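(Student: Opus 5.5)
Write $NLFactor=\prod_{k=1}^{K}\xi_k$ with the $\xi_k$ pairwise distinct nonlinear irreducible factors of $w$, and $w=\prod_k\xi_k\cdot L$ where $L$ is the product of the linear factors. The plan is to transport this factorization through $\ta$ and then through $\sigma_{\vB}^{(j)}$, and to track how each $\ta(\xi_k)$ behaves.

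\textbf{Step 1: choice of $A$.} Take $A$ from Lemma \ref{lemma:degree-correct-pre} applied to $q=w$. For $\vA\notin A$, each $\ta(\xi_k)$ is irreducible and nonlinear, since $\deg(\ta(\xi_k),\vV)=\deg(\xi_k,\vU)\ge 2$. Each $\ta(\ell)$ for a linear factor $\ell$ of $w$ stays linear. The $\ta(\xi_k)$ are pairwise distinct by Lemma \ref{lemma:linear-bijection}, and $\deg(\ta(\xi_k),v_1)=\deg(\xi_k,\vU)\ge 2$. So $\ta(NLFactor)$ is exactly the product of the nonlinear irreducible factors of $\ta(w)$, and each of them genuinely involves $v_1$.

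\textbf{Step 2: specialization for fixed $j$.} Split $\ta(w)=p_j\prod_{k}\ta(\xi_k)^{(\ast)}\cdots$, where $p_j$ collects the irreducible factors (possibly nonlinear) not involving $v_j$. The remaining factors have positive $v_j$-degree. Apply Theorem \ref{thm:Hilbert} in its $\sigma^{(j)}_{\vB}$ form; the paper remarks it holds for $\sigma^{(j)}_{\vB}$. This gives $\Theta_{\vA,j}$ and $B_{\vA,j}$ such that, for $\vB\in\Theta_{\vA,j}\setminus B_{\vA,j}$:
\begin{itemize}
\item every factor involving $v_j$ specializes to an irreducible polynomial in $\Q[v_j]$ with the same $v_j$-degree;
\item these specializations are pairwise distinct;
\item factors not involving $v_j$ specialize to nonzero constants, after enlarging $B_{\vA,j}$ by their zero sets.
\end{itemize}
Consequently, a nonlinear $\ta(\xi_k)$ involving $v_j$ maps to a factor of $v_j$-degree $\deg(\ta(\xi_k),v_j)$. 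It is counted in $\E_j$ and $k_j$ precisely when this degree is $\ge2$. It contributes nothing when the degree is $0$, and it contributes a linear factor, uncounted, when the degree is $1$. Linear factors of $\ta(w)$ stay linear or constant, so they never enter $\E_j$.

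\textbf{Step 3: the degree count.} Let $P_j$ be the set of $k$ with $\deg(\ta(\xi_k),v_j)\ge 2$. Then $\deg(\E_j,v_j)=\sum_{k\in P_j}\deg(\ta(\xi_k),v_j)$ and $k_j=|P_j|$. For $j=1$, Step 1 gives $P_1=\{1,\dots,K\}$, so $\deg(\E_1,v_1)=\deg(\ta(NLFactor),v_1)$ and $k_1=K$; this is the claimed equality. For general $j$,
\[
\deg(\ta(NLFactor),v_j)=\sum_{k\in P_j}\deg(\ta(\xi_k),v_j)+\sum_{k\notin P_j}\deg(\ta(\xi_k),v_j).
\]
The second sum is at most $|\{k\notin P_j\}|=K-k_j=k_1-k_j$, since each of its terms is $0$ or $1$. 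This yields \eqref{eq:lemma:nonlinear-deg-upper}.

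\textbf{Main obstacle.} The delicate point is the bookkeeping in Theorem \ref{thm:Hilbert}. It assumes the factors involve the surviving variable and preserves the degree in that variable, and we must show that factors of $v_j$-degree exactly $1$ land among the linear factors. We also need the $\vB$-exceptional sets to be taken uniformly for the finitely many factors, with a Zariski-dense $\Theta_{\vA,j}$ for each of the finitely many $j$. These are routine but must be stated carefully.
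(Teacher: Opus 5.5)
Your proposal is correct and follows essentially the same route as the paper. $A$ comes from Lemma~\ref{lemma:degree-correct-pre}, $\Theta_{\vA,j}$ and $B_{\vA,j}$ come from Theorem~\ref{thm:Hilbert} applied to $\sigma_{\vB}^{(j)}$, and the nonlinear factors of $\ta(w)$ are split by whether their $v_j$-degree exceeds $1$: your $P_j$ and its complement are exactly the paper's $N_j$ and $L_j$. Your explicit enlargement of $B_{\vA,j}$, so that factors free of $v_j$ specialize to nonzero constants, is a small detail the paper leaves implicit.
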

\begin{proof}
By Lemma \ref{lemma:degree-correct-pre},  there exists an affine variety $A\subset\C^{n-1}$ such that for any $\vA\in\Q^{n-1}\setminus A$, 

 \begin{enumerate}
\item[(i)] for every irreducible factor $\xi$ of $w$, $\ta(\xi)$ is an irreducible factor of $\ta(w)$ satisfying $\deg(\ta(\xi),v_1)=\deg(\ta(\xi),\vV)=\deg(\xi,\vU)$. \label{item:lemma:nonlinear-deg-upper-1}
\end{enumerate}
In the argument below, we assume
$\vA\in {\mathbb Q}^{n-1}\setminus A$.
For every $j\in\{1,\ldots,n\}$,
by Theorem \ref{thm:Hilbert}, there exists a Zariski dense subset $\Theta_{\vA, j}\subseteq\Q^{n-1}$ and an
affine variety $B_{\vA, j}\subset {\mathbb C}^{n-1}$ such that for any $\Vector{b}\in \Theta_{\vA, j}\setminus B_{\vA, j}$,
\begin{enumerate}
\item[(ii)]for every irreducible factor $\xi$ of $w$ with $\deg(\ta(\xi), v_j)>0$,
$\sigma_{\vB}^{(j)}\cdot\ta(\xi)$ is an irreducible factor of $\sigma_{\vB}^{(j)}\cdot\ta(w)$ satisfying
$\deg(\sigma_{\vB}^{(j)}\cdot\ta(\xi),v_j)\;=\;\deg(\ta(\xi),v_j)$.\label{item:lemma:nonlinear-deg-upper-2}
\end{enumerate}
Notice that by (i),  for every irreducible factor $\xi$ of $w$, we have $\deg(\ta(\xi),v_1)=\deg(\xi,\vU)>0$.
So, by (ii),  for any $\vB\in \Theta_{\vA,1}\setminus B_{\vA, 1}$, we have
$\deg(\ta(NLFactor),v_1)\;=\;\deg(\E_1, v_1)$.
That means  the equality in  \eqref{eq:lemma:nonlinear-deg-upper} holds for $j=1$.
For every $j\in\{2,\ldots,n\}$, we define
\begin{align*}
    N_{j}&:=\;\{\ta(\xi)\mid\ta(\xi)\;{\rm is\;a\;nonlinear\;irreducible\;factor\;of\;}\ta(w)\;{\rm and}\;\deg(\ta(\xi),v_j)>1\},\\
    L_{j}&:=\;\{\ta(\xi)\mid\ta(\xi)\;{\rm is\;a\;nonlinear\;irreducible\;factor\;of\;}\ta(w)\;{\rm and}\;\deg(\ta(\xi),v_j)\le1\}.
\end{align*}
Below, for any set $S$, $\#(S)$ denotes the number of elements of the set $S$.
By (i) and (ii), for any $\vB\in \Theta_{\vA,1}\setminus B_{\vA, 1}$,
for every irreducible factor $\xi$ of $w$,
$\sigma_{\vB}^{(1)}\cdot\ta(\xi)$ is an irreducible factor of $\sigma_{\vB}^{(1)}\cdot\ta(w)$ satisfying
$\deg(\sigma_{\vB}^{(1)}\cdot\ta(\xi),v_1)\;=\;\deg(\ta(\xi),\vV)$.
So, the number of nonlinear irreducible factors of $\ta(\xi)$ is equal to that of
$\sigma_{\vB}^{(1)}\cdot\ta(\xi)$,
i.e.,
$\#(N_{j}\cup L_{j})=k_1$.
By (ii), we have
   $\#(N_{j})=k_j$ and $\deg(\E_j, v_j)=\sum_{\ta(\xi)\in N_{j}}\deg(\ta(\xi),v_j)$. So, since $N_{j}\cap L_{j}=\emptyset$, we have $\#(L_{j})=k_1-k_j$. By the definition of $NLFactor$ in \eqref{eq:factors} and by the fact  (i),
$N_{j}\cup L_{j}$ is also the set of all  irreducible factors of $\ta(NLFactor)$.
Therefore, 
\begin{align*}
\deg(\ta(NLFactor),v_j)\;
&=\;\sum_{\ta(\xi)\in N_{j}}\deg(\ta(\xi),v_j)+\sum_{\ta(\xi)\in L_{j}}\deg(\ta(\xi),v_j)\\
&\le\;\sum_{\ta(\xi)\in N_{j}}\deg(\ta(\xi),v_j)+\#(L_{j})\\
&=\;\sum_{\ta(\xi)\in N_{j}}\deg(\ta(\xi),v_j)+(k_1-k_j)\\
&=\;\deg(\E_j, v_j)+(k_1-k_j).
\end{align*}
\end{proof}

\begin{lemma}[Algorithm \ref{alg:degree}: Degrees]
\label{lemma:alg-degree-correct}
The probabilistic algorithm Algorithm \ref{alg:degree} is correct.
\end{lemma}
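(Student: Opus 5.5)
We must show that, for generic $\vA$ and generic sample points $\vB$, the two lists returned by Algorithm \ref{alg:degree} are correct: $wDegree[j]=\deg(\ta(w),v_j)$ for all $j$, $NLFactorDegreeBound[1]=\deg(\ta(NLFactor),v_1)$, and $NLFactorDegreeBound[j]\ge\deg(\ta(NLFactor),v_j)$ for $j\ge 2$. The plan is to identify, for each $j$, the polynomial $w_j^*$ computed in Line \ref{line:degrees-3} with $\sigma^{(j)}_{\vB}\cdot\ta(w)$ up to a nonzero rational scalar. The degree statements then follow from Hilbert irreducibility (Theorem \ref{thm:Hilbert}), and the bound statements from Lemma \ref{lemma:nonlinear-deg-upper}.

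First, fix $\vA\in\Q^{n-1}\setminus A$, where $A$ is the union of the exceptional varieties from Lemma \ref{lemma:degree-correct-pre} and Lemma \ref{lemma:nonlinear-deg-upper}. By Corollary \ref{coro:Linear}, $\mI(\mV(\ta(\vf))_{\infty})=\langle\ta(w)\rangle$. For each $j$, apply Corollary \ref{coro:Sample} to $\ta(\vf)$ with $v_j$ kept free, i.e.\ to $\sigma^{(j)}_{\vB}$; this is the $\sigma^{(j)}$ analogue of Lemma \ref{lemma:alg-allone-pre}, valid by the remark after \eqref{eq:def-sample-map-j}. It gives a Zariski dense $\Theta'_{\vA,j}$ and a variety $B'_{\vA,j}$ such that $\mI(\mV(\sigma^{(j)}_{\vB}\cdot\ta(\vf))_\infty)=\langle\sigma^{(j)}_{\vB}\cdot\ta(w)\rangle$ for $\vB\in\Theta'_{\vA,j}\setminus B'_{\vA,j}$. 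Since Algorithm \ref{alg:standard} returns a generator of this ideal, we get $w_j^*=|_s\sigma^{(j)}_{\vB}\cdot\ta(w)$.

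Second, for $wDegree[j]$, we need $\deg(\sigma^{(j)}_{\vB}\cdot\ta(w),v_j)=\deg(\ta(w),v_j)$. Write $\ta(w)$ as a polynomial in $v_j$ with leading coefficient $c_j\in\Q[\vV\setminus\{v_j\}]$, which is nonzero, and enlarge $B'_{\vA,j}$ by the proper variety where $c_j$ evaluated at $\vB$ vanishes. Equivalently, one can use the degree-preservation clause of Theorem \ref{thm:Hilbert} factor by factor: factors of $\ta(w)$ free of $v_j$ specialize to nonzero constants (after excluding their zero sets), and the remaining factors keep their $v_j$-degree.

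Third, for $NLFactorDegreeBound$, Lines \ref{line:degrees-5}--\ref{line:degrees-9} compute exactly $\E_j$ and $k_j$ from Lemma \ref{lemma:nonlinear-deg-upper} applied to $w_j^*$. Taking $\vB\in\Theta_{\vA,j}\setminus B_{\vA,j}$ from that lemma, intersected with the sets above, the lemma gives $\deg(\ta(NLFactor),v_j)\le\deg(\E_j,v_j)+(k_1-k_j)$, with equality for $j=1$.

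The main obstacle is the genericity bookkeeping. A different $\vB$ is drawn for each $j$, and $k_1$ is computed in iteration $j=1$ and reused later. This is harmless once we observe that $k_1$ is the number of nonlinear irreducible factors of $\ta(w)$ for every admissible $\vB$, which is shown inside the proof of Lemma \ref{lemma:nonlinear-deg-upper}. So each iteration only requires its own $\vB$ to avoid a finite union of proper varieties within a dense set. A finite intersection of such conditions is still generic, so the algorithm is correct with probability one.
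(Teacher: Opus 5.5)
Your proposal is correct and follows essentially the same route as the paper. Like the paper, you identify $w_j^*$ with $\sigma^{(j)}_{\vB}\cdot\ta(w)$ up to a nonzero scalar via Corollary \ref{coro:Linear} and the $\sigma^{(j)}$-version of Corollary \ref{coro:Sample}, read off $\deg(\ta(w),v_j)$ from a degree-preserving generic specialization, and obtain the bound (with equality for $j=1$) from Lemma \ref{lemma:nonlinear-deg-upper}. Your direct leading-coefficient argument for degree preservation and your explicit remark that $k_1$ equals the number of nonlinear irreducible factors of $\ta(w)$ are small clarifications the paper leaves implicit; the latter makes reusing $k_1$ from iteration $j=1$, with a different $\vB$, harmless.
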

\begin{proof}
Suppose $\vf$ is the input system of likelihood equations
and $\mI(\mV(\vf)_{\infty})\;=\;\langle w\rangle$
(i.e., Assumption \ref{ap:principal} holds).
We prove that the $j$-th round of the loop (for $j=1,\ldots,n$) in Algorithm \ref{alg:degree}  is correct.
By  Lemma \ref{lemma:alg-allone-pre},
Theorem \ref{thm:Hilbert},
and Lemma \ref{lemma:nonlinear-deg-upper},
there exists an affine variety $A\subset\C^{n-1}$ such that for any $\vA\in\Q^{n-1}\setminus A$,
there exist a Zariski dense subset $\Theta_{\vA,j}\subseteq\Q^{n-1}$ and an affine variety $B_{\vA,j}\subset\C^{n-1}$ such that for any $\vB\in \Theta_{\vA,j}\setminus B_{\vA,j}$, we have 
\begin{align}
\mI(\mV(\sigma^{(j)}_{\Vector{b}}\cdot\ta(\vf))_{\infty})\;&=\;\langle \sigma^{(j)}                       _{\Vector{b}}\cdot\ta(w)\rangle,\;\text{and}\label{eq:pf-of-lemma:alg-degree-correct}\\
\deg(\ta(w),v_j)\;&=\;\deg(\sigma_{\vB}^{(j)}\cdot\ta(w), v_j),
\label{eq:pf-of-lemma:alg-degree-correct2}
\end{align}
and we also have the formula  \eqref{eq:lemma:nonlinear-deg-upper} holds (for $j=1$, the equality in  \eqref{eq:lemma:nonlinear-deg-upper} holds).
Using Algorithm \ref{alg:degree}, we can compute the monic polynomial $w_j^*\in\Q[v_j]$ (see Algorithm \ref{alg:degree}--Line \ref{line:degrees-3}) such that
\begin{align}\label{eq:apply-alg:intersect-2}
    \mI(\mV(\sigma^{(j)}_{\Vector{b}}\cdot\ta(\vf))_{\infty})\;=\;\langle w_j^*\rangle.
\end{align}
By \eqref{eq:pf-of-lemma:alg-degree-correct} and \eqref{eq:apply-alg:intersect-2}, we have
\begin{align}\label{eq:relation-2}
\sigma^{(j)}_{\Vector{b}}\cdot\ta(w)\;=|_{s}\;w_j^*,
\end{align}
where the notion ``$=|_{s}$'' means ``is equal to up to multiplication by a nonzero rational number''.
So, by \eqref{eq:pf-of-lemma:alg-degree-correct2}, the $j$-st entry of the list $wDegree$, i.e., $\deg(w_j^*, v_j)$,  
is exactly $\deg(\ta(w),v_j)$,
and by \eqref{eq:lemma:nonlinear-deg-upper},
the $j$-st entry of the list $NLFactorDegreeBound$ (see Algorithm \ref{alg:degree}-Line \ref{line:degrees-9}),
is an upper bound for $\deg(\ta(NLFactor),v_j)$
(for $j=1$, the first entry is
exactly $\deg(\ta(NLFactor),v_1)$).

Similarly to Algorithm \ref{alg:allone}, in practice, we cannot guarantee the input $\vA$ is generic or the vector $\vB$ we choose in
Algorithm \ref{alg:degree}--Line \ref{line:degrees-2} is from $\Theta_{\vA, j}\setminus B_{\vA, j}$, so Algorithm \ref{alg:degree} is probabilistic.
\end{proof}
\begin{remark}
In \cite[Algorithm 2]{Tang2017}, the authors also used
the specialization/linear-lifting method for interpolating
the discriminants of likelihood equations.
In  the first step of \cite[Algorithm 2]{Tang2017}, one needs to compute  the  degrees of discriminants. Here, in Algorithm \ref{alg:degree}, we can only compute the upper bounds for the degrees of $\ta(NLFactor)$ {\it w.r.t} $v_i$ ($i\in \{2, \ldots, n\}$). It might affect the efficiency of the interpolation since one needs to do more sampling steps if these upper bounds are not tight, but it does not affect the correctness.  However, our experiments show that the
upper bounds computed by Algorithm \ref{alg:degree} are usually tight in practice.
\end{remark}

\subsubsection{Correctness of Algorithm \ref{alg:nonlinear}}\label{subsubsec:correct-nonlinear}

\vspace{-1.6mm}

\begin{lemma}[Algorithm \ref{alg:nonlinear}: Interpolation]
\label{lemma:alg-nonlinear-correct}
The probabilistic algorithm Algorithm
\ref{alg:nonlinear} is correct.
\end{lemma}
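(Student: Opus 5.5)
Assume, as for Lemmas \ref{lemma:alg-allone-correct} and \ref{lemma:alg-degree-correct}, that $\mI(\mV(\vf)_{\infty})=\langle w\rangle$, that $\vA\in\Q^{n-1}$ avoids the proper varieties of Lemmas \ref{lemma:degree-correct-pre}, \ref{lemma:nonlinear-deg-upper} and \ref{lemma:alg-allone-correct}, and that the inputs $\ta(LFactor^{(1)})$, $wDegree$ and $NLFactorDegreeBound$ are correct. The plan is to show that, for generic sample points $\vB_k=(b_{k,1},\ldots,b_{k,n-1})$, each sampled polynomial $w^*$ equals $\sigma_{\vB_k}\cdot\ta(w)$ up to a scalar and splits into the expected pieces. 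The coefficient computations in Lines \ref{line:nonlinear-12} and \ref{line:nonlinear-28} are then exact linear interpolations of the coefficients of $\ta(NLFactor)$ and $\ta(LFactor^{(2)})$, suitably normalized, in $v_1$.

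\textbf{Step 1: normalization and structure of the coefficients.} By Lemma \ref{lemma:degree-correct-pre}, every irreducible factor $\xi$ of $w$ satisfies $\deg(\ta(\xi),v_1)=\deg(\xi,\vU)$. By Remark \ref{rmk:homo}, $\xi$ is homogeneous, so the coefficient of $v_1^{\deg\xi}$ in $\ta(\xi)$ is a nonzero rational constant. Divide each of $\ta(NLFactor)$ and $\ta(LFactor^{(2)})$ by its $v_1$-leading constant to make it monic in $v_1$. Write the results as
\[
v_1^{d_1}+\sum_{i=1}^{d_1} C_i(v_2,\ldots,v_n)\,v_1^{d_1-i}, \qquad v_1^{d_2}+\sum_{i=1}^{d_2} D_i(v_2,\ldots,v_n)\,v_1^{d_2-i}.
\]
Homogeneity gives $C_i$ and $D_i$ homogeneous of degree $i$. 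Also $\deg(C_i,v_j)\le\deg(\ta(NLFactor),v_j)\le NLFactorDegreeBound[j]$ by Lemma \ref{lemma:alg-degree-correct}, so $C_i$ lies in the span of $U_{i,1},\ldots,U_{i,N_i}$. For $d_2$ use $\deg(\ta(w),v_1)=\deg(w,\vU)$. For the $V_{i,r}$, bound $\deg(D_i,v_j)$ by $wDegree[j]-\deg(\ta(LFactor^{(1)}),v_j)-\deg(\ta(NLFactor),v_j)$, which follows from multiplicativity of degree in $v_j$. Here $q_1$ is already known to equal the normalized $\ta(NLFactor)$ by the first half of the argument.

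\textbf{Step 2: the samples give the right values.} Apply Lemma \ref{lemma:alg-allone-pre} and Theorem \ref{thm:Hilbert} to $\ta(w)$. For $\vB_k$ in a Zariski dense set minus a proper variety, $w^*=|_s\sigma_{\vB_k}\cdot\ta(w)$. Moreover, $\sigma_{\vB_k}$ sends each nonlinear irreducible factor of $\ta(w)$ to an irreducible polynomial of the same $v_1$-degree (hence still nonlinear), and linear factors to linear factors. Distinct factors stay distinct; this is also needed against $\sigma_{\vB_k}\cdot\ta(LFactor^{(1)})$, for which we exclude the variety $V_{\vB}$ of Lemma \ref{lemma:alg-allone-correct}. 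Consequently the monic $\eta^*$ equals $\sigma_{\vB_k}$ of the normalized $\ta(NLFactor)$, and $\zeta^*$ equals $\sigma_{\vB_k}$ of the normalized $\ta(LFactor^{(2)})$. Thus $C^*_{i,k}=C_i(\vB_k)$ and $D^*_{i,k}=D_i(\vB_k)$.

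\textbf{Step 3: invertibility of the interpolation matrices (main obstacle).} We need each $\mathcal M_i$ to be nonsingular. Its determinant, as a polynomial in the $N_i(n-1)$ coordinates of the sample points, is a generalized Vandermonde determinant in distinct monomials. It is not identically zero: expanding along permutations, the term assigning the monomial $U_{i,\pi(k)}$ to the $k$-th point yields distinct monomials in the independent variable blocks $\vB_k$, so no cancellation occurs. Hence nonsingularity is another proper Zariski-closed exclusion, jointly over all $\vB_1,\ldots,\vB_{\max(N,M)}$. The delicate point is that Theorem \ref{thm:Hilbert} only gives a Zariski dense Hilbert set $\Theta$ rather than a Zariski open set. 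We therefore need $\Theta^{N}$ minus a proper subvariety of $(\C^{n-1})^{N}$ to be nonempty. Since a Hilbert set is Zariski dense, we choose the points one at a time: at each stage, fix the previous points and note that the remaining minor condition is a nonzero polynomial in the new point, which a dense set cannot entirely satisfy. With these conditions, $\mathcal M_i^{-1}$ recovers the exact coefficient vector of $C_i$ (resp.\ $D_i$) in the monomial basis. Hence $q_1$ and $q_2$ equal the normalized $\ta(NLFactor)$ and $\ta(LFactor^{(2)})$, which differ from the true factors only by nonzero rational scalars, harmless for a generator of $\mI(\LX_\infty)$. Failure occurs only if a random choice lands in the excluded sets, so the algorithm is correct with probability $1$.
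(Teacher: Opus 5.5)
Your proposal is correct and follows the same route as the paper. Lemma~\ref{lemma:alg-allone-pre} together with Theorem~\ref{thm:Hilbert} identifies $\eta^*$ and $\zeta^*$ (up to scaling) with the specializations of $\ta(NLFactor)$ and $\ta(LFactor^{(2)})$, and the homogeneous coefficients $C_i$, $D_i$ are then recovered by linear interpolation over monomial sets bounded by the outputs of Algorithm~\ref{alg:degree}. Your Steps~1--3 are more careful than the paper's proof, which takes for granted the monic normalization, the separation of the linear factors of $LFactor^{(2)}$ from those of $LFactor^{(1)}$ after specialization, and the nonsingularity of the matrices $\mathcal{M}_i$ (including your point that Hilbert sets are only dense, not open, so the sample points should be chosen one at a time).
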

\begin{proof}
Suppose $\vf$ is a given system of likelihood equations
and $\mI(\mV(\vf)_{\infty})\;=\;\langle w\rangle$
(i.e., Assumption \ref{ap:principal} holds). By Lemma \ref{lemma:alg-allone-pre} and
Theorem \ref{thm:Hilbert}, there exists an affine variety $A\subset\C^{n-1}$ such that for any $\vA\in A$, there exists a Zariski dense subset $\Theta_{\vA}\subseteq\Q^{n-1}$ and an affine variety $B_{\vA}\subset\C^{n-1}$ such that for any $\vB\in\Theta_{\vA}\setminus B_{\vA}$, we have
\begin{enumerate}[(i)]
\item\label{item:lemma:alg-nonlinear-correct-1}
$\mI(\mV(\sigma_{\Vector{b}}\cdot\ta(\vf))_{\infty})\;=\;\langle \sigma_{\Vector{b}}\cdot\ta(w)\rangle$,
\item\label{item:lemma:alg-nonlinear-correct-4}
$\sigma_{\vB}\cdot\ta(NLFactor)=|_{s}$ the product of nonlinear irreducible factors of $\sigma_{\vB}\cdot\ta(w)$, and
\item\label{item:lemma:alg-nonlinear-correct-5}
$\sigma_{\vB}\cdot\ta(LFactor^{(2)})=|_{s}$  the product of linear factors of $\sigma_{\vB}\cdot\ta(w/LFactor^{(1)})$,
\end{enumerate}
where the notion ``$=|_{s}$'' means ``is equal to up to multiplication by a nonzero rational number''.
In the argument below, we assume
$\vA\in {\mathbb Q}^{n-1}\setminus A$.
Notice that when we run this algorithm,
we also input the polynomial  $\ta(LFactor^{(1)})$ (defined in
\eqref{eq:lcfactors}),
a list $wDegree$ (whose
	   $j$-th entry is $\deg(\ta(w),v_j)$) and
a list $NLFactorDegreeBound$ (whose
	    first entry is $\deg(\ta(NLFactor),v_1)$ and
	    $j$-th entry is an upper bound of $\deg(\ta(NLFactor),v_j)$ for $j=2,\ldots,n$). Remark that these can be computed by the previous Algorithm
\ref{alg:allone} and Algorithm \ref{alg:degree}.


First, we explain how to interpolate $\ta(NLFactor)$.
Assume that $$\deg(\ta(NLFactor), v_1)=d_1.$$
Recall again that $d_1$ is exactly the first entry in the input list $NLFactorDegreeBound$.
Then, we can write
$$\ta(NLFactor)=v_1^{d_1}+\sum_{i=1}^{d_1}C_{i}\cdot v_1^{d_1-i},$$
where $C_{i}\in {\mathbb Q}[v_2, \ldots, v_n]$ is a homogeneous polynomial with total degree $\deg(C_{i})=i$ (here, recall that by Remark \ref{rmk:homo}, $NLFactor$ is homogeneous and so is $C_i$).
In Algorithm \ref{alg:nonlinear}, we interpolate
$C_{i}$ by a standard evaluation/linear-lifting method.
More specifically, $C_{i}$ is a linear combination of
the monomials in the following set
\begin{align*}
{\rm Mon}^{(NL)}_i\;:=\;&\{U\in {\mathbb Q}[v_2, \ldots, v_n]\mid\deg(U)=i, \;\text{and} \\
&\deg(U,v_j)\le NLFactorDegreeBound[j]\}.
\end{align*}
Here, recall that $NLFactorDegreeBound[j]$ is the input of this algorithm, which gives an upper bound of $\deg(\ta(NLFactor),v_j)$.
For $i\in \{1, \ldots, d_1\}$, suppose the number of
monomials in ${\rm Mon}^{(NL)}_i$ is $N_i$, and suppose
$N:=\max(N_1, \ldots, N_{d_1})$. Then, we can compute all coefficients
of $C_{i}$ by sampling
$N$ times. By ``sampling once", we mean choosing a vector
$\vB\in \Theta_{\vA}\setminus B_{\vA}$, and  computing
the monic polynomial $w^*\in\Q[v_1]$ (see Algorithm \ref{alg:nonlinear}--Line \ref{line:nonlinear-7}) such that
\begin{align}\label{eq:apply-alg:intersect-3}
\mI(\mV(\sigma_{\Vector{b}}\cdot\ta(\vf))_{\infty})\;=\;\langle w^*\rangle.
\end{align}
Comparing \eqref{item:lemma:alg-nonlinear-correct-1} and \eqref{eq:apply-alg:intersect-3}, we have
\begin{align}\label{eq:relation-3}
    \sigma_{\Vector{b}}\cdot\ta(w)\;=|_{s}\;w^*.
\end{align}
So, by \eqref{item:lemma:alg-nonlinear-correct-4},  $\sigma_{\vB}\cdot\ta(NLFactor)$ is equal to the product of nonlinear irreducible factors of $w^*$ (i.e., $\eta^*$ in Algorithm \ref{alg:nonlinear}--Line \ref{line:nonlinear-8}) up to scaling.
Therefore, we choose different vectors from $\Theta_{\vA}\setminus B_{\vA}$, say $N$ times, and we can recover all  the coefficients of
$C_{i}$ by solving linear systems (see Algorithm \ref{alg:nonlinear}--Line \ref{line:nonlinear-12}).

Second, we similarly interpolate $\ta(LFactor^{(2)})$.
Assume that $$\deg(\ta(LFactor^{(2)}), v_1)=d_2.$$
Notice that by \eqref{eq:lcfactors} we can compute $d_2$ by the fact that $$d_2= \deg(\ta(w), v_1)-\deg(\ta(LFactor^{(1)}),v_1)-\deg(\ta(NLFactor),v_1)
,$$ where $\deg(\ta(w), v_1)$ is the first entry in the input list $wDegree$,
$\deg(\ta(LFactor^{(1)}),v_1)$ can be read directly from the input $\ta(LFactor^{(1)})$, and $\deg(\ta(NLFactor),v_1)$ is exactly $d_1$. Then, we can write
$$\ta(LFactor^{(2)})=v_1^{d_2}+\sum_{i=1}^{d_2}D_{i}\cdot v_1^{d_2-i},$$
where $D_{i}\in {\mathbb Q}[v_2, \ldots, v_n]$ is a homogeneous polynomial with total degree $\deg(D_{i})=i$.
Note that $D_{i}$ is a linear combination of
the monomials in
$${\rm Mon}^{(L)}_i:=\{U\in {\mathbb Q}[v_2, \ldots, v_n]\mid\deg(U)=i, \deg(U,v_j)\le \deg(\ta(LFactor^{(2)}),v_j)
,$$
where
{\footnotesize
$$\deg(\ta(LFactor^{(2)}),v_j)=\deg(\ta(w), v_j)-\deg(\ta(LFactor^{(1)}),v_j)-\deg(\ta(NLFactor),v_j).$$
}For $i\in \{1, \ldots, d_2\}$, suppose the number of
monomials in ${\rm Mon}^{(L)}_i$ is $M_i$, and suppose
$M:=\max(M_1, \ldots, M_{d_2})$. Then, we can compute all coefficients
of $D_{i}$ by sampling
$M$ times.
Again, by ``sampling once", we mean choosing a vector
$\vB\in \Theta_{\vA}\setminus B_{\vA}$, and  computing
the monic polynomial $w^*\in\Q[v_1]$ such that
\eqref{eq:apply-alg:intersect-3} holds.
By \eqref{eq:relation-3} and \eqref{item:lemma:alg-nonlinear-correct-5}, $\sigma_{\vB}\cdot\ta(LFactor^{(2)})$ is equal to the product of nonlinear irreducible factors  of $w^{*}/\sigma_{\vB}\cdot\ta(LFactor^{(1)})$ up to scaling.
So, we choose different vectors from $\Theta_{\vA}\setminus B_{\vA}$, say $M$ times, and we can recover all  the coefficients of
$D_{i}$ by solving linear systems  (see Algorithm \ref{alg:nonlinear}--Line \ref{line:nonlinear-28}).

Similarly to Algorithm \ref{alg:allone} and Algorithm \ref{alg:degree}, in practice, we cannot guarantee the input $\vA$ is generic or the vector $\vB$ we choose in
Algorithm \ref{alg:nonlinear}--Line \ref{line:nonlinear-6} (or Line
\ref{line:nonlinear-22})  is from $\Theta_{\vA}\setminus B_{\vA}$, so Algorithm \ref{alg:nonlinear} is probabilistic.
\end{proof}

\begin{remark}
Notice that in Algorithm
\ref{alg:nonlinear},
we interpolate the two output polynomials
$\ta(NLFactor)$ and $\ta(LFactor^{(2)})$ simultaneously.
Suppose we need to do the sampling steps respectively $N$ and $M$ times for interpolating these two polynomials. In Algorithm
\ref{alg:nonlinear}, we only
carry out the sampling steps  $\max(M,N)$ times.
\end{remark}

\subsection{Running Example}\label{subsec:examples}
We revisit the $3\times 3$ symmetric matrix model in Example \ref{ex:symmetric}.
Recall that
 the system of Lagrange likelihood equations can be written as $\vf=\{f_1,\ldots, f_8\}$ in \eqref{eq:ex1}.
There are $6$ parameters $u_1,\ldots, u_6$ and $8$ variables $p_1,\ldots, p_6,\lambda_1,\lambda_2$. The goal of   this section  is to compute a polynomial $w\in\Q[\vU]$ by Algorithm \ref{alg:new} such that ${\mathcal I}({\mathcal V}(\vf)_{\infty})=\langle w\rangle$. Recall that we assume $w
$ can be factorized as in \eqref{eq:factors}:
\begin{align*}
w\;=\;LFactor^{(1)}\cdot NLFactor\cdot LFactor^{(2)}.
\end{align*}
In the first example below, we apply a linear transformation to the original parameters.
\begin{example}[{\bf LinearTransformation}]\label{eg:linearop}
We apply the following linear transformation $\ta$, where $\vA=(a_1,\ldots,a_5)=(1,\ldots,1)\in\Q^5$, to the system $\vf$ \eqref{eq:ex1}:
\begin{align}
u_1&=v_1,      &   u_2&=v_2+v_1,  & u_3&=v_3+v_1, \notag\\
u_4&=v_4+v_1,  & u_5&=v_5+v_1,&u_6&=v_6+v_1.\label{eq:exlinear}
\end{align}
Then, we obtain the system after the linear transformation $\ta(\vf)$:
\begin{align}\label{eq:newf}
\begin{array}{rl}
f_1&=\;p_1(\lambda_1+(2p_3p_5-4p_1p_6)\lambda_2)-v_1, \\
f_2&=\;p_2(\lambda_1+(8p_4p_6-2p_5^2)\lambda_2)-(v_2+v_1), \\
f_3&=\;p_3(\lambda_1+(2p_1p_5-4p_3p_4)\lambda_2)-(v_3+v_1), \\
f_4&=\;p_4(\lambda_1+(8p_2p_6-2p_3^2)\lambda_2)-(v_4+v_1), \\
f_5&=\;p_5(\lambda_1+(2p_1p_3-4p_2p_5)\lambda_2)-(v_5+v_1), \\
f_6&=\;p_6(\lambda_1+(8p_2p_4-2p_1^2)\lambda_2)-(v_6+v_1), \\
f_7&=\;2p_1p_3p_5+8p_2p_4p_6-2p_2p_5^2-2p_3^2p_4-2p_1^2p_6, \\
f_8&=\;p_1+p_2+p_3+p_4+p_5+p_6-1.
\end{array}
\end{align}
Recall that if we apply the transformation to \eqref{eq:factors}, then   $\ta(w)$ can be factored as in
\eqref{eq:lcfactors}:
\begin{align*}
\ta(w)\;=\;\ta(LFactor^{(1)})\cdot\ta(NLFactor)\cdot\ta(LFactor^{(2)}).
\end{align*}
In the following examples, we will compute the above three factors respectively.
\end{example}

\begin{example}[{\bf AllOne}]\label{eg:allone}
We show how to compute the factor $\ta(LFactor^{(1)})$ in \eqref{eq:lcfactors} by  Algorithm \ref{alg:allone}.
First, we choose $\Vector{b}=(b_1,\ldots,b_5)=(29,43,89,149,247)\in\Q^{5}$.
By specializing $(v_2, \ldots, v_6)$ with $\Vector{b}$ in $\ta(\vf)$ shown in \eqref{eq:newf}, we obtain a specialized system $\sigma_{\Vector{b}}\cdot\ta(\vf)\subset {\mathbb Q}[v_1, \vX]$.
According to Algorithm \ref{alg:standard}, we compute the monic polynomial $w^*\in {\mathbb Q}[v_1]$ such that \[{\mathcal I}({\mathcal V}(\sigma_{\Vector{b}}\cdot\ta(\vf))_{\infty})=\langle w^*\rangle. \]
From the computational result, we list all monic linear factors of $w^*$:
{\footnotesize
\begin{align}\label{eq:lfactorlist}
[v_1+\frac{118}{3},\; v_1+\frac{319}{3},\; v_1+\frac{485}{3},\; v_1+\frac{557}{6},\; v_1+\frac{343}{2},\; v_1+\frac{101}{4},\; v_1+\frac{327}{4}].
\end{align}}For the first factor $v_1+\frac{118}{3}$ in the above list \eqref{eq:lfactorlist}, note that the constant term is $\frac{118}{3}$, which is exactly the value ``$c$"   in Algorithm \ref{alg:allone}--Line \ref{line:allone-5}.
For the subset $T=\{1,2,4\}\subset \{1,\ldots,6\}$, the value
``$\chi_T$" defined in Algorithm \ref{alg:allone}--Line \ref{line:allone-9} is equal to $1+a_1+a_3=3$ (here, recall that $a_1=a_3=1$ according to Example \ref{eg:linearop}).
So, it is straightforward to compute that $\frac{1}{\chi_T}\sum_{j\in T\setminus\{1\}}b_{j-1}=\frac{118}{3}$.
And hence, we have $c=\frac{1}{\chi_T}\sum_{j\in T\setminus\{1\}}b_{j-1}$, which means the condition in Algorithm \ref{alg:allone}--Line \ref{line:allone-7} holds.
Therefore, by Algorithm \ref{alg:allone}--Line\ref{line:allone-8}, we known that  $\ta(LFactor^{(1)})$ has a factor $\chi_{T}v_1+\sum_{j\in T\setminus\{1\}}v_j=3v_1+v_2+v_4$.
Similarly, by the second, the third  and the fourth  factors in the list \eqref{eq:lfactorlist},
we can compute that $3v_1+v_2+v_3+v_6$, $3v_1+v_4+v_5+v_6$ and $6v_1+v_2+v_3+v_4+v_5+v_6$ are factors of $\ta(LFactor^{(1)})$.
It is also straightforward to check that the last three factors in the list \eqref{eq:lfactorlist} do not satisfy the condition in Algorithm \ref{alg:allone}--Line \ref{line:allone-7}.
So, we conclude that
{\footnotesize
\begin{align}
  \ta(LFactor^{(1)})\;=\;&(3v_1+v_2+v_4)\cdot(3v_1+v_2+v_3+v_6)\cdot(3v_1+v_4+v_5+v_6)\notag\\
  &\cdot(6v_1+v_2+v_3+v_4+v_5+v_6).\label{eq:LinearAllOne}
\end{align}}
\end{example}

\begin{example}[{\bf Degrees}]\label{eg:degrees}
In this example,
we show how to compute by Algorithm \ref{alg:degree}
\begin{itemize}
    \item $\deg(\ta(w),v_i)$ for every $i\in\{1,\ldots,n\}$ (recorded in a list $wDegree$),
    \item $\deg(\ta(NLFactor),v_1)$ and an upper bound of  $\deg(\ta(NLFactor),v_j)$ for every $j\in\{2,\ldots,n\}$
    (recorded in a list $NLFactorDegreeBound$).
\end{itemize}
Later in Example \ref{eg:nonlinear}, we will use these results to interpolate $\ta(NLFactor)$ and $\ta(LFactor^{(2)})$.

First, we compute $\deg(\ta(w), v_1)$ and $\deg(\ta(NLFactor), v_1)$.
We choose a 
vector
$\Vector{b}=(4,3,7,8,9)\in\Q^5$.
By specializing $(v_2, \ldots, v_6)$ with $\Vector{b}$ in $\ta(\vf)$ shown in \eqref{eq:newf}, we obtain a specialized system $\sigma_{\Vector{b}}^{(1)}\cdot\ta(\vf)\subset {\mathbb Q}[v_1, \vX]$ (here, recall that $\sigma_{\Vector{b}}^{(j)}$ is defined in \eqref{eq:def-sample-map-j}, where $\sigma_{\Vector{b}}^{(1)}$ is exactly $\sigma_{\Vector{b}}$).
By Algorithm \ref{alg:standard}, we compute the monic polynomial $w^*\in {\mathbb Q}[v_1]$ such that ${\mathcal I}({\mathcal V}(\sigma_{\Vector{b}}^{(1)}\cdot\ta(\vf))_{\infty})=\langle w^*\rangle$.
By the computational result, we obtain all factors of $w^*$:
{\footnotesize
\begin{align*}
[v_1^3+\frac{49}{2}v_1^2+\frac{353}{2}v_1+\frac{689}{2},\; v_1+\frac{11}{3},\; v_1+\frac{29}{4},\; v_1+\frac{11}{4},\; v_1+\frac{16}{3},\; v_1+\frac{31}{6},\; v_1+8,\; v_1+\frac{11}{2}],
\end{align*}}and by Remark \ref{rmk:homo},  each factor in the above
list has multiplicity $1$.
By \eqref{eq:apply-two-coros-1}, we have
\begin{align}
    \sigma_{\Vector{b}}^{(1)}\cdot\ta(w)\;=|_{s}\;w^*,
\end{align}
where recall that the notion ``$=|_{s}$'' means ``is equal to up to multiplication by a nonzero rational number''.
So, by Theorem \ref{thm:Hilbert}, we have $\deg(\ta(w),v_1)=\deg(\sigma_{\Vector{b}}^{(1)}\cdot\ta(w))=10$.
Notice that  the first factor is the only nonlinear factor in the above list.
Then, by Lines \ref{line:degrees-5}--\ref{line:degrees-9} in Algorithm \ref{alg:degree},
we have $\deg(\ta(NLFactor), v_1)=\deg(v_1^3+\frac{49}{2}v_1^2+\frac{353}{2}v_1+\frac{689}{2})=3$.

Second, we compute $\deg(\ta(w), v_2)$ and an upper bound of $\deg(\ta(NLFactor), v_2)$.
Here, we choose $\Vector{b}=(2,4,7,8,1)\in\Q^5$. By specializing $(v_1, v_3,v_4, v_5, v_6)$ with $\Vector{b}$ in $\ta(\vf)$ shown in \eqref{eq:newf}, we obtain a specialized system $\sigma_{\Vector{b}}^{(2)}\cdot\ta(\vf)\subset {\mathbb Q}[v_2, \vX]$.
Similarly, we can compute the list of all factors of
$\sigma_{\Vector{b}}^{(2)}\cdot\ta(w)$:
{\footnotesize
\begin{align}\label{eq:lidegreev2}
[v_2+32, v_2-25, v_2+11, v_2+6, v_2+13],
\end{align}}and every factor has multiplicity $1$.
Therefore, by Theorem \ref{thm:Hilbert}, $\deg(\ta(w), v_2)=\deg(\sigma_{\Vector{b}}^{(2)}\cdot\ta(w))=5$.
Note that there is no nonlinear factors in the above list \eqref{eq:lidegreev2}.
So, by Lines \ref{line:degrees-5}--\ref{line:degrees-9} in Algorithm \ref{alg:degree}, we have $\deg(\ta(NLFactor), v_2)\le 1$ (notice that
in this example, the values ``$k_1$" and ``$k_2$" defined in Algorithm \ref{alg:degree}-Line \ref{line:degrees-7} are $1$ and $0$, respectively). For any $i$ $(3\le i\le 6)$, by the same method, we can compute $\deg(\ta(w), v_i)$ and an upper bound of $\deg(\ta(NLFactor), v_i)$. Finally, we obtain $wDegree=[10, 5, 6, 5, 6, 5]$ and
$NLFactorDegreeBound=[3, 1, 2, 1, 2, 1]$.
\end{example}

\begin{example}[{\bf Interpolation}]\label{eg:nonlinear}
We apply Algorithm \ref{alg:nonlinear} to compute
$\ta(NLFactor)$ and $\ta(LFactor^{(2)})$ in \eqref{eq:lcfactors}.
From Example \ref{eg:degrees}, we know that $\deg(\ta(w),v_1)=10$ and
$\deg(\ta(NLFactor),v_1)=3$.
By \eqref{eq:LinearAllOne}, we see that
$\deg(\ta(LFactor^{(1)}),v_1)=4$.
So, by \eqref{eq:lcfactors}, we have
$\deg(\ta(LFactor^{(2)}),v_1)=3$.
By Lemma \ref{lemma:degree-correct-pre}, we can assume that
\begin{align}
\ta(NLFactor)\;&=|_{s}\;v_1^{3}+C_1\cdot v_1^2+C_2\cdot v_1+C_3,\;{\rm and}\label{eq:ex-NLFactor-up-to-scaling}\\
\ta(LFactor^{(2)})\;&=|_{s}\;v_1^{3}+D_1\cdot v_1^2+D_2\cdot v_1+D_3,\label{eq:ex-LFactor2-up-to-scaling}
\end{align}
where $C_{i}$ or $D_i$ in $\Q[v_2,\ldots,v_6]$ is either the zero polynomial or  homogeneous with total degree $i$, and the notion ``$=|_{s}$'' means ``is equal to up to multiplication by a nonzero rational number''.

First, we show how to interpolate $\ta(NLFactor)$.
By the upper bound of $\deg(\ta(NLFactor),v_j)$ $(j=2,\ldots,6)$ computed in Example \ref{eg:degrees}, we
enumerate all the
possible monomials for each $C_i$ ($i\in \{1, 2, 3\}$) in \eqref{eq:ex-NLFactor-up-to-scaling}, and we record these monomials in ${\tt Mon}_i$:
{\footnotesize
\begin{align*}
    {\tt Mon}_1\;=\;\{&v_2,v_3,v_4,v_5,v_6\},\\
    {\tt Mon}_2\;=\;\{&v_2v_3, v_2v_4, v_2v_6, v_2v_5, v_3^2, v_3v_4, v_3v_6, v_3v_5, v_4v_6, v_4v_5, v_5v_6, v_5^2\},\\
    {\tt Mon}_3\;=\;\{&v_2v_3^2, v_2v_3v_4, v_2v_3v_6, v_2v_3v_5, v_2v_4v_6, v_2v_4v_5, v_2v_5v_6, v_2v_5^2, v_3^2v_4, v_3^2v_6, v_3^2v_5, v_3v_4v_6, \\
    &v_3v_4v_5, v_3v_5v_6, v_3v_5^2, v_4v_5v_6, v_4v_5^2, v_5^2v_6\}.
\end{align*}}Below, we show how to interpolate $C_1$ by a sampling/linear-lifting method. One can similarly compute $C_2$ and $C_3$.
By the monomials recorded in ${\tt Mon}_1$, we can assume that $$C_1=C_{11}v_2+C_{12}v_3+C_{13}v_4+C_{14}v_5+C_{15}v_6, \;\text{where}\; C_{1i}\in\mathbb{Q}\;\text{for}\; i\in\{1,\ldots,5\}.$$
For the first time of sampling, we choose one vector $\vB_1=(27,17,8,5,26)$, and we compute the monic polynomial $w^*\in\Q[v_1]$ such that ${\mathcal I}({\mathcal V}(\sigma_{\Vector{b}_1}\cdot\ta(\vf))_{\infty})=\langle w^*\rangle$. From the computational result, we obtain
that $w^*$ has the list of factors:
{\footnotesize
\begin{align}\label{eq:listfactorw}
    [v_1^3+\frac{161}{2}v_1^2+\frac{3733}{2}v_1+\frac{19477}{2},\; v_1+\frac{71}{4},\;v_1+13,\;v_1+\frac{21}{4},\;v_1+\frac{83}{6},\;v_1+\frac{37}{2},\;v_1+\frac{70}{3},\;v_1+\frac{35}{3}],
\end{align}}where each factor has multiplicity $1$.
By Lemma \ref{lemma:alg-nonlinear-correct},
{\footnotesize
\begin{align}\label{eq:nonlinear1}
\sigma_{\vB_1}\cdot\ta(NLFactor)\;=|_{s}\;v_1^3+\frac{161}{2}v_1^2+\frac{3733}{2}v_1+\frac{19477}{2}.
\end{align}}Next, we respectively take $\vB_2=(9, 2, 18, 20, 24)$, $\vB_3=(11, 19, 25, 13, 29)$, $\vB_4=(28, 3, 14, 6, 30)$ and $\vB_5=(15, 4, 10, 12, 16)$.
Similarly, we can compute that
{\footnotesize
\begin{align}\label{eq:nonlinear2}
\sigma_{\vB_2}\cdot\ta(NLFactor)&\;=|_{s}\;v_1^3+\frac{131}{2}v_1^2+1222v_1+5940, \notag\\
\sigma_{\vB_3}\cdot\ta(NLFactor)&\;=|_{s}\;v_1^3+\frac{163}{2}v_1^2+\frac{3757}{2}v_1+10508, \notag\\
\sigma_{\vB_4}\cdot\ta(NLFactor)&\;=|_{s}\;v_1^3+\frac{207}{2}v_1^2+\frac{6161}{2}v_1+22953,\notag\\
\sigma_{\vB_5}\cdot\ta(NLFactor)&\;=|_{s}\;v_1^3+\frac{107}{2}v_1^2+824v_1+3640.
\end{align}}So far, we have done the sampling $5$ times since $\#({\tt Mon}_1)=5$.
Comparing the coefficients in  \eqref{eq:ex-NLFactor-up-to-scaling}, \eqref{eq:nonlinear1}, and \eqref{eq:nonlinear2}, we have the linear equations below
{\footnotesize
\begin{align*}
27C_{11}+17C_{12}+8C_{13}+5C_{14}+26C_{15}&\;=\;\frac{161}{2},\\
9C_{11}+2C_{12}+18C_{13}+20C_{14}+24C_{15}&\;=\;\frac{131}{2},\\
11C_{11}+19C_{12}+25C_{13}+13C_{14}+29C_{15}&\;=\;\frac{163}{2},\\
28C_{11}+3C_{12}+14C_{13}+6C_{14}+30C_{15}&\;=\;\frac{207}{2},\\
15C_{11}+4C_{12}+10C_{13}+12C_{14}+16C_{15}&\;=\;\frac{107}{2}.
\end{align*}}Solving the linear system yields $C_{11}=\frac{3}{2}, C_{12}=-\frac{1}{2}, C_{13}=\frac{3}{2}, C_{14}=-\frac{1}{2}$, and $C_{15}=\frac{3}{2}$. We can similarly compute $C_2$ and $C_3$, and we get
{\footnotesize
\begin{align}
\ta(NLFactor)\;=|_{s}\;&v_1^3+\frac{3}{2}v_1^2v_2-\frac{1}{2}v_1^2v_3+\frac{3}{2}v_1^2v_4-\frac{1}{2}v_1^2v_5+\frac{3}{2}v_1^2v_6+2v_1v_2v_4-v_1v_2v_5\notag\\
&+2v_1v_2v_6-\frac{1}{2}v_1v_3^2-v_1v_3v_4+\frac{1}{2}v_1v_3v_5
+2v_1v_4v_6-\frac{1}{2}v_1v_5^2\notag\\
&+2v_2v_4v_6-\frac{1}{2}v_2v_5^2-\frac{1}{2}v_3^2v_4.
\label{eq:nonlinear}
\end{align}}Notice that $\max(\#({\tt Mon}_1),\#({\tt Mon}_2),\#({\tt Mon}_3))=18$. That means we need to do the sampling
$18$ times for computing all $C_i$.

Second, we explain how to simultaneously  interpolate  $\ta(LFactor^{(2)})$.
By the previous results $\ta(LFactor^{(1)})$ \eqref{eq:LinearAllOne} computed in Example \ref{eg:allone},
$\deg(\ta(w), v_i)$  computed in Example \ref{eg:degrees}, and $\ta(NLFactor)$  \eqref{eq:nonlinear} computed before,
we can get $\deg(\ta(LFactor^{(2)}),v_{i})$ for $i=1,\ldots,6$.
So, we can enumerate all possible monomials of $D_1$, $D_2$ and $D_3$ in \eqref{eq:ex-LFactor2-up-to-scaling}.
Notice that  for interpolating $\ta(LFactor^{(2)})$, we can make use of the samplings we have done in the first step while interpolating $\ta(NLFactor)$. For instance,
for the first time of sampling we did,  by \eqref{eq:LinearAllOne}, we have
{\footnotesize
\begin{align}
    \sigma_{\vB_1}\cdot\ta(LFactor^{(1)})\;=\;54\cdot(v_1+\frac{35}{3})\cdot(v_1+\frac{70}{3})\cdot(v_1+13)\cdot(v_1+\frac{83}{6}).
\end{align}}So, by \eqref{eq:listfactorw}, \eqref{eq:nonlinear1}, and the fact that
{\footnotesize
\begin{align*}
    w^*\;=|_{s}\;\sigma_{\vB_1}\cdot\ta(w)= \sigma_{\vB_1}\cdot\ta(LFactor^{(1)}\cdot NLFactor\cdot LFactor^{(2)}),
\end{align*}}we have
{\footnotesize
\begin{align*}
    \sigma_{\vB_1}\cdot\ta(LFactor^{(2)})\;=\;(v_1+\frac{71}{4})\cdot(v_1+\frac{21}{4})\cdot(v_1+\frac{37}{2}).
\end{align*}}We remark that in this example, we need to sample at most $18$ times for computing all $D_i$.
But we have already sampled $18$ times while computing $\ta(NLFactor)$.
So, we only need to do the linear lifting, and we obtain
{\footnotesize
\begin{align}\label{eq:LinearNotAllOne}
\ta(LFactor^{(2)})\;=|_{s}\;&\frac{1}{64}\cdot(4v_1+2v_2+v_3)\cdot(4v_1+2v_4+v_5)\cdot(4v_1+v_3+v_5+2v_6).
\end{align}}

Finally, by the results we computed in the above examples, we apply the inverse of the linear transformation \eqref{eq:exlinear} to
$\ta(LFactor^{(1)})$ in \eqref{eq:LinearAllOne}, $\ta(NLFactor)$ in \eqref{eq:nonlinear} and $\ta(LFactor^{(2)})$ in \eqref{eq:LinearNotAllOne}, and we further  multiply the resulting polynomial by a proper integer. Then,
we get a generator polynomial $w$ of the ideal ${\mathcal I}({\mathcal V}(\vf)_{\infty})$:
{\footnotesize
\begin{align}
    w \;=\;&LFactor^{(1)}\cdot NLFactor\cdot LFactor^{(2)}\notag\\
      \;=\;&(u_1+u_2+u_4)\cdot(u_2+u_3+u_6)\notag\\
      &\cdot(u_4+u_5+u_6)\cdot(u_1+u_2+u_3+u_4+u_5+u_6)\notag\\
      &\cdot(u_1^2u_6-u_1u_3u_5-4u_2u_4u_6+u_2u_5^2+u_3^2u_4)\notag\\
      &\cdot(u_1+2u_2+u_3)\cdot(u_1+2u_4+u_5)\cdot(u_3+u_5+2u_6).\label{eq:exfactor}
\end{align}}Since the method is probabilistic, we have applied Algorithm \ref{alg:new} several times with different choices of  $\Vector{a}$ shown in Example \ref{eg:linearop}, and the results are the same as \eqref{eq:exfactor}, which is exactly \eqref{eq:symmetric} presented before.
\end{example}

\section{Implementation}\label{sec:implementation}

\subsection{Implementation}\label{subsec:implementation}
 {\tt Maple} code, testing models, and computational results are available online via:
{\footnotesize
	 \noindent\struc{\url{https://github.com/zhao-tq/nonproperness}}.
}
\begin{description}
	\item[Software] We implemented Algorithm \ref{alg:new} with {\tt Maple2023}, where we use the  {\tt FGb} command {\tt fgb\_gbasis}
for computing reduced Gr\"obner bases whenever we need to run Algorithm \ref{alg:standard}-Line \ref{algline:GRLGrobner}.
\end{description}
\begin{description}
\item[Hardware and System] We used a 2.3 GHz Intel Core i7 processor (16 GB of RAM) under Ubuntu 18.04.5.
\item[Testing Models] Testing Models are chosen from the literatures \citep*{DSS2009, SAB2005} (see Appendix \ref{sec:appendix}). 
\end{description}

\subsection{Computing nonproperness sets}\label{subsec:experiment}
 All testing models have been tested by the standard method (i.e. Algorithm \ref{alg:standard}), the interpolation method {\cite[Strategies 1--2]{Tang2017}} and the new method  (i.e., Algorithm \ref{alg:new}).
Table \ref{table:literatureOld} (Recall  Section \ref{sec:intro}) compares the timings of these methods.

\smallskip

\noindent
{\bf Instruction  for Table \ref{table:literatureOld}:}
\begin{description}
\item[(1)] For each testing model, the column ``MLD'' gives the  ML-degree $N$.
\item[(2)] In the column ``Timings--Standard'', we record the timings consumed by the standard method (Algorithm \ref{alg:standard}),
``OOM'' means ``out of memory".
\item[(3)] We record timings of the old interpolation methods {\cite[Strategies 1--2]{Tang2017}} in the columns ``Timings--Interpolation I'' and ``Timings--Interpolation II'', where
``OOT'' means  ``the computation does not finish in  $3$ days".
\item[(4)] We record the  timings consumed by the new method (Algorithm \ref{alg:new}) in the column ``Timings--New Method''.
\end{description}

\smallskip

\noindent
{\bf Conclusion from Table  \ref{table:literatureOld}:}
The new method (Algorithm \ref{alg:new}) indeed improves the efficiency significantly.
For smaller models with ML-degree less than $5$, standard method (Algorithm \ref{alg:standard}) finishes successfully within 20 seconds, interpolation method {\cite[page 354, Algorithm 2 (Strategy 2)]{Tang2017}} finishes within 500 seconds, while Algorithm \ref{alg:new} almost takes no time. For larger models with ML-degree greater than $5$, only interpolation method and Algorithm \ref{alg:new} can get results, and Algorithm \ref{alg:new} takes much less time.

\section{Discussion}\label{sec:summary}

In this work, we prove that the nonproperness set of a parametric polynomial system has some specialization properties
such that the nonproperness set can be computed by an interpolation method.
This method is significantly efficient for
likelihood-equation systems since  the nonproperness set of a likelihood-equation system with respect to the probability variables $p_i$'s is always a finite union of hyperplanes. By this special structure, we can interpolate the factors separately, so a great deal of computing time is saved.  One question is whether it is possible to  filter the ``redundant" factors prior to computation.  For example,
in \eqref{eq:symmetric}, only the nonlinear factor has positive solutions for $\vU$ in ${\mathbb R}_{>0}^6$, which means that none of the linear factors play a role in real root classification. If we can avoid computing factors that admit no positive solutions, then the computational time will be considerably reduced.

\appendix


\section{Testing Models in Table \ref{table:literatureOld}}\label{sec:appendix}

\footnotesize

\begin{model}\citep*[$3\times 3$ Zero-Diagonal Matrix]{EJ2014}\label{ex:l1}
 {\begin{align*}
\det \left[
\begin{array}{ccc}
    0&    p_{12}    & p_{13} \\
    p_{21} &    0 & p_{23}\\
    p_{31} &   p_{32} &  0
\end{array}
\right]=0,\;\;\;p_{12} + p_{13} + p_{21} + p_{23} + p_{31} + p_{32} =1
\end{align*}}
\end{model}

\begin{model}\citep*[Random Censoring Model]{DSS2009}\label{ex:l2}
\[2p_0p_1p_2 + p_1^2p_2 + p_1p_2^2 - p_0^2p_{12} + p_1p_2p_{12}=0, \;\;\; p_0 + p_1 + p_2 + p_{12} = 1\]
\end{model}

\begin{model}\citep*[Grassmannian of $2$-planes in ${\mathbb C}^4$]{SAB2005, EJ2014}\label{ex:l3}
\[p_{12}p_{34}-p_{13}p_{24}+p_{14}p_{23}=0, \;\;\; p_{12} + p_{13} + p_{14} + p_{23} + p_{24} + p_{34} =1\]
\end{model}

\begin{model}\citep*[$3\times 3$ Symmetric Matrix]{SAB2005}\label{ex:l4}
{\begin{equation*}
\det\left[
\begin{array}{cccc}
    2p_{2} &    p_{1}    & p_{3} \\
    p_{1} &    2p_{4}   & p_{5}\\
    p_{3} & p_{5} & 2p_{6}
\end{array}
\right]=0, \;\;\; p_{1} + p_{2} + p_{3} + p_{4} + p_{5} + p_{6} =1
\end{equation*}}
\end{model}

\begin{model}\citep*[$P_{comb}$, Example 15]{SAB2005}\label{ex:l5}
\[q_3-q_5,  \;\;\;q_2-q_5, \;\;\; q_4-q_6, \;\;\;q_5q_7-q_1q_8=0, \;\;\;
p_1 + p_2 + p_3 + p_4 + p_5 + p_6 + p_7 + p_8=1\]
where\\
$q_1 = p_1 + p_2 + p_3 + p_4 + p_5 + p_6 + p_7 + p_8$,
$q_2 = p_1 - p_2 + p_3 - p_4 + p_5 - p_6 + p_7 - p_8$,\\
$q_3 =  p_1 + p_2 - p_3 - p_4 + p_5 + p_6 - p_7 - p_8$,
$q_4 =  p_1 - p_2 - p_3 + p_4 + p_5 - p_6 - p_7 + p_8$,\\
$q_5 =  p_1 + p_2 + p_3 + p_4 - p_5 - p_6 - p_7 - p_8$,
$q_6 =  p_1 - p_2 + p_3 - p_4 - p_5 + p_6 - p_7 + p_8$,\\
$q_7 =  p_1 + p_2 - p_3 - p_4 - p_5 - p_6 + p_7 + p_8$,
$q_8 =  p_1 - p_2 - p_3 + p_4 - p_5 + p_6 + p_7 - p_8$.
\end{model}

\begin{model}\citep*[$3\times 3$ Matrix]{SAB2005}\label{ex:l6}
{\footnotesize \begin{equation*}
\det\left[
\begin{array}{cccc}
    p_{00} &    p_{01}    & p_{02} \\
    p_{10} &    p_{11}   & p_{12}\\
    p_{20} & p_{21} &  p_{22}
\end{array}
\right]=0, \;\;\; p_{00} + p_{01} + p_{02} + p_{10} + p_{11} + p_{12} +  p_{20} + p_{21} + p_{22} =1
\end{equation*}}
\end{model}

\begin{model}\citep*[Bernoulli $3\times 3$ Coin]{SAB2005}\label{ex:l7}
{\footnotesize \begin{equation*}
\det\left[
\begin{array}{cccc}
    12p_{0} &    3p_{1}    & 2p_{2} \\
    3p_{1} &    2p_{2}   & 3p_{3}\\
    2p_{2} & 3p_{3} & 12p_{4}
\end{array}
\right]=0,\;\;\; p_{0} + p_{1} + p_{2} + p_{3} + p_{4}  =1
\end{equation*}}
\end{model}


\begin{model}\citep*[Example 15]{SAB2005}\label{ex:l8}
\[q_2q_7-q_1q_8=0, \;\;\;q_3q_6-q_5q_4=0, \;\;\;
p_1 + p_2 + p_3 + p_4 + p_5 + p_6 + p_7 + p_8=1\]
where\\
$q_1 = p_1 + p_2 + p_3 + p_4 + p_5 + p_6 + p_7 + p_8$,
$q_2 = p_1 - p_2 + p_3 - p_4 + p_5 - p_6 + p_7 - p_8$,\\
$q_3 =  p_1 + p_2 - p_3 - p_4 + p_5 + p_6 - p_7 - p_8$,
$q_4 =  p_1 - p_2 - p_3 + p_4 + p_5 - p_6 - p_7 + p_8$,\\
$q_5 =  p_1 + p_2 + p_3 + p_4 - p_5 - p_6 - p_7 - p_8$,
$q_6 =  p_1 - p_2 + p_3 - p_4 - p_5 + p_6 - p_7 + p_8$,\\
$q_7 =  p_1 + p_2 - p_3 - p_4 - p_5 - p_6 + p_7 + p_8$,
$q_8 =  p_1 - p_2 - p_3 + p_4 - p_5 + p_6 + p_7 - p_8$.
\end{model}

\begin{model}\citep*[Juke-Cantor Model, Example 18]{SAB2005}\label{ex:l9}
\[q_{000}q_{111}^2 - q_{011} q_{101} q_{110}=0, \;\;\; p_{123} + p_{dis} + p_{12} + p_{13} + p_{23}=1\]
where\\
$q_{111} = p_{123} + \frac{p_{dis}}{3} - \frac{p_{12}}{3} - \frac{p_{13}}{3} - \frac{p_{23}}{3}$,
$q_{110} = p_{123} -  \frac{p_{dis}}{3} + p_{12} - \frac{p_{13}}{3} - \frac{p_{23}}{3}$,\\
$q_{101} = p_{123} -  \frac{p_{dis}}{3} - \frac{p_{12}}{3} + p_{13} - \frac{p_{23}}{3}$,
$q_{011} = p_{123} -  \frac{p_{dis}}{3} - \frac{p_{12}}{3} - \frac{p_{13}}{3} + p_{23}$,\\
$q_{000} = p_{123} + p_{dis} + p_{12} + p_{13} + p_{23}$.
\end{model}


\bibliographystyle{cas-model2-names}

\bibliography{cas-refs}

@ARTICLE{ABBGHHNRS2017,
  author  = {Am\'{e}ndola, C. and Bliss, N. and Burke, I. and Gibbons, C. R. and Helmer, M. and Hoşten, S. and Nash, E. D. and Rodriguez, J. I. and Smolkin, D.},
  title   = {The maximum likelihood degree of toric varieties},
  journal = {J. Symbolic Comput.},
  volume  = {92},
  year    = {2019},
  pages   = {222--242}
}

@ARTICLE{AGKMS2024,
  author  = {Am\'{e}ndola, C. and Gustafsson, L. and Kohn, K. and Marigliano, O. and Seigal, A.},
  title   = {Differential Equations for {G}aussian Statistical Models with Rational Maximum Likelihood Estimator},
  journal = {SIAM J. Appl. Algebra Geom.},
  volume  = {8},
  number  = {3},
  year    = {2024},
  pages   = {465--492}
}

@ARTICLE{BPR1996,
  author  = {Basu, S. and Pollack, R. and Roy, M. F.},
  title   = {On the combinatorial and algebraic complexity of quantifier elimination},
  journal = {J. ACM},
  volume  = {43},
  number  = {6},
  year    = {1996},
  pages   = {1002--1045}
}

@ARTICLE{BPRRoadmap,
  author  = {Basu, S. and Pollack, R. and Roy, M. F.},
  title   = {Computing roadmaps of semi-algebraic sets on a variety},
  journal = {J. Amer. Math. Soc.},
  volume  = {3},
  number  = {1},
  year    = {2000},
  pages   = {55--82}
}

@BOOK{BPRBook,
  author    = {Basu, S. and Pollack, R. and Roy, M. F.},
  title     = {Algorithms in Real Algebraic Geometry},
  edition   = {2nd ed.},
  publisher = {Springer-Verlag, Berlin},
  year      = {2006}
}

@ARTICLE{brown2003,
  author  = {Brown, C. W.},
  title   = {{QEPCAD} {B}: a program for computing with semi-algebraic sets using {CAD}s},
  journal = {SIGSAM bulletin},
  volume  = {37},
  number  = {4},
  year    = {2003},
  pages   = {97--108}
}

@ARTICLE{BHR2007,
  author  = {Buot, M.-L. G. and Hoşten, S. and Richards, D.},
  title   = {Counting and locating the solutions of polynomial systems of maximum likelihood equations, II: the Behrens-Fisher problem},
  journal = {Statist. Sinica},
  volume  = {17},
  number  = {4},
  year    = {2007},
  pages   = {1343--1354}
}

@ARTICLE{CHKS2006,
  author  = {Catanese, F. and Hoşten, S. and Khetan, A. and Sturmfels, B.},
  title   = {The maximum likelihood degree},
  journal = {Amer. J. Math.},
  volume  = {128},
  number  = {3},
  year    = {2006},
  pages   = {671--697}
}

@CONFERENCE{CDMMX2010,
  author    = {Chen, C. and Davemport, J. H. and May, J. P. and Maza, M. M. and Xia, B. and Xiao, R.},
  title     = {Triangular decomposition of semi-algebraic systems},
  booktitle = {In Proc. ISSAC'10},
  publisher = {ACM, New York},
  year      = {2010},
  pages     = {187--194}
}

@INCOLLECTION{collins1975,
  author    = {Collins, G. E.},
  title     = {Quantifier elimination for the elementary theory of real closed fields by cylindrical algebraic decomposition},
  booktitle = {Automata Theory and Formal Languages},
  volume    = {33},
  publisher = {Springer Berlin Heidelberg},
  year      = {1975},
  pages     = {134--183}
}

@ARTICLE{ch1991,
  author  = {Collins, G. E. and Hong, H.},
  title   = {Partial cylindrical algebraic decomposition for quantifier elimination},
  journal = {J. Symbolic Comput.},
  volume  = {12},
  number  = {3},
  year    = {1991},
  pages   = {299--328}
}

@BOOK{CLO2015,
  author    = {Cox, D. A. and Little, J. and Oshea, D.},
  title     = {Ideals, Varieties, and Algorithms: An Introduction to Computational Algebraic Geometry and Commutative Algebra},
  edition   = {4th ed.},
  publisher = {Springer, Cham},
  year      = {2015}
}

@ARTICLE{DS1997,
  author  = {Dolzmann, A. and Sturm, T.},
  title   = {Redlog: Computer algebra meets computer logic},
  journal = {ACM SIGSAM Bull.},
  volume  = {31},
  number  = {2},
  year    = {1997},
  pages   = {2--9}
}

@BOOK{DSS2009,
  author    = {Drton, M. and Sturmfels, B. and Sullivant, S.},
  title     = {Lectures on Algebraic Statistics},
  publisher = {Birkhäuser Verlag, Basel},
  year      = {2009}
}

@CONFERENCE{FGb,
  author    = {Faugère, J. C.},
  title     = {FGb: a library for computing Gröbner bases},
  booktitle = {Mathematical Software -- ICMS},
  series    = {Lecture Notes in Comput. Sci.},
  volume    = {6327},
  publisher = {Springer Berlin Heidelberg},
  year      = {2010},
  pages     = {84--87}
}

@BOOK{FJ2008,
  author    = {Fried, M. D. and Jarden, M.},
  title     = {Field Arithmetic},
  edition   = {3rd ed.},
  publisher = {Springer},
  year      = {2008}
}

@ARTICLE{grig88,
  author  = {Grigoriev, D.},
  title   = {Complexity of deciding {T}arski algebra},
  journal = {J. Symbolic Comput.},
  volume  = {5},
  number  = {1--2},
  year    = {1988},
  pages   = {65--108}
}

@ARTICLE{GDP2012,
  author  = {Gross, E. and Drton, M. and Petrović, S.},
  title   = {Maximum likelihood degree of variance component models},
  journal = {Electron. J. Stat.},
  volume  = {6},
  year    = {2012},
  pages   = {993--1016}
}

@CONFERENCE{EJ2014,
  author    = {Gross, E. and Rodriguez, J. I.},
  title     = {Maximum likelihood geometry in the presence of data zeros},
  booktitle = {In Proc. ISSAC'14},
  publisher = {ACM, New York},
  year      = {2014},
  pages     = {232--239}
}

@ARTICLE{HRS,
  author  = {Hauenstein, J. and Rodriguez, J. I. and Sturmfels, B.},
  title   = {Maximum likelihood for matrices with rank constraints},
  journal = {J. Algebr. Stat.},
  volume  = {5},
  number  = {1},
  year    = {2014},
  pages   = {18--38}
}

@ARTICLE{SAB2005,
  author  = {Hoşten, S. and Khetan, A. and Sturmfels, B.},
  title   = {Solving the likelihood equations},
  journal = {Found. Comput. Math.},
  volume  = {5},
  number  = {4},
  year    = {2005},
  pages   = {389--407}
}

@ARTICLE{DV2005,
  author  = {Lazard, D. and Rouillier, F.},
  title   = {Solving parametric polynomial systems},
  journal = {J. Symbolic Comput.},
  volume  = {42},
  number  = {6},
  year    = {2007},
  pages   = {636--667}
}

@ARTICLE{renegar1992-1,
  author  = {Renegar, J.},
  title   = {On the computational complexity and geometry of the first-order theory of the reals, Part {I}},
  journal = {J. Symbolic Comput.},
  volume  = {13},
  number  = {3},
  year    = {1992},
  pages   = {255--299}
}

@ARTICLE{renegar1992-2,
  author  = {Renegar, J.},
  title   = {On the computational complexity and geometry of the first-order theory of the reals, Part {II}},
  journal = {J. Symbolic Comput.},
  volume  = {13},
  number  = {3},
  year    = {1992},
  pages   = {301--327}
}

@ARTICLE{renegar1992-3,
  author  = {Renegar, J.},
  title   = {On the computational complexity and geometry of the first-order theory of the reals, Part {III}},
  journal = {J. Symbolic Comput.},
  volume  = {13},
  number  = {3},
  year    = {1992},
  pages   = {329--352}
}

@CONFERENCE{RT2015,
  author    = {Rodriguez, J. I. and Tang, X.},
  title     = {Data-discriminants of likelihood equations},
  booktitle = {In Proc. ISSAC'15},
  publisher = {ACM, New York},
  year      = {2015},
  pages     = {307--314}
}

@ARTICLE{Tang2017,
  author  = {Rodriguez, J. I. and Tang, X.},
  title   = {A probabilistic algorithm for computing data-discriminants of likelihood equations},
  journal = {J. Symbolic Comput.},
  volume  = {83},
  year    = {2017},
  pages   = {342--364}
}

@CONFERENCE{SS2003,
  author    = {Safey El Din, M. and Schost, E.},
  title     = {Polar varieties and computation of one point in each connected component of a smooth algebraic set},
  booktitle = {In Proc. ISSAC'03},
  publisher = {ACM, New York},
  year      = {2003},
  pages     = {224--231}
}

@ARTICLE{SS2004,
  author  = {Safey EI Din, M. and Schost, E.},
  title   = {Properness defects of projections and computation of at least one point in each connected component of a real algebraic set},
  journal = {Discrete Comput. Geom.},
  volume  = {32},
  number  = {3},
  year    = {2004},
  pages   = {417--430}
}

@CONFERENCE{TWZ2019,
  author    = {Tang, X. and Wolff, T. and Zhao, R.},
  title     = {A new method for computing elimination ideals of likelihood equations},
  booktitle = {In Proc. ISSAC'19},
  publisher = {ACM, New York},
  year      = {2019},
  pages     = {339--346}
}

@BOOK{tarski1951,
  author    = {Tarski, A.},
  title     = {A Decision Method for Elementary Algebra and Geometry},
  edition   = {2nd ed.},
  publisher = {University of California Press, Berkeley and Los Angeles, Calif.},
  year      = {1951}
}

@CONFERENCE{BP2001,  author    = {Yang, L. and Xia, B.},  title     = {Real solution classifications of a class of parametric semi-algebraic systems},  booktitle = {In Proc. the A3L 2005 on Algorithmic Algebra and Logic},  publisher = {Herstellung und Verlag, Norderstedt},  year      = {2005},  pages     = {281--289}}


\end{document}